\def\cup@reference@code{%
  \RequirePackage[style=numeric,sorting=none,backend=biber,natbib]{biblatex}%
  \renewcommand*{\bibfont}{\footnotesize}%
}
\newif\ifcambridge
\setlist{nosep}
\newenvironment{widefig}[1][t]{\begin{figure}[#1]}{\end{figure}}
\newtheorem{theorem}{Theorem}[section]
\newtheorem{lemma}[theorem]{Lemma}
\newtheorem{proposition}[theorem]{Proposition}
\newtheorem{corollary}[theorem]{Corollary}
\theoremstyle{definition}
\newtheorem{definition}[theorem]{Definition}
\newtheorem{example}[theorem]{Example}
\theoremstyle{remark}
\newtheorem{remark}[theorem]{Remark}
\newcommand{\tr}{\operatorname{tr}}
\newcommand{\diag}{\operatorname{diag}}
\newcommand{\GM}{\operatorname{GM}}
\newcommand{\AM}{\operatorname{AM}}
\newcommand{\E}{\mathbb{E}}
\newcommand{\Var}{\operatorname{Var}}
\newcommand{\R}{\mathbb{R}}
\DeclareMathOperator*{\argmin}{arg\,min}
\title{What Trace Powers Reveal About Log-Determinants:\\ Closed-Form Estimators, Certificates, and Failure Modes}
\author{Piyush Sao}
\affiliation{Oak Ridge National Laboratory, Oak Ridge, TN 37831, USA}
\email{saopk@ornl.gov}
\keywords{log-determinant; trace estimation; Lagrange interpolation; Newton inequalities; moment problem}
\def\cup@journal@name{What Trace Powers Reveal About Log-Determinants?}
\def\cup@manuscript{preprint}
\patchcmd{\@maketitle}{\vspace*{\baselineskip}}{\vspace*{0.2\baselineskip}}{}{}
\patchcmd{\@maketitle}{(\cup@year), {\volumefont\cup@vol}, \thepage--\pageref{LastPage}}{}{}{}
\renewcommand*\cup@maketitle@extras@hook{%
  \begingroup
  \renewcommand{\thefootnote}{\fnsymbol{footnote}}%
  \footnotetext{This manuscript has been authored by UT-Battelle, LLC under Contract No. DE-AC05-00OR22725 with the U.S. Department of Energy. The publisher, by accepting the article for publication, acknowledges that the United States Government retains a non-exclusive, paid-up, irrevocable, world-wide license to publish or reproduce the published form of this manuscript, or allow others to do so, for United States Government purposes. The Department of Energy will provide public access to these results of federally sponsored research in accordance with the DOE Public Access Plan (\url{http://energy.gov/downloads/doe-public-access-plan}).}%
  \endgroup
}
\begin{document}

\begin{abstract}

Computing $\log\det(A)$ for large symmetric positive definite matrices arises in Gaussian process inference and Bayesian model comparison. Standard scalable methods use many matrix--vector products and polynomial approximations to $\tr(\log A)$. We study a different information model: access to a few trace powers $p_k = \tr(A^k)$, natural when matrix powers are computed or repeated squaring is feasible.

Classical moment-based approximations Taylor-expand $\log(\lambda)$ around the arithmetic mean. This requires $|\lambda - \AM| < \AM$ and diverges when $\kappa > 4$. We work instead with the moment-generating function $M(t) = \E[X^t]$ for normalized eigenvalues $X = \lambda/\AM$. Since $M'(0) = \E[\log X]$, the log-determinant becomes $\log\det(A) = n(\log \AM + M'(0))$---the problem reduces to estimating a derivative at $t = 0$. Trace powers give $M(k)$ at positive integers, but interpolating $M(t)$ directly is ill-conditioned due to exponential growth. The transform $K(t) = \log M(t)$ compresses this range. Normalization by $\AM$ ensures $K(0) = K(1) = 0$. With these anchors fixed, we interpolate $K$ through $m+1$ consecutive integers and differentiate to estimate $K'(0)$. However, this local interpolation cannot capture arbitrary spectral features.

We prove a fundamental limit: no continuous estimator using finitely many positive moments can be uniformly accurate over unbounded conditioning. Positive moments downweight the spectral tail; $K'(0) = \E[\log X]$ is tail-sensitive. This motivates guaranteed bounds. From the same traces we derive upper bounds on $(\det A)^{1/n}$. Given a spectral floor $r \leq \lambda_{\min}$, we obtain moment-constrained lower bounds, yielding a provable interval for $\log\det(A)$. A gap diagnostic indicates when to trust the point estimate and when to report bounds. All estimators and bounds cost $O(m)$, independent of $n$. For $m \in \{4, \ldots, 8\}$, this is effectively constant time.

\end{abstract}


\section{Introduction}
\label{sec:intro}


Estimating $\log\det(A)$ for large symmetric positive definite (SPD) matrices is central to Gaussian process inference~\citep{rasmussen2006}, Bayesian model comparison~\citep{mackay1992,kass1995}, and uncertainty quantification~\citep{kennedy2001,stuart2010}. Standard methods combine stochastic trace estimation~\citep{hutchinson1990,avron2011} with polynomial~\citep{han2015} or Lanczos-based~\citep{ubaru2017,saibaba2017} approximations of $\tr(\log A)$. All require repeated matrix-vector products.

We study a different information model with access to a small number of \emph{trace powers}.
Given only
\begin{equation}
p_k = \tr(A^k), \qquad k=1,2,\dots,m,
\end{equation}
or unbiased estimates thereof, what can we infer about $\log\det(A)$? Such data arise naturally in two settings. First, workflows that form matrix powers can obtain additional traces cheaply: given $A^p$ and $A^q$, computing $\tr(A^{p+q})$ from the diagonal of $A^p A^q$ costs only $O(n^2)$ time~\citep{higham2008}. Second, low-precision hardware (tensor cores, TPUs) makes repeated squaring $A \to A^2 \to A^4 \to \cdots$ attractive~\citep{haidar2018}. This motivates our question:

\emph{\textbf{What can (and cannot) be inferred about $\log\det(A)$ from finitely many positive trace powers?}}

\begin{widefig}[t]
\centering
\includegraphics[width=\textwidth]{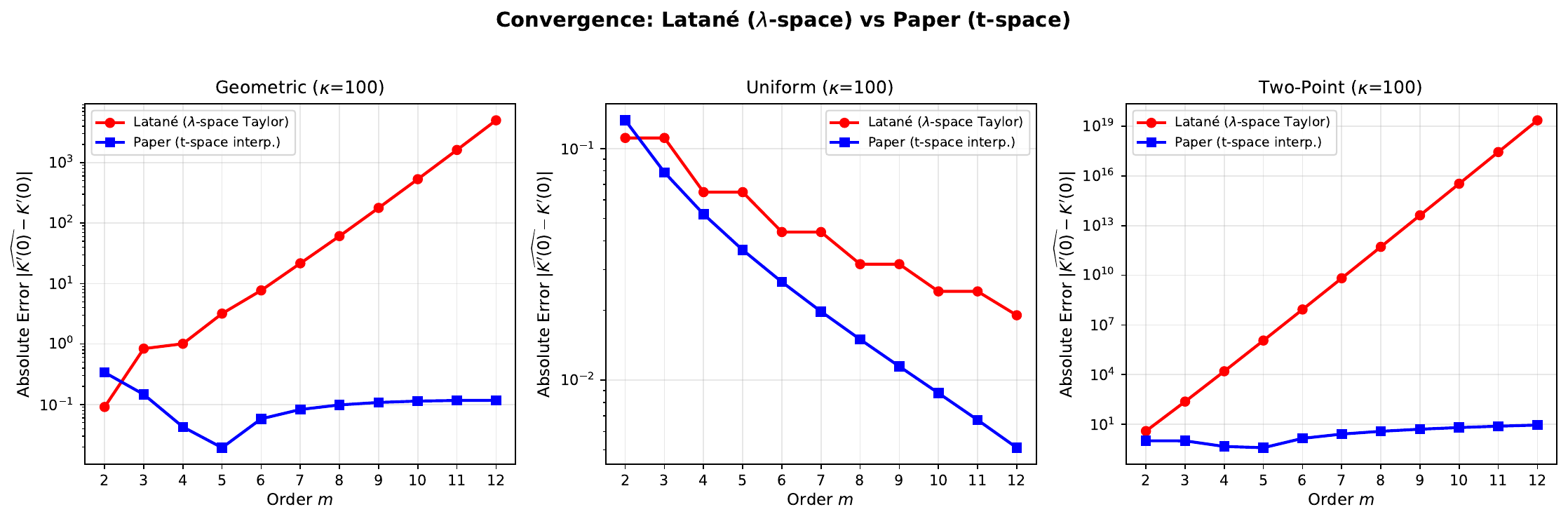}
\caption{Absolute error in estimating $K'(0) = \log(\GM/\AM)$, where $\GM = (\det A)^{1/n}$ is the geometric mean and $\AM = \tr(A)/n$ the arithmetic mean, across three spectrum types ($n=1024$, condition number $\kappa=100$). Latan\'e's $\lambda$-space Taylor expansion (red) diverges exponentially with order. Our exponent-space interpolation (blue) achieves minimum error near $m \in \{4,5,6\}$ and remains stable at higher orders. For the geometric spectrum, our method achieves error $\approx 0.02$ at $m=5$ while Latan\'e's error exceeds $10^3$ at $m=12$.}
\label{fig:latane-comparison}
\end{widefig}


The classical approach expands $\log(\lambda)$ around the arithmetic mean as a Taylor series in central moments---the volatility-drag formula~\citep{latane1959,jean1983}. This converges only when $|\lambda - \AM| < \AM$ for all eigenvalues, a condition violated when $\kappa > 4$. Adding more terms makes the approximation worse, not better (Figure~\ref{fig:latane-comparison}). Why does this natural approach fail so badly?

\subsection{The Information Gap}

The log-determinant $\log\det(A) = \sum_i \log\lambda_i$ depends critically on small eigenvalues: if $\lambda_{\min} \to 0$, the sum diverges to $-\infty$. Trace powers $\tr(A^k) = \sum_i \lambda_i^k$ suppress them: as $\lambda_{\min} \to 0$, its contribution vanishes extremely fast (as $\lambda_{\min}^k \to 0$ for $k \ge 1$). This creates an \emph{information gap}: our data systematically suppress exactly the eigenvalues on which $\log\det(A)$ depends most. The classical volatility-drag formula cannot bridge this gap---it is the Taylor expansion whose $\kappa > 4$ divergence we noted above.

How accurately can we estimate the geometric mean $\GM = (\det A)^{1/n}$ from trace powers? We prove that no finite-trace estimator can uniformly approximate $\GM$ over all SPD spectra; the achievable accuracy depends on the unknown spectrum itself. This impossibility result shifts the goal: \emph{when can we trust an estimator?} Without a universal method, we need to know when estimators succeed, when they fail, and how to detect failure from the data.


\subsection{Our Approach: Exponent-Space Interpolation}

Rather than fight the information gap in $\lambda$-space, we change coordinates and treat $t$ as a continuous exponent. Let $M(t)=\E[X^t]$ for an eigenvalue random variable $X$. With $X=\lambda/\AM$, we have
\begin{equation}
\label{eq:logdet-identity}
\log\det(A) = n\bigl(\log(\AM) + M'(0)\bigr).
\end{equation}
Therefore, given $M(k)$ for $k=0,\dots,m$, we seek to estimate $M'(0)$ from integer samples $M(k)$.
However, directly interpolating $M(t)$ is ill-conditioned: $M(k)$ spans orders of magnitude as $k$ grows, and further we show that in presence of noisy trace powers (floating point and Hutchinson), numerical differentiation amplifies impact of noise on $M'(0)$.

To address the numerical instability, we show that any analytic transform $f(M(t))$ can be used for estimating $M'(0)$ (Theorem~\ref{thm:analytic-transform}); and the choice of $f$ can alleviate the instability and noise sensitivity, and depending on the eigenspectra and noise model, the conditioning of the problem. In particular, under multiplicative noise, $\log$ is the unique variance-stabilizing transform (up to affine scaling), so we work with $K(t)=\log M(t)$. Since $M(0)=1$, we have $K'(0)=M'(0)$.
Trace powers give $K(k)$ at positive integers; the problem becomes: \emph{estimate $K'(0)$ from $K(0), K(1), \ldots, K(m)$}.

\paragraph{Why exponent space works.}
Trace powers give samples at integer $t$, and $K(t)=\log M(t)$ is analytic near $t=0$, so $K'(0)$ is determined by nearby values of $K$. This turns estimation into a numerical differentiation problem in exponent space rather than a divergent Taylor expansion in $\lambda$-space. Differentiating the degree-$m$ Lagrange interpolant at $t=0$ yields simple closed-form weights, producing the distinct family of $k_{0:m}$ estimators (Section~\ref{sec:estimators}).


\subsection{Contributions}

We provide a theoretical and practical framework for
estimating the log-determinant of a
positive-definite matrix using only traces of its
powers. While trace-based estimators can be
computationally attractive, they are not universally
reliable. We therefore ask: under what spectral
conditions can we accurately reconstruct the
log-determinant from a finite number of trace
powers? When will such estimators fail
catastrophically? How can we detect failure from the data? We address these challenges with
six key contributions.

\begin{enumerate}
\item \textbf{How to estimate log-determinant from trace powers?}
We provide a family of estimators (Section~\ref{sec:estimators}) with closed-form weights $L'_j(0) = (-1)^{j-1} \binom{m}{j}/j$ from Lagrange interpolation (Theorem~\ref{thm:weights}).

\item \textbf{When is estimation exact?}
For lognormal spectra, where $K(t) = \mu t + \sigma^2 t^2/2$ is quadratic (Theorem~\ref{thm:lognormal}).

\item \textbf{When must estimation fail?}
No universal estimator exists: moments $\tr(A^j)$ stabilize while $\log\det(A) \to -\infty$ (Theorem~\ref{thm:impossibility}).

\item \textbf{Can we predict failure?}
Yes: Taylor radius $R = \pi/\log\kappa$ determines breakdown (Proposition~\ref{prop:taylorradius}); $m > R$ guarantees divergence.

\item \textbf{Can we certify quality?}
Rigorous bounds $L_k(r) \leq \GM \leq U_k$ from traces alone via $(k+1)$-atom optimization, with complexity independent of matrix dimension (Theorems~\ref{thm:certified-interval},~\ref{thm:compact-support}).

\item \textbf{How does noise affect accuracy?}
Variance grows as $\alpha_m \sim 2^m / m^{5/4}$ (Theorem~\ref{thm:variance}, Proposition~\ref{prop:ell2}); use $m = 4$ for Hutchinson noise.
\end{enumerate}


\paragraph{Preview.}
The remainder of this paper develops these contributions and shows that the information gap, while fundamental, admits practical workarounds for the smooth spectra common in applications. Section~\ref{sec:conclusion} distills deployment recommendations.

\paragraph{Scope.}
We use polynomial interpolation throughout---the natural first step, yielding closed-form weights and well-understood error bounds. The Taylor radius limits and sensitivity to interpolation order $m$ are consequences of this choice; Pad\'e approximants or rational interpolation may extend the convergence region but sacrifice simplicity. Experiments use exact eigenvalues to isolate interpolation bias; Section~\ref{sec:noisy-traces} provides the theory for noisy traces. The bounds are deterministic certificates; probabilistic confidence intervals remain future work.

\paragraph{Organization.}
Section~\ref{sec:related} reviews related work. Sections~\ref{sec:framework}--\ref{sec:exactness} develop the CGF framework and $k_{0:m}$ estimators. Section~\ref{sec:impossibility} establishes fundamental limitations. Section~\ref{sec:bounds} presents guaranteed bounds. Section~\ref{sec:noisy-traces} analyzes noise amplification and order selection. Section~\ref{sec:experiments} validates empirically. Section~\ref{sec:conclusion} gives practical recommendations.


\section{Background and Related Work}
\label{sec:related}

Estimating log-determinants from trace powers connects classical moment theory, polynomial interpolation, and matrix computation.
We survey these connections before developing our framework in Section~\ref{sec:framework}.

\subsection{Problem Setup and Notation}
\label{sec:related-setup}

Let $A \in \R^{n \times n}$ be symmetric positive definite.
Its eigenvalues $\lambda_1 \leq \lambda_2 \leq \cdots \leq \lambda_n$ have condition number $\kappa = \lambda_n/\lambda_1$.

We have the first $m$ trace powers:
\[
p_k = \tr(A^k) = \sum_{i=1}^n \lambda_i^k, \qquad k = 1, 2, \ldots, m,
\]
and we want to estimate $\log\det(A) = \sum_{i=1}^n \log\lambda_i$.

Split the log-determinant into two parts.
Define the arithmetic and geometric means of the eigenvalues:
\begin{align*}
\AM &= p_1/n, \\
\GM &= (\det A)^{1/n}.
\end{align*}
Then
\begin{equation}
\label{eq:logdet-decomp}
\log\det(A) = n\log\AM + n\log(\GM/\AM).
\end{equation}
The first term follows directly from $p_1$.
The second term measures spectral deviation from uniformity.

\subsection{Elementary Symmetric Polynomials}
\label{sec:related-esp}

The GM/AM ratio in~\eqref{eq:logdet-decomp} involves a product of normalized eigenvalues. Elementary symmetric polynomials provide the classical machinery for relating power sums to such products.

Normalize the eigenvalues as $x_i = \lambda_i/\AM$ and define power sums $q_j = \sum_i x_i^j$.
Newton's identities relate these to elementary symmetric polynomials $e_k$.
For instance, $e_1 = n$ and $e_2 = (n^2 - q_2)/2$.
Continuing, $e_3 = (e_2 n - e_1 q_2 + q_3)/3$ and $e_4 = (e_3 n - e_2 q_2 + e_1 q_3 - q_4)/4$.
The normalized means $E_k = e_k/\binom{n}{k}$ appear in the bounds below.

\paragraph{Three-variable example.}
For positive numbers $a, b, c$, the elementary symmetric polynomials are $e_1 = a + b + c$, $e_2 = ab + bc + ca$, and $e_3 = abc$.
Newton's identities give $e_1 = q_1$, $e_2 = \tfrac{1}{2}(q_1^2 - q_2)$, $e_3 = \tfrac{1}{3}(e_2 q_1 - e_1 q_2 + q_3)$.
Thus $(q_1, q_2, q_3)$ determines $(e_1, e_2, e_3)$ and hence $abc$.

\paragraph{Large-$n$ problem.}
Computing $\det(A) = e_n$ via Newton's identities requires all $n$ power sums.
With only $m \ll n$ traces, exact computation is impossible.
What can we infer about $e_n$ (or $\log e_n$) from just $(q_1, \ldots, q_m)$?
Our exponent-space framework addresses this by estimating the limit of these identities as $n \to \infty$ without computing $e_n$ explicitly.

\subsection{Classical Moment Problems}
\label{sec:related-moments}

Our problem can be mapped to the classical moment problem by treating the eigenvalues as a discrete probability distribution. In this view, the trace powers $p_k = \sum \lambda_i^k$ correspond to the $k$-th moments $\E[X^k]$, and the log-determinant corresponds to $\E[\log X]$.

Classical results establish that a finite set of moments does not uniquely determine the underlying distribution~\citep{shohat1943,akhiezer1965}. Consequently, many spectra can share the first $m$ traces yet yield different log-determinants. This theoretical ambiguity underpins our impossibility result in Section~\ref{sec:impossibility}, which proves no estimator is uniform over all spectra, and motivates the moment-constrained bounds derived in Section~\ref{sec:bounds}.

\subsection{Classical Bounds on Geometric Means}
\label{sec:related-classical-bounds}

Given only finitely many moments, what can we guarantee about the geometric mean?
Three classical inequalities provide progressively tighter bounds.
\begin{itemize}
\item \textbf{Maclaurin's inequality.}
The AM--GM inequality generalizes to~\citep{hardy1952}:
\begin{equation}
\label{eq:maclaurin}
E_1 \geq E_2^{1/2} \geq E_3^{1/3} \geq \cdots \geq E_n^{1/n} = \GM.
\end{equation}
Thus $\GM \leq E_k^{1/k}$ using only $k$ power sums.

\item \textbf{Newton's inequalities.}
The sequence $(E_k)$ is log-concave~\citep{hardy1952,macdonald1995}:
\begin{equation}
\label{eq:newton-ineq}
E_k^2 \geq E_{k-1} E_{k+1}, \qquad k = 1, \ldots, n-1.
\end{equation}
Consequently, the slopes $s_k = \log E_k - \log E_{k-1}$ are nonincreasing.

\item \textbf{Tight mean-variance bound.}
Among spectra with fixed $(M_1, M_2)$, \citet{rodin2017} proved the geometric mean is maximized by a two-point distribution:
\begin{equation}
\label{eq:rodin}
\GM \leq (1 - d)^{(n-1)/n} (1 + (n-1)d)^{1/n}, \quad d = \sqrt{(M_2 - 1)/(n-1)}.
\end{equation}
This bound is sharp: equality holds for two-point spectra.

\end{itemize}

\subsection{Why Direct Moment Expansions Fail}
\label{sec:related-latane}

Rather than bounding the geometric mean, one might try to estimate it directly via Taylor expansion.
The volatility-drag formula expands $\log(\lambda)$ around $\AM$~\citep{latane1959,kelly1956}:
\begin{equation}
\label{eq:latane}
\E[\log X] \approx -\frac{\sigma^2}{2\AM^2} + \frac{\mu_3}{3\AM^3} - \frac{\mu_4}{4\AM^4} + \cdots
\end{equation}
with $\mu_k = \E[(\lambda - \AM)^k]$.
Our $k_{0:2}$ estimator (Section~\ref{sec:estimators}) recovers the leading term; higher orders give corrections.

For normalized eigenvalues with mean 1, this series converges only when $\lambda \in (0,2)$.
When $\kappa > 4$, some eigenvalues approach 0, and convergence becomes impractically slow.
Higher-moment extensions~\citep{jean1983} do not fix this.
Figure~\ref{fig:latane-comparison} shows the degradation for ill-conditioned spectra.


\subsection{Matrix-Vector Methods}
\label{sec:related-matvec}

A different computational paradigm avoids series expansions entirely by working with matrix-vector products.
Large-scale log-determinant estimation typically combines stochastic trace estimation with polynomial or rational approximations of $\log A$.
Hutchinson's estimator~\citep{hutchinson1990} approximates $\tr(B)$ by
\[
\tr(B) \approx \frac{1}{s} \sum_{i=1}^s z_i^\top B z_i
\]
with random vectors $z_i$ satisfying $\E[zz^\top] = I$.
Hutch++~\citep{meyer2021} improves sample complexity via low-rank deflation.
Since $\log\det(A) = \tr(\log A)$, the task reduces to estimating $\tr(f(A))$ with $f = \log$.

Two schemes approximate $\log A$:
\begin{itemize}
\item \textbf{Chebyshev methods}~\citep{han2015}: polynomial on $[\lambda_{\min}, \lambda_{\max}]$ needing $O(\sqrt{\kappa}\log(1/\epsilon))$ terms.
\item \textbf{Lanczos quadrature}~\citep{ubaru2017,saibaba2017}: stochastic Lanczos quadrature (SLQ) treats $\tr(f(A))$ as a Riemann-Stieltjes integral.
\end{itemize}

\paragraph{Data model.}
These methods require matrix-vector products $Av$ or shifted solves $(A - \sigma I)^{-1}v$.
We assume only traces $p_1, \ldots, p_m$ are available; no matrix-vector operations.

\begin{center}
\begin{tabular}{@{}lll@{}}
\toprule
& \textbf{Matrix-vector methods} & \textbf{Trace-power methods} \\
\midrule
Data & Products $Av$, solves & Traces $p_1, \ldots, p_m$ \\
Cost per query & $O(\text{nnz}(A))$ & $O(m)$ (precomputed traces) \\
Iterations & $O(\sqrt{\kappa})$ typical & $m \in \{4, \ldots, 8\}$ fixed \\
\bottomrule
\end{tabular}
\end{center}

\paragraph{Certification.}
If SLQ gives $\widehat{\log\det(A)}$ and trace powers are available, the interval $[L_k(r), U_k]$ from Section~\ref{sec:bounds} provides an independent certificate.
The extra cost is independent of $n$: closed-form bounds are $O(m)$, while optimization-based bounds solve a small ($k+1$-atom) problem.

\subsection{Information-Theoretic Motivation}
\label{sec:related-info}

Beyond matrix computation, log-determinants arise naturally in information theory and portfolio optimization.
The target $\E[\log \lambda] = \log(\GM)$ is the optimal growth rate for long-term wealth~\citep{kelly1956}.
\citet{breiman1961} proved that maximizing $\E[\log(\text{wealth})]$ asymptotically beats any other strategy.
In matrix terms, $\log\det(A) = n \cdot \E[\log \lambda]$ represents the total growth potential of eigenvalues as multiplicative factors.

\subsection{Open Questions}
\label{sec:related-gaps}

\begin{description}
\item[Trace-power regime (Sections~\ref{sec:framework}--\ref{sec:estimators}).]
Newton's identities need $n$ traces for exact $\det(A)$.
Can $p_1, \ldots, p_4$ give useful estimates when $m \ll n$?

\item[Failure detection (Section~\ref{sec:impossibility}).]
Latané's expansion converges slowly for $\kappa > 4$, but no test detects ill-conditioned spectra from observed traces.

\item[Fundamental limits (Section~\ref{sec:impossibility}).]
Moment theory proves non-uniqueness.
The achievable accuracy as a function of $m$ and spectral shape is unknown.

\item[Tighter bounds (Section~\ref{sec:bounds}).]
Maclaurin and Rodin bounds hold for any spectrum.
Can higher moments yield tighter guaranteed intervals, and how should point estimates combine with them?

\item[Noise sensitivity (Section~\ref{sec:noisy-traces}).]
Hutchinson's estimator introduces multiplicative noise in trace powers.
How does this noise propagate through the interpolation weights, and what order $m$ balances bias against variance?
\end{description}

\section{Cumulant Interpolation Framework}
\label{sec:framework}

Section~\ref{sec:related-latane} showed that $\lambda$-space expansions
diverge for $\kappa > 4$; adding more terms cannot rescue the
volatility-drag approach. The challenge is to use trace powers without falling into the same trap: trace powers $p_k = \tr(A^k) = \sum_i \lambda_i^k$ are dominated by large eigenvalues as $k$ grows, yet $\log\det A = \sum_i \log\lambda_i$ depends critically on the smallest eigenvalues. Any approach that treats traces at face value inherits this imbalance.

We work instead in exponent space: treat $t$ as a continuous exponent in
$M(t) = \E[X^t]$ and estimate $M'(0) = \E[\log X]$ by interpolating from
integer samples. Because $M(t)$ spans orders of magnitude, we transform to
$K(t) = \log M(t)$, which compresses the range and stabilizes interpolation.
This section develops the framework and formalizes the transform choice.
We first show that the choice of analytic transform is first-order equivalent: since all transforms share the same derivative at zero up to scaling, they all target the same log-sum quantity.

\begin{theorem}[Analytic transform reduction]
\label{thm:analytic-transform}
Assume $X>0$, $M(t)=\E[X^t]$ is analytic near $t=0$ (true for finite SPD spectra),
and $f$ is analytic on a neighborhood containing $\{M(0), M(1), \ldots, M(m)\}$
with $f'(1) \neq 0$. Define $G(t) = f(M(t))$. Then $G$ is analytic near $t=0$ and
\[
G'(0) = f'(1)\E[\log X].
\]
Consequently $\log\det A = n\bigl(\log\AM + G'(0)/f'(1)\bigr)$.
\end{theorem}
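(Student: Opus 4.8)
The plan is to compute the Taylor expansion of $G(t) = f(M(t))$ at $t = 0$ to first order and read off the coefficient. First I would establish analyticity of $G$: since $M$ is analytic near $t=0$ with $M(0) = \E[X^0] = 1$, and $f$ is analytic on a neighborhood of $M(0) = 1$ (given that $1 \in \{M(0), \ldots, M(m)\}$), the composition $G = f \circ M$ is analytic near $t = 0$. This is a routine application of the fact that compositions of analytic functions are analytic.

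Next I would differentiate. By the chain rule, $G'(t) = f'(M(t)) M'(t)$, so $G'(0) = f'(M(0)) M'(0) = f'(1) M'(0)$. The key remaining fact is $M'(0) = \E[\log X]$. To see this, note $M(t) = \E[X^t] = \E[e^{t \log X}]$; since the spectrum is finite, the expectation is a finite sum $\frac{1}{n}\sum_i e^{t \log \lambda_i}$, which is entire in $t$, and term-by-term differentiation gives $M'(t) = \frac{1}{n}\sum_i (\log \lambda_i) e^{t \log \lambda_i}$, hence $M'(0) = \frac{1}{n}\sum_i \log \lambda_i = \E[\log X]$. Therefore $G'(0) = f'(1)\E[\log X]$. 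The final identity follows by combining with the decomposition \eqref{eq:logdet-decomp}: since $\log\det A = n\log\AM + n\E[\log X]$ (because $\log(\GM/\AM) = \E[\log(\lambda/\AM)] = \E[\log X]$) and $f'(1) \neq 0$, we may solve $\E[\log X] = G'(0)/f'(1)$ and substitute.

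There is really no hard part here — the statement is essentially a restatement of the chain rule combined with the observation that differentiating a moment-generating-type function at the origin produces a log-moment. The only point requiring mild care is the interchange of differentiation and expectation needed for $M'(0) = \E[\log X]$, but this is trivial for finite SPD spectra since the expectation is a finite sum of entire functions; no dominated-convergence machinery is needed. I would state this explicitly to justify the parenthetical ``true for finite SPD spectra'' and note that the argument extends to any distribution of $X > 0$ for which $M(t)$ is analytic near $t=0$, since analyticity there already guarantees the requisite moment conditions. The role of the hypothesis $f'(1) \neq 0$ is solely to make the division $G'(0)/f'(1)$ legitimate, so that $\E[\log X]$ is recoverable from $G'(0)$; without it, $G'(0)$ would carry no information about the log-determinant.
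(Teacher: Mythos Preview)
Your proposal is correct and follows the same approach as the paper: the paper's proof is a single line invoking the chain rule with $M(0)=1$ and $M'(0)=\E[\log X]$, and you have simply filled in the supporting details (analyticity of the composition, the term-by-term differentiation justifying $M'(0)=\E[\log X]$ for finite spectra, and the role of $f'(1)\neq 0$). One small notational slip: when you write the finite sum as $\frac{1}{n}\sum_i e^{t\log\lambda_i}$ you are implicitly taking $X=\lambda$, whereas the ``Consequently'' clause requires $X=\lambda/\AM$; the argument is identical either way, but for consistency you should use $x_i=\lambda_i/\AM$ throughout.
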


\begin{proof}
Since $M(0) = 1$ and $M'(0) = \E[\log X]$, the chain rule gives $G'(0) = f'(1)M'(0)$.
\end{proof}

All such transforms are equivalent to first order at $t=0$, so the practical choice
is about interpolation conditioning from integer samples. Any analytic $f$ yields
the same first-order target, but interpolation error and noise amplification depend
on the choice of $f$.

\paragraph{Why log?}
The Box--Cox family $f_\alpha(z) = (z^\alpha - 1)/\alpha$ interpolates between
identity ($\alpha = 1$) and log ($\alpha \to 0$). Appendix~\ref{app:transform-comparison}
and Table~\ref{tab:boxcox-comparison} show that while the optimal $\alpha$ varies with
spectrum, $\alpha = 0$ is consistently competitive, so we adopt it as our default.
The log transform is also the unique
variance-stabilizing choice under multiplicative noise (up to affine scaling) and
compresses the exponential growth of $M(t)$.

\subsection{Cumulant Generating Function}
\label{sec:cgf-def}

With the log transform chosen, define the cumulant generating function
\begin{equation}
K(t) = \log M(t).
\end{equation}
This compresses the exponential scale of $M(t)$ and yields additive trace ratios. For
computation we rescale eigenvalues by their arithmetic mean: $x_i = \lambda_i/\AM$,
and let $X$ be uniform on $\{x_1, \ldots, x_n\}$. Then
\begin{equation}
M(t) = \E[X^t] = \frac{1}{n} \sum_{i=1}^n x_i^t, \qquad M(1) = 1,
\end{equation}
so $K(0) = K(1) = 0$.
For integer $k \geq 1$, write $M_k := M(t)\big|_{t=k}$.
Differentiate: since $\frac{d}{dt}X^t\big|_{t=0} = \log X$,
\begin{equation}
\label{eq:Kprime-target}
K'(0) = \E[\log X] = \frac{1}{n}\sum_{i=1}^n \log x_i = \log\frac{\GM}{\AM},
\end{equation}
which is exactly the target. Thus
$\log\det A = n\bigl(\log\AM + K'(0)\bigr)$.
Traces give $K$ at integers $k \geq 2$:
\begin{equation}
\label{eq:K-from-traces}
M_k = \frac{n^{k-1} p_k}{p_1^k}, \qquad K(k) = \log\left(\frac{n^{k-1} p_k}{p_1^k}\right).
\end{equation}
We denote $\beta = M_2$, $\gamma = M_3$, $\delta = M_4$, so $K(2) = \log\beta$, etc.
Jensen's inequality gives $\beta \geq 1$ (equality only when all eigenvalues equal), so $K(2) \geq 0$. With $K(0) = K(1) = 0$ and $K'(0) < 0$, the curve rises from zero; interpolation must capture this shape.

\subsection{Interpolation Problem}
\label{sec:interp-problem}

From traces $p_1, \ldots, p_m$ we obtain $K(2), \ldots, K(m)$ plus anchors $K(0) = K(1) = 0$---a total of $m+1$ values at consecutive integers. Polynomial interpolation is natural: it is the unique lowest-degree fit, and differentiating polynomials is trivial. The task: fit degree-$m$ polynomial $P(t)$ through these points, read off $P'(0)$.

\subsection{Lagrange Form}
\label{sec:lagrange}

Lagrange's form expresses $P(t)$ as a weighted sum of data values---ideal for computing $P'(0)$:
\begin{equation}
P_m(t) = \sum_{j=0}^m K(j) L_j(t), \qquad L_j(t) = \prod_{k \neq j} \frac{t - k}{j - k}.
\end{equation}
Differentiate at zero: $P'_m(0) = \sum_{j=0}^m K(j) L'_j(0)$. Since $K(0) = K(1) = 0$, only nodes $j \geq 2$ contribute:
\begin{equation}
\label{eq:k0m-estimator}
\boxed{k_{0:m}: \quad \widehat{K'(0)} = \sum_{j=2}^{m} L'_j(0) \cdot K(j)}
\end{equation}
Section~\ref{sec:estimators} derives closed-form weights $L'_j(0)$; Appendix~\ref{app:interp-error} gives the standard interpolation error bound.

\paragraph{Summary.}
The CGF $K(t) = \log\E[X^t]$ converts trace data to cumulants, and $K'(0)$ equals the target $\log(\GM/\AM)$. Lagrange interpolation through $K(0), K(1), \ldots, K(m)$ yields the $k_{0:m}$ estimator: a linear combination of log-trace ratios with closed-form weights.

\section{The \texorpdfstring{$k_{0:m}$}{k0:m} Estimator Family}
\label{sec:estimators}

Section~\ref{sec:framework} shows we need $P'_m(0)$ to estimate the log-determinant. Computing these derivatives numerically or symbolically for each $m$ would be cumbersome. Fortunately, the weights admit a simple closed form.

\subsection{Closed-Form Weights}

\begin{theorem}[Lagrange derivative weights]
\label{thm:weights}
For nodes $\{0, 1, \ldots, m\}$:
\begin{equation}
\boxed{L'_j(0) = \frac{(-1)^{j-1}}{j}\binom{m}{j}, \quad j = 1, \ldots, m.}
\end{equation}
For $j = 0$: $L'_0(0) = -H_m$ where $H_m = \sum_{k=1}^{m} 1/k$ is the $m$-th harmonic number.
\end{theorem}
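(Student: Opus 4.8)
The plan is to compute the derivative of each Lagrange basis polynomial $L_j$ at the single point $t=0$, exploiting the fact that $0$ is one of the nodes. Write $L_j(t) = \prod_{k \neq j}\frac{t-k}{j-k}$, and separate the numerator into the factor $(t-0)=t$ (present whenever $j \neq 0$) times the remaining product. For $j \geq 1$, $L_j(t) = \frac{t}{c_j}\,\phi_j(t)$ where $c_j = \prod_{k\neq j}(j-k)$ is the usual denominator and $\phi_j(t) = \prod_{k \neq j,\,k\neq 0}(t-k)$. Since $\phi_j$ is the only factor not vanishing at $t=0$, the product rule collapses: $L_j'(0) = \phi_j(0)/c_j$. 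So the whole computation reduces to evaluating $\phi_j(0) = \prod_{k\neq j,\,k\neq 0}(-k)$ and the denominator $c_j$, both of which are products of integers.

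Next I would evaluate the two products explicitly. The denominator is $c_j = \prod_{k=0,\,k\neq j}^{m}(j-k) = \Big(\prod_{k=0}^{j-1}(j-k)\Big)\Big(\prod_{k=j+1}^{m}(j-k)\Big) = j!\cdot(-1)^{m-j}(m-j)!$. For the numerator, $\phi_j(0) = \prod_{\substack{k=1\\k\neq j}}^{m}(-k) = (-1)^{m-1}\frac{m!}{j}$, since we take the product of $-k$ over $k \in \{1,\dots,m\}\setminus\{j\}$, which is $(-1)^{m-1}$ times $\big(\prod_{k=1}^m k\big)/j = m!/j$. Dividing, $L_j'(0) = \dfrac{(-1)^{m-1} m!/j}{(-1)^{m-j}\, j!\,(m-j)!} = \dfrac{(-1)^{j-1}}{j}\cdot\dfrac{m!}{j!(m-j)!} = \dfrac{(-1)^{j-1}}{j}\binom{m}{j}$, as claimed.

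For the $j=0$ case, the factor $t$ is absent, so instead use the logarithmic-derivative trick: $L_0(t) = \prod_{k=1}^{m}\frac{t-k}{-k}$, so $L_0(0) = \prod_{k=1}^m \frac{-k}{-k} = 1$ and $\frac{L_0'(t)}{L_0(t)} = \sum_{k=1}^{m}\frac{1}{t-k}$, giving $L_0'(0) = L_0(0)\sum_{k=1}^m \frac{1}{-k} = -\sum_{k=1}^m \frac{1}{k} = -H_m$. (Alternatively this follows from $\sum_{j=0}^m L_j(t) \equiv 1$, hence $\sum_{j=0}^m L_j'(0) = 0$, combined with the closed form for $j\geq 1$: one checks $\sum_{j=1}^m \frac{(-1)^{j-1}}{j}\binom{m}{j} = H_m$ by the standard alternating-binomial-harmonic identity.)

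The only mildly delicate point is bookkeeping the signs and the off-by-one in the index ranges of the two finite products — in particular getting $(-1)^{m-j}$ from the denominator and $(-1)^{m-1}$ from the numerator to combine correctly into $(-1)^{j-1}$. Everything else is a routine evaluation of factorials, and no convergence or analytic subtleties arise since these are finite products of a fixed polynomial.
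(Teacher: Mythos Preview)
Your proof is correct and, for $j \geq 1$, follows essentially the same route as the paper: factor out the node-at-zero term $t$ from the numerator of $L_j(t)$, then evaluate the two remaining factorial products. Your version is marginally cleaner because you split off \emph{both} the $k=0$ and $k=j$ factors into $t/c_j$ up front, so only the product rule is needed; the paper keeps the $(t-j)$ factor in the numerator and must invoke the quotient rule before reaching the same ratio $-\tilde\omega(0)/(j D_j)$, which is your $\phi_j(0)/c_j$ up to bookkeeping.

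The $j=0$ case is where you genuinely diverge. The paper uses the partition-of-unity identity $\sum_j L_j'(0)=0$ and then evaluates $\sum_{j=1}^m (-1)^{j-1}\binom{m}{j}/j$ via the integral $\int_0^1 (1-(1-t)^m)/t\,dt = H_m$. Your logarithmic-derivative computation $L_0'(0)/L_0(0) = \sum_{k=1}^m 1/(0-k) = -H_m$ is more direct and self-contained: it needs neither the $j\geq 1$ formula nor the auxiliary integral identity. The paper's route has the side benefit of exhibiting the alternating-binomial-harmonic identity explicitly (which you mention as a check), but for the theorem itself your argument is shorter.
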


\begin{proof}
See Appendix~\ref{app:weights} for the full derivation. The case $j \geq 1$ uses
the quotient rule; the case $j = 0$ follows from the partition-of-unity property
$\sum_{j=0}^m L_j(t) = 1$ combined with a classical integral identity.
\end{proof}

\begin{remark}
The sign $(-1)^{j-1}$ alternates: $L'_2(0) < 0$, $L'_3(0) > 0$, $L'_4(0) < 0$, etc. This alternation enables error cancellation for smooth functions, the basis for convergence as $m$ increases.
\end{remark}

\subsection{Explicit Formulas}

Theorem~\ref{thm:weights} yields closed-form estimators. Define $\beta = M(2)$, $\gamma = M(3)$, $\delta = M(4)$ as in Section~\ref{sec:cgf-def}. The first three estimators admit equivalent representations:

\begin{center}
\small
\begin{tabular}{@{}llll@{}}
\toprule
& Cumulant form & Trace form & Moment form \\
\midrule
$k_{0:2}$ & $-\tfrac{1}{2}K(2)$ & $-\tfrac{1}{2}\log\tfrac{np_2}{p_1^2}$ & $\beta^{-1/2}$ \\[3pt]
$k_{0:3}$ & $-\tfrac{3}{2}K(2) + \tfrac{1}{3}K(3)$ & $-\tfrac{3}{2}\log\tfrac{np_2}{p_1^2} + \tfrac{1}{3}\log\tfrac{n^2p_3}{p_1^3}$ & $\beta^{-3/2}\gamma^{1/3}$ \\[3pt]
$k_{0:4}$ & $-3K(2) + \tfrac{4}{3}K(3) - \tfrac{1}{4}K(4)$ & $-3\log\tfrac{np_2}{p_1^2} + \tfrac{4}{3}\log\tfrac{n^2p_3}{p_1^3} - \tfrac{1}{4}\log\tfrac{n^3p_4}{p_1^4}$ & $\beta^{-3}\gamma^{4/3}\delta^{-1/4}$ \\
\bottomrule
\end{tabular}
\end{center}

\noindent The moment form exponentiates: $\widehat{\GM/\AM} = \beta^{w_2}\gamma^{w_3}\delta^{w_4}$ where $w_j = L'_j(0)$.

\paragraph{Worked example: $k_{0:4}$.} From Theorem~\ref{thm:weights}: $L'_2(0) = -\tfrac{1}{2}\binom{4}{2} = -3$, $L'_3(0) = +\tfrac{1}{3}\binom{4}{3} = +\frac{4}{3}$, $L'_4(0) = -\tfrac{1}{4}\binom{4}{4} = -\frac{1}{4}$. Thus:
\begin{equation}
\label{eq:k04}
\boxed{k_{0:4}: \quad \widehat{K'(0)} = -3 K(2) + \frac{4}{3} K(3) - \frac{1}{4} K(4) = \log\bigl(\beta^{-3}\gamma^{4/3}\delta^{-1/4}\bigr)}
\end{equation}
The simplest estimator $k_{0:2}\colon \widehat{K'(0)} = -\tfrac{1}{2}K(2) = -\tfrac{1}{2}\log\beta$ is the exact form of Latan\'e's volatility-drag approximation. Appendix~\ref{app:weights-table} lists weights for $m = 2, \ldots, 8$; Section~\ref{sec:experiments} shows when higher orders improve upon $k_{0:4}$.

\subsection{Weight Growth and Stability}

\begin{proposition}[Weight magnitude growth]
For fixed $j \geq 2$, as $m \to \infty$:
\[
|L'_j(0)| = \frac{1}{j}\binom{m}{j} = O(m^j/j!).
\]
For the highest term ($j = m$): $|L'_m(0)| = 1/m$.
\end{proposition}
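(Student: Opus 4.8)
The plan is to read off both claims directly from the closed-form weights supplied by Theorem~\ref{thm:weights}, namely $L'_j(0) = (-1)^{j-1}\binom{m}{j}/j$, so that $|L'_j(0)| = \binom{m}{j}/j$. The first assertion is then a one-line asymptotic estimate of the binomial coefficient for fixed $j$; the second is a trivial evaluation at $j=m$.

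First I would handle the fixed-$j$ case. Writing $\binom{m}{j} = \frac{m(m-1)\cdots(m-j+1)}{j!}$, the numerator is a polynomial in $m$ of degree exactly $j$ with leading coefficient $1$, so $\binom{m}{j} = \frac{m^j}{j!}\bigl(1 + O(1/m)\bigr)$ as $m\to\infty$. Dividing by the constant $j$ gives $|L'_j(0)| = \frac{1}{j}\binom{m}{j} = \frac{m^j}{j\cdot j!}\bigl(1+O(1/m)\bigr) = O(m^j/j!)$, which is exactly the claimed bound (the factor $1/j$ is absorbed into the implied constant, or kept explicit — either reading is consistent with the statement as written). I would note that the estimate is tight: $|L'_j(0)| \sim m^j/(j\cdot j!)$, so ``$O$'' here is really ``$\Theta$'' for each fixed $j$.

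Second, the highest term: setting $j = m$ in the same formula, $\binom{m}{m} = 1$, hence $|L'_m(0)| = \frac{1}{m}\binom{m}{m} = \frac{1}{m}$, as claimed. This requires no asymptotics at all.

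There is essentially no obstacle here — the proposition is an immediate corollary of Theorem~\ref{thm:weights} together with the elementary fact that $\binom{m}{j}$ grows like $m^j/j!$ for fixed $j$. The only point worth a sentence of care is making explicit that the leading coefficient of the falling factorial $m^{\underline{j}} = m(m-1)\cdots(m-j+1)$ is $1$, so that no extra constant sneaks into the exponent or the leading term; beyond that the argument is a direct substitution.
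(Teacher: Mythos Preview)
Your proposal is correct. The paper states this proposition without proof, treating it as an immediate consequence of Theorem~\ref{thm:weights}; your argument---read off $|L'_j(0)|=\binom{m}{j}/j$ from the closed-form weights, expand the falling factorial for fixed $j$, and evaluate at $j=m$---is exactly the intended (trivial) verification.
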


Weights grow as $m^j$ for fixed $j$, but alternating signs cancel errors for smooth functions. For non-smooth $K(t)$, cancellation fails and errors grow. This instability is a manifestation of the Runge phenomenon, where high-degree polynomial interpolation oscillates wildly at interval edges.

For noisy trace inputs (Section~\ref{sec:noisy-traces}), these weight magnitudes directly determine noise amplification. The $\ell_2$ norm $\lVert w \rVert_2 = \sqrt{\sum w_k^2}$ grows as $2^m/m^{5/4}$, creating an exponential penalty for high interpolation orders when traces have even modest relative noise ($\eta \gtrsim 1\%$).

\noindent The end-to-end estimator is summarized in Algorithm~\ref{alg:k0m}. The only
matrix-dependent work is computing traces; all remaining steps scale with $m$.

\begin{algorithm}[t]
\caption{$k_{0:m}$ Log-Determinant Estimator}
\label{alg:k0m}
\begin{algorithmic}[1]
\Require SPD matrix $A \in \R^{n \times n}$ (implicit), order $m$
\Ensure Estimate $\widehat{\log\det A}$
\State Compute traces $p_1 = \tr(A), \ldots, p_m = \tr(A^m)$
\State $\mu \gets p_1 / n$ \Comment{Arithmetic mean}
\For{$k = 2, \ldots, m$}
    \State $K(k) \gets \log(n^{k-1} p_k / p_1^k)$
\EndFor
\State Compute weights $L'_j(0) = (-1)^{j-1}\binom{m}{j}/j$ for $j = 2, \ldots, m$
\State $\widehat{K'(0)} \gets \sum_{j=2}^{m} L'_j(0) \cdot K(j)$
\State \Return $n \cdot (\log\mu + \widehat{K'(0)})$
\end{algorithmic}
\end{algorithm}

Algorithm~\ref{alg:k0m} applies to any order $m$, and the only matrix-dependent
step is computing traces; all other operations are $O(m)$. When is the estimate
exact? The next section identifies the spectral conditions under which $k_{0:m}$
recovers $K'(0)$ without error.

\section{When Is the \texorpdfstring{$k_{0:m}$}{k0:m} Estimator Exact?}
\label{sec:exactness}

With the $k_{0:m}$ weights in hand, we now ask: when is the estimator exact? The answer identifies the best-case spectra and reveals the boundary beyond which all trace-based methods must approximate.

\subsection{Polynomial Exactness}

\begin{proposition}[Polynomial exactness]
\label{prop:poly-exact}
If $K(t)$ is a polynomial of degree at most $m$, then $k_{0:m}$ is exact: $\widehat{K'(0)} = K'(0)$.
\end{proposition}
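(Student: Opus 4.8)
The plan is to exploit the fact that the $k_{0:m}$ estimator is, by construction, the derivative at $t=0$ of the degree-$m$ Lagrange interpolant of $K$ through the nodes $0, 1, \ldots, m$, and that polynomial interpolation is \emph{exact} on polynomials of degree at most $m$. So the argument is essentially a one-line reduction plus a sanity check that the anchor simplifications in~\eqref{eq:k0m-estimator} did not discard anything.

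First I would recall the identity $\widehat{K'(0)} = P_m'(0)$ where $P_m(t) = \sum_{j=0}^m K(j) L_j(t)$ is the Lagrange interpolant. Then I would invoke uniqueness of polynomial interpolation: if $K$ itself is a polynomial of degree $\le m$, it agrees with $P_m$ at the $m+1$ distinct nodes $0, 1, \ldots, m$, and two polynomials of degree $\le m$ agreeing at $m+1$ points are identical, so $P_m \equiv K$. Differentiating at $t=0$ gives $\widehat{K'(0)} = P_m'(0) = K'(0)$, which is the claim. A cleaner way to phrase the same point, avoiding even the uniqueness lemma, is to note that the standard interpolation error formula (Appendix~\ref{app:interp-error}) expresses $K(t) - P_m(t)$ in terms of the $(m+1)$-st derivative of $K$ on the node interval; when $\deg K \le m$ this derivative vanishes identically, so $K - P_m \equiv 0$. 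Either route is a single sentence.

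The only genuine wrinkle — and the one place I would be slightly careful — is that~\eqref{eq:k0m-estimator} drops the $j=0$ and $j=1$ terms by using $K(0) = K(1) = 0$. This holds because $K = \log M$ with $M(0) = M(1) = 1$ (the normalization $X = \lambda/\AM$). So the reduced sum $\sum_{j=2}^m L_j'(0) K(j)$ really does equal the full $\sum_{j=0}^m L_j'(0) K(j) = P_m'(0)$; no generality is lost. I would state this explicitly so the reader sees the anchors are consistent with "$K$ is a polynomial of degree $\le m$" — a degree-$\le m$ polynomial vanishing at $0$ and $1$ is perfectly allowed (e.g. the lognormal case $K(t) = \mu t + \sigma^2 t^2/2$ with the appropriate normalization, cf. Theorem~\ref{thm:lognormal}).

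There is no real obstacle here; the proposition is a structural consequence of how the estimator was defined. If anything, the "hard part" is purely expository: making sure the reader understands that the hypothesis "$K(t)$ is a polynomial of degree at most $m$" is a statement about the \emph{true} CGF of the spectrum, which in turn is equivalent to a constraint on the eigenvalue distribution (it forces all cumulants of order $> m$ of $\log X$-type quantities to have a specific structure), and that the interesting content of the paper is \emph{when} real spectra satisfy or nearly satisfy this — handled in the lognormal result and, on the negative side, in Proposition~\ref{prop:taylorradius} and Theorem~\ref{thm:impossibility}. The proof of this proposition itself is immediate from polynomial interpolation.
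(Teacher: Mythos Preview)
Your proposal is correct and takes essentially the same approach as the paper: invoke exactness of Lagrange interpolation on polynomials of degree $\le m$, then differentiate at $t=0$. The paper's proof is just those two sentences; your added sanity check about the dropped $j=0,1$ terms and the error-formula alternative are fine elaborations but not required.
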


\begin{proof}
Lagrange interpolation on $m+1$ nodes reproduces polynomials of degree $\le m$ exactly. Differentiate at $t=0$ to recover $K'(0)$.
\end{proof}

Which eigenvalue distributions make $K(t)$ polynomial? Only one non-trivial family: lognormal.

\begin{theorem}[Lognormal exactness]
\label{thm:lognormal}
Let $\log X \sim \mathcal{N}(\mu, \sigma^2)$. Then $K(t) = \mu t + \sigma^2 t^2/2$ is quadratic, and $k_{0:m}$ is exact for all $m \ge 2$.
\end{theorem}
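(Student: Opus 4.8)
The plan is to compute $M(t) = \E[X^t]$ directly using the lognormal moment-generating function, observe that its logarithm $K(t)$ is exactly quadratic, and then invoke Proposition~\ref{prop:poly-exact}. First I would recall that if $\log X \sim \mathcal{N}(\mu, \sigma^2)$, then for any real $t$ the random variable $t \log X$ is normal with mean $\mu t$ and variance $\sigma^2 t^2$, so $X^t = \exp(t \log X)$ is lognormal. The standard formula $\E[e^{Y}] = \exp(\E[Y] + \tfrac12 \Var(Y))$ for a normal $Y$ then gives
\[
M(t) = \E[X^t] = \exp\!\left(\mu t + \tfrac{1}{2}\sigma^2 t^2\right).
\]

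Taking logarithms yields $K(t) = \log M(t) = \mu t + \tfrac{1}{2}\sigma^2 t^2$, a polynomial of degree at most $2$ in $t$. Since $m \ge 2$, this polynomial has degree at most $m$, so Proposition~\ref{prop:poly-exact} applies verbatim: the degree-$m$ Lagrange interpolant through $K(0), K(1), \ldots, K(m)$ reproduces $K(t)$ exactly, and differentiating at $t=0$ gives $\widehat{K'(0)} = K'(0) = \mu$. Hence $k_{0:m}$ is exact for every $m \ge 2$.

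One small point of care: the framework in Section~\ref{sec:cgf-def} normalizes eigenvalues by the arithmetic mean so that $M(1) = 1$, i.e. $K(1) = 0$, which forces $\mu + \tfrac12\sigma^2 = 0$; I would note that this normalization is harmless here since it only shifts $\mu$ by a constant and does not affect the degree of $K$, so the exactness conclusion is unchanged (and indeed $K'(0) = \mu = -\tfrac12\sigma^2 = \log(\GM/\AM)$ as expected from~\eqref{eq:Kprime-target}).

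The only mild obstacle is a technical one rather than a conceptual one: strictly speaking the lognormal distribution is continuous, whereas the matrix setting deals with a finite empirical spectrum, so the theorem should be read as describing the idealized limiting regime (or as an exact statement whenever the eigenvalue empirical measure happens to be lognormal, which cannot occur for finitely many atoms). I would state this caveat explicitly but otherwise treat the computation of $M(t)$ via the normal MGF as the entire substance of the proof — there is no hard estimate to push through, just the identification of $K$ as a quadratic.
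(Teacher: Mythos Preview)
Your proposal is correct and matches the paper's proof essentially line for line: identify $\E[X^t]=\E[e^{tY}]$ with $Y=\log X\sim\mathcal N(\mu,\sigma^2)$, apply the Gaussian MGF to get $K(t)=\mu t+\tfrac12\sigma^2 t^2$, and conclude by polynomial exactness (Proposition~\ref{prop:poly-exact}). Your additional remarks on the normalization $K(1)=0$ and the finite-$n$ caveat are not in the paper's proof proper but appear in the surrounding discussion, so nothing is missing or superfluous.
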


\begin{proof}
For $Y = \log X \sim \mathcal{N}(\mu, \sigma^2)$, the moment-generating function is $\E[e^{tY}] = e^{\mu t + \sigma^2 t^2/2}$. Since $X = e^Y$, we have $\E[X^t] = \E[e^{tY}]$, so $K(t) = \mu t + \sigma^2 t^2/2$. A quadratic is reproduced exactly by any interpolant of degree $\ge 2$.
\end{proof}

\begin{remark}[Uniqueness]
Why is lognormal special? If $X$ is lognormal, then $Y = \log X$ is Gaussian and its cumulant generating function is $K_Y(t) = \mu t + \sigma^2 t^2/2$. Gaussian distributions are uniquely characterized by having all cumulants of order $\ge 3$ vanish, so the CGF is exactly quadratic. This quadratic form is inherited by $K(t) = \log \E[X^t] = \log \E[e^{tY}]$. Among non-degenerate distributions, only lognormal yields quadratic $K(t)$.
\end{remark}

\paragraph{Closed form for lognormal spectra.}
If $K(t)$ is quadratic, skip $k_{0:m}$. From $K(t) = \mu t + \sigma^2 t^2/2$, we get $K'(0) = \mu$. Normalization $\E[X] = 1$ gives $K(1) = 0$, so $\mu = -\sigma^2/2$ and $K(2) = \sigma^2$. Hence $\GM/\AM = e^{-\sigma^2/2} = 1/\sqrt{e^{K(2)}}$, yielding:
\begin{equation}
\label{eq:lognormal-closed}
\widehat{\det(A)^{1/n}} = \frac{\tr(A)^2}{n\sqrt{n \cdot \tr(A^2)}}.
\end{equation}
This needs only two traces, half the data of $k_{0:4}$.

\begin{remark}[Connection to Latan\'e]
\label{rem:latane}
Equation~\eqref{eq:lognormal-closed} is the exact version of Latan\'e's volatility-drag approximation~\citep{latane1959}. Latan\'e used Taylor expansion; ours is exact for any lognormal spectrum.
\end{remark}

\paragraph{Finite-$n$ plug-in behavior.}
Theorem~\ref{thm:lognormal} describes the population. Model eigenvalues as i.i.d.\ lognormal draws. The empirical traces converge to population moments as $n \to \infty$; continuity of the estimator (Lemma~\ref{lem:plugin-continuity}) then implies convergence to the population geometric mean.

\begin{lemma}[Continuity of plug-in estimator]
\label{lem:plugin-continuity}
Fix $m \ge 2$ and $n$. On $\{(p_1, \ldots, p_m) : p_k > 0\}$, the map
\[
(p_1, \ldots, p_m) \mapsto n^{-H_m}\,p_1^{\,m}\prod_{k=2}^m p_k^{\,w_k},
\qquad w_k = \frac{(-1)^{k-1}}{k}\binom{m}{k}, \quad H_m = \sum_{j=1}^m \frac{1}{j},
\]
is continuous. Hence $p_k/n \to \mu_k$ implies convergence of the trace-based estimator.
\end{lemma}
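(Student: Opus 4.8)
The plan is to recognize that the displayed map is, factor by factor, a finite product of elementary continuous functions, so continuity is immediate, and then to obtain the limit claim by the continuous-mapping principle after rewriting the estimator in terms of the empirical moments $p_k/n$.

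First I would establish continuity. Fix $m\ge 2$ and $n$. On the open orthant $\{p_k>0\}$, each factor $p_k\mapsto p_k^{w_k}$ equals $\exp(w_k\log p_k)$, a composition of maps continuous on $(0,\infty)$, hence continuous; likewise $p_1\mapsto p_1^{m}$, and the prefactor $n^{-H_m}$ is a positive constant. A finite product of continuous functions is continuous, so $\Phi(p_1,\ldots,p_m) = n^{-H_m}p_1^{\,m}\prod_{k=2}^m p_k^{\,w_k}$ is continuous (indeed $C^\infty$) on $\{p_k>0\}$. I would flag here the one point that actually requires care: since the $w_k$ alternate in sign (Theorem~\ref{thm:weights}), letting some $p_k\to 0^+$ sends $p_k^{w_k}$ to $0$ or to $+\infty$ depending on the sign of $w_k$, so $\Phi$ admits no continuous extension to the boundary — which is precisely why the domain must be the interior. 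This is harmless for the application because $p_k=\tr(A^k)>0$ for every SPD $A$, so the data always lie in the interior.

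Second I would identify $\Phi$ with the $k_{0:m}$ output and extract homogeneity. Using $M_k=n^{k-1}p_k/p_1^{\,k}$ and $w_j=L'_j(0)$, the estimator of $\GM$ is $\AM\cdot\prod_{k=2}^m M_k^{w_k}$; substituting $\AM=p_1/n$ and collecting powers, together with the identities $\sum_{j=1}^m w_j=H_m$ (the relation $\sum_{j=0}^m L'_j(0)=0$ obtained by differentiating the partition of unity, already used for $L'_0(0)=-H_m$) and $\sum_{j=1}^m(-1)^{j-1}\binom{m}{j}=1$, one checks that this equals exactly $\Phi(p_1,\ldots,p_m)$. Moreover $\Phi$ is positively homogeneous of degree $m+\sum_{k\ge 2}w_k=H_m$ in $(p_1,\ldots,p_m)$, so it rewrites as $\Phi(p_1,\ldots,p_m)=g\bigl(p_1/n,\ldots,p_m/n\bigr)$ with $g(u_1,\ldots,u_m)=u_1^{\,m}\prod_{k=2}^m u_k^{\,w_k}$, continuous on $(0,\infty)^m$ by the same reasoning.

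Finally, the limit claim. If the empirical moments $p_k/n$ converge to population moments $\mu_k>0$ (for i.i.d.\ lognormal eigenvalues with finite moments this holds almost surely by the strong law), then continuity of $g$ on $(0,\infty)^m$ gives $\widehat{\det(A)^{1/n}}=g(p_1/n,\ldots,p_m/n)\to g(\mu_1,\ldots,\mu_m)$; since $\log$ is continuous on $(0,\infty)$ and $\Phi>0$, the log-determinant estimator $n(\log\AM+\widehat{K'(0)})$ converges as well. Combining with Theorem~\ref{thm:lognormal}, the limiting value is the population geometric mean $\GM=e^{\mu}$. I do not expect a genuine obstacle: the argument is elementary throughout, and the only nontrivial content is the boundary issue noted above, which is exactly why the hypothesis $p_k>0$ (and, for the limit, $\mu_k>0$) cannot be dropped.
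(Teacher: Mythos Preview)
Your proof is correct and follows the same elementary route as the paper: each factor $p_k\mapsto p_k^{w_k}$ is continuous on $(0,\infty)$, and finite products of continuous functions (times the constant $n^{-H_m}$) remain continuous. The paper's proof is a single sentence to this effect; your version adds several things the paper leaves implicit---the boundary behavior forced by the alternating signs of $w_k$, the verification via $\sum_j w_j=H_m$ and homogeneity that $\Phi$ really is the $k_{0:m}$ output, and the explicit continuous-mapping step for the limit claim---all of which are sound and arguably make the argument more self-contained.
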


\begin{proof}
Each power $p_k \mapsto p_k^{\,w_k}$ is continuous on $(0,\infty)$; multiplying by the constant $n^{-H_m}$ and taking finite products preserves continuity.
\end{proof}

\paragraph{Empirical validation.}
We drew 1000 lognormal spectra ($\sigma = 0.5$, $\kappa \approx 12$) for each $n$ and ran $k_{0:4}$. Bias stayed near zero; RMSE decayed as $O(n^{-0.5})$, the Monte Carlo rate. Figure~\ref{fig:scaling} confirms both exactness and consistency.

\begin{widefig}[t]
\centering
\includegraphics[width=\textwidth]{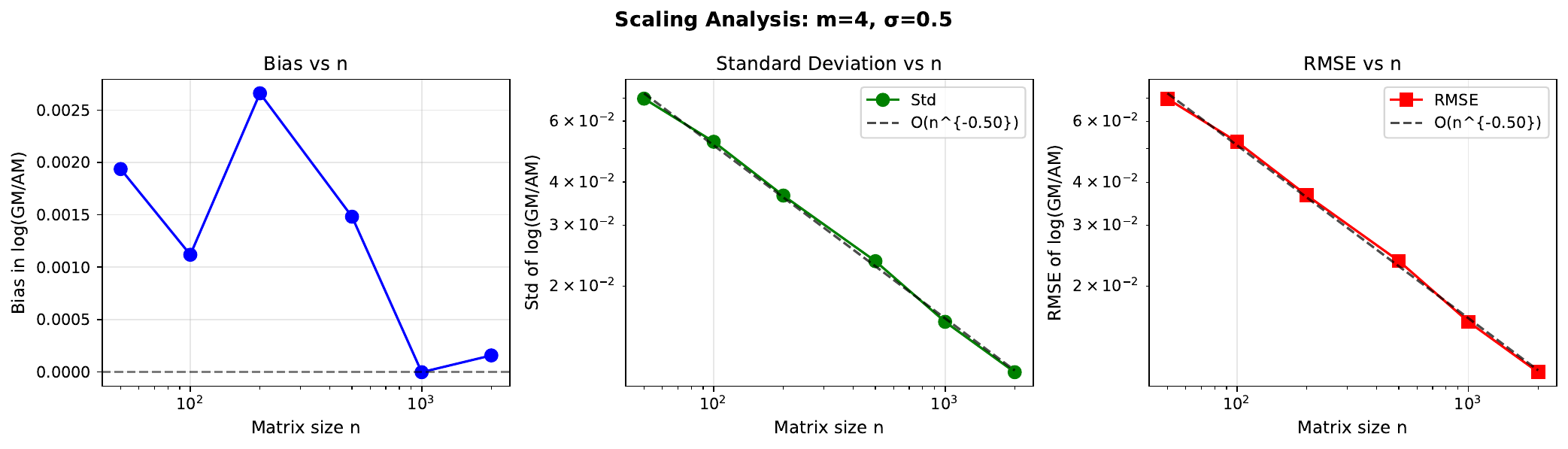}
\caption{Scaling with $n$ for lognormal eigenvalues ($\sigma = 0.5$). Left: bias $\approx 0$. Middle/Right: SD and RMSE decay as $O(n^{-0.5})$.}
\label{fig:scaling}
\end{widefig}

\begin{remark}[Practical guidance]
Lognormal marks the exact/approximate boundary. If log-eigenvalues look roughly Gaussian, formula~\eqref{eq:lognormal-closed} gives a quick diagnostic. Outside this model, $k_{0:m}$ is approximate; the exactness analysis shows why.
\end{remark}

Lognormal spectra thus mark the exactness boundary. Beyond this case, $k_{0:m}$ approximates. But how badly can it approximate? The next section shows that errors can be arbitrarily severe: no trace-based method using finitely many moments can succeed uniformly over all spectra.

\section{Why No Estimator Can Succeed Universally}
\label{sec:impossibility}

Section~\ref{sec:exactness} identified when $k_{0:m}$ is exact---a best-case scenario.
But most spectra are not lognormal. How badly can the estimator fail? If errors were
always modest, practitioners could accept $k_{0:m}$ as ``good enough.'' If errors can
be arbitrarily large, we need safeguards. We now prove that no finite-moment estimator
can be uniformly accurate: the information gap is a fundamental barrier.

\subsection{Setup}

Let $A \in \R^{n \times n}$ be SPD with eigenvalues $\lambda_1, \ldots, \lambda_n$ and mean $\AM = \frac{1}{n}\sum_i \lambda_i$. Define $x_i = \lambda_i/\AM$ and draw $X$ uniformly from $\{x_i\}$. Then $M(t) = \E[X^t] = \frac{1}{n}\sum_i x_i^t$ and $K(t) = \log M(t)$ (the cumulant generating function from Section~\ref{sec:cgf-def}). The target is $K'(0) = \E[\log X] = \log(\GM/\AM)$, giving $\log\det A = n(\log\AM + K'(0))$.
We observe $K(t)$ only at $m$ positive nodes $T = \{t_1, \ldots, t_m\} \subset (0, \infty)$.

\subsection{Two Obstructions}

Positive moments shrink small eigenvalues: $x^t \to 0$ as $x \to 0$ for $t > 0$. So $M(t)$ can stay finite while the smallest eigenvalue vanishes. Yet $K'(0) = \E[\log X]$ diverges like $\log(\min_i x_i)$. This mismatch yields two distinct barriers:

\begin{itemize}
\item \textbf{Obstruction 1: Non-identifiability.} Different spectra can share identical moments yet differ in $K'(0)$. Theorem~\ref{thm:impossibility}(a) constructs explicit examples; this is an instance of the classical moment problem.
\item \textbf{Obstruction 2: Saturation.} Even continuous estimators must fail: moments can converge to finite limits while $K'(0) \to -\infty$. Theorem~\ref{thm:impossibility}(b) proves this for any continuous estimator.
\end{itemize}

Can any finite set of moments determine $K'(0)$ universally? The answer is no:

\begin{theorem}[Impossibility]
\label{thm:impossibility}
Fix any finite $T = \{t_1, \ldots, t_m\} \subset (0, \infty)$.

\paragraph{(a) Non-identifiability.}
There exist spectra $\mu, \nu$ with $K_\mu(t_j) = K_\nu(t_j)$ for all $j$ but $K'_\mu(0) \neq K'_\nu(0)$. Hence no function of $(K(t_1), \ldots, K(t_m))$ can always return $K'(0)$. (This is an instance of the classical moment problem; see~\citet{shohat1943} and~\citet{akhiezer1965}.)

\paragraph{(b) Saturation.}
For any continuous estimator $\widehat{K'_T} = f(K(t_1), \ldots, K(t_m))$, there exists a sequence $A_\kappa$ with
\[
\lim_{\kappa \to \infty} K_{A_\kappa}(t_j) \text{ finite for each } j, \quad \text{but} \quad \lim_{\kappa \to \infty} K'_{A_\kappa}(0) = -\infty.
\]
So $\widehat{K'_T}$ stays bounded while the truth diverges.
\end{theorem}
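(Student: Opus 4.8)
The plan is to prove both parts from one mechanism: splicing a vanishing atom into a fixed bulk distribution, which positive moments cannot see but which drives $\E[\log X]$ to $-\infty$. Fix a finitely-supported mean-one distribution $X_0$, a weight $w\in(0,1)$, and $\epsilon>0$, and let $X_\epsilon$ put mass $w$ at $\epsilon$ and mass $1-w$ on $\tfrac{1-w\epsilon}{1-w}X_0$; the rescaling gives $\E[X_\epsilon]=w\epsilon+(1-w\epsilon)=1$, so $X_\epsilon$ is a legitimately normalized spectrum. Then
\[
M_\epsilon(t)=w\,\epsilon^{\,t}+(1-w)\Bigl(\tfrac{1-w\epsilon}{1-w}\Bigr)^{\!t}M_0(t),
\]
and since every $t_j>0$ we have $\epsilon^{\,t_j}\to0$, so $M_\epsilon(t_j)\to(1-w)^{1-t_j}M_0(t_j)>0$ and each $K_\epsilon(t_j)=\log M_\epsilon(t_j)$ has a finite limit. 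The same atom contributes $w\log\epsilon$ to $K'_\epsilon(0)=\E_\epsilon[\log X]=w\log\epsilon+(1-w)\bigl(\log\tfrac{1-w\epsilon}{1-w}+\E_0[\log X]\bigr)\to-\infty$. This is the information gap made quantitative.

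For part (b), given a continuous $f$ I set $\widehat{K'_T}=f(K_\epsilon(t_1),\dots,K_\epsilon(t_m))$ along this family, indexed by a conditioning parameter $\kappa$ with $\kappa\to\infty$ as $\epsilon\to0$ (a genuine blow-up, since the normalized spectrum has smallest atom $\epsilon\to0$ while its bulk stays $\Theta(1)$, hence $\kappa\to\infty$). By the previous paragraph the argument vector converges in $\R^m$, so continuity gives $\widehat{K'_T}\to f(\text{finite limit})\in\R$, while $K'_{A_\kappa}(0)\to-\infty$; hence $|\widehat{K'_T}-K'_{A_\kappa}(0)|\to\infty$ and no continuous estimator stays accurate. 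To make the $A_\kappa$ honest matrices I take $w$ rational with $n$ a multiple of its denominator and realize $X_\epsilon$ as the empirical spectral distribution of a diagonal SPD matrix; nothing above changes.

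For part (a) I would instead count dimensions of a moment fiber. Fix $N+1\ge m+3$ positive support points $s_0<\dots<s_N$, rescaled so the uniform weight vector $w^\ast$ has mean one, and impose on probability weights $w$ the $m+2$ affine constraints $\sum_iw_i=1$, $\sum_is_iw_i=1$, and $\sum_is_i^{t_j}w_i=c_j$ ($j=1,\dots,m$) with $c_j:=\sum_is_i^{t_j}w^\ast_i$. Because $1,x,x^{t_1},\dots,x^{t_m},\log x$ are linearly independent on $(0,\infty)$, for generic $s_i$ the $m+3$ value-vectors in $\R^{N+1}$ are linearly independent; hence the fiber $\mathcal A$ through $w^\ast$ is an affine subspace of dimension $N-m-1\ge1$ on which the affine functional $\ell(w)=\sum_i(\log s_i)w_i=\E_w[\log X]$ is non-constant. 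Since $w^\ast>0$, a relative neighborhood of $w^\ast$ in $\mathcal A$ stays strictly positive, so I can pick $w^{(1)},w^{(2)}$ there with $\ell(w^{(1)})\ne\ell(w^{(2)})$; the corresponding measures $\mu,\nu$ agree at every node ($K_\mu(t_j)=K_\nu(t_j)=\log c_j$) yet have $K'_\mu(0)=\ell(w^{(1)})\ne\ell(w^{(2)})=K'_\nu(0)$, so no function of $(K(t_1),\dots,K(t_m))$ can always return $K'(0)$. For integer nodes $t_j=j$ — the case the estimators use — all vectors are rational, so $w^{(1)},w^{(2)}$ may be taken rational and both spectra realized exactly over a common denominator; for general $T$ this is the measure-theoretic statement consistent with the cited moment-problem literature.

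The main obstacle is the genericity step in (a): one must check that $\{1,x,x^{t_1},\dots,x^{t_m},\log x\}$ is linearly independent on $(0,\infty)$ — distinct-exponent powers are independent and $\log$ is not a finite combination of them (compare growth rates, or use that a nonzero such combination has only finitely many zeros) — and that this carries over to value-vectors at generically chosen support points, since a vanishing $(m+3)\times(m+3)$ minor is a non-identically-zero function of those points and is therefore avoidable. Part (b) has no comparable difficulty: once the spliced family is written down, continuity of $f$ closes it immediately, and the only bookkeeping is the elementary discretization turning $X_\epsilon$ into a matrix spectrum.
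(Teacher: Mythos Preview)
Your proof is correct. Part (a) follows essentially the paper's approach: both count dimensions of finitely-supported measures matching the $m+2$ constraints (normalization, mean one, $m$ moments) and perturb within the fiber. The paper takes exactly $m+3$ support points, finds a null vector $v$ of the $(m+2)\times(m+3)$ constraint matrix, and asserts $\sum_i v_i \log x_i \neq 0$ ``generically.'' You take $N+1 \ge m+3$ points and are more careful about that last step, deriving it from the functional independence of $\{1,x,x^{t_1},\dots,x^{t_m},\log x\}$ on $(0,\infty)$; this is a genuine improvement in rigor over the paper's one-word dismissal.

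Part (b) is where you diverge. The paper uses the explicit two-point family $A_\kappa = \operatorname{diag}(1,\kappa)$, computes $M_\kappa(t) \to 2^{t-1}$ and $K'_\kappa(0) = \log\frac{2\sqrt\kappa}{1+\kappa} \to -\infty$ directly. Your spliced construction---mass $w$ at a vanishing atom $\epsilon$, mass $1-w$ on a rescaled bulk $X_0$---is strictly more general (the paper's example is essentially your construction with $X_0$ a point mass and $w=\tfrac12$) and makes the mechanism transparent: positive exponents annihilate $\epsilon^{t_j}$ while $\log\epsilon$ diverges. What the paper's version buys is concreteness (an honest $2\times 2$ matrix, no discretization bookkeeping) and the ability to quote exact limits; what yours buys is a cleaner conceptual picture and a family that works over any bulk spectrum. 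Both close the argument via continuity of $f$ on a convergent input vector, so the endgame is identical.

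One small remark: your opening claim that both parts follow from ``one mechanism'' is slightly oversold, since you (correctly) abandon the spliced family for (a) and switch to the fiber argument. The spliced family alone does not produce two measures with \emph{exactly} matching moments, only a limiting match.
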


\begin{proof}
\textbf{(a)} Pick $m+3$ distinct points $x_0, \ldots, x_{m+2} > 0$. The constraint matrix
\[
V = \begin{bmatrix}
1 & \cdots & 1 \\
x_0 & \cdots & x_{m+2} \\
x_0^{t_1} & \cdots & x_{m+2}^{t_1} \\
\vdots & & \vdots \\
x_0^{t_m} & \cdots & x_{m+2}^{t_m}
\end{bmatrix}
\]
has a nonzero null vector $v$. For small $\varepsilon > 0$, set $w^{(\pm)} = w^{(0)} \pm \varepsilon v$ where $w^{(0)} > 0$ satisfies the mean-one constraint. The measures $\mu = \sum_i w_i^{(+)} \delta_{x_i}$ and $\nu = \sum_i w_i^{(-)} \delta_{x_i}$ share all moments in $T$, yet $K'_\mu(0) - K'_\nu(0) = 2\varepsilon \sum_i v_i \log x_i \neq 0$ generically.

\textbf{(b)} Take $A_\kappa = \diag(1, \kappa)$. Then
\[
M_\kappa(t) = \tfrac{1}{2}\bigl[(2/(1+\kappa))^t + (2\kappa/(1+\kappa))^t\bigr] \to 2^{t-1},
\]
so $K_\kappa(t) \to (t-1)\log 2$. But $K'_\kappa(0) = \log\frac{2\sqrt{\kappa}}{1+\kappa} \to -\infty$. A continuous $f$ keeps $\widehat{K'_T}$ finite, so the error grows without bound.
\end{proof}

\begin{example}[Nodes $\{2, 4\}$]
\label{ex:nonident24}
With support $\{0.1, 0.5, 1, 2, 10\}$, two weight vectors give identical $\E[X] = 1$, $\E[X^2]$, $\E[X^4]$, but $\E[\log X]$ differs by 5\%.
\end{example}

\subsection{Taylor Radius}

The saturation obstruction can be sharpened by locating singularities of $K(t)$.
Two-point spectra provide the sharpest bounds on this radius, making them the critical worst-case examples.

\begin{proposition}[Taylor radius for two-point spectra]
\label{prop:taylorradius}
Let $X$ take values $x_1, x_2$ with weights $(1-p), p$ and ratio $\kappa = x_2/x_1 > 1$. The Taylor series of $K(t)$ about $t = 0$ has radius
\[
\boxed{R = \frac{1}{\log\kappa}\sqrt{\log^2\bigl(\tfrac{1-p}{p}\bigr) + \pi^2}.}
\]
For equal weights ($p = 1/2$): $R = \pi/\log\kappa$.
\end{proposition}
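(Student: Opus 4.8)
The plan is to reduce the problem to locating the zeros of an entire function. Writing $x_2 = \kappa x_1$,
\[
M(t) = (1-p)x_1^t + p\,x_2^t = x_1^t\bigl[(1-p) + p\,\kappa^t\bigr],
\]
so $K(t) = t\log x_1 + \log h(t)$ with $h(t) := (1-p) + p\,\kappa^t$. Since $t\log x_1$ is entire, the Taylor series of $K$ about $t=0$ has the same radius of convergence as that of $\log h$. The function $h$ is entire (as $\kappa^t = e^{t\log\kappa}$ is entire) with $h(0) = 1 > 0$, so a principal branch of $\log h$ is analytic near $0$; its radius of convergence is the distance from $0$ to the nearest point where this branch cannot be extended analytically, and the only obstructions are the zeros of $h$.

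Second, I would solve $h(t) = 0$, i.e. $\kappa^t = -(1-p)/p$. Since $0 < p < 1$, the right-hand side is a negative real of modulus $(1-p)/p$, so taking all complex logarithms of $e^{t\log\kappa}$ gives
\[
t\log\kappa = \log\frac{1-p}{p} + i\pi(2k+1), \qquad k \in \Z,
\]
hence the zeros of $h$ are $t_k = \frac{1}{\log\kappa}\bigl[\log\frac{1-p}{p} + i\pi(2k+1)\bigr]$, with $|t_k| = \frac{1}{\log\kappa}\sqrt{\log^2(\tfrac{1-p}{p}) + \pi^2(2k+1)^2}$. This is minimized at $k = 0$ and $k = -1$, both giving $|t_k| = R$ as in the statement; in particular the open disk $|t| < R$ contains no zero of $h$. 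Being simply connected, it carries a single-valued analytic branch of $\log h$ agreeing with the principal branch near $0$, so the Taylor radius is at least $R$. Setting $p = 1/2$ kills the $\log\frac{1-p}{p}$ term and yields $R = \pi/\log\kappa$.

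Finally I would show the radius is not larger than $R$, i.e. that the obstruction at $t_0$ (and its conjugate $t_{-1}$) is genuine. The zero is simple, since $h'(t_0) = p(\log\kappa)\kappa^{t_0} = -(1-p)\log\kappa \neq 0$, so $h(t) = c(t - t_0) + O((t-t_0)^2)$ with $c \neq 0$ near $t_0$, whence $\log|h(t)| \to -\infty$ as $t \to t_0$. A power series of radius strictly greater than $R$ would define an analytic, hence locally bounded, function on a disk containing $t_0$, contradicting this blow-up; so the radius is exactly $R$. The only step needing care is this last one—verifying that a logarithmic branch point (not a removable point or pole) sits at distance exactly $R$, and that the $\pm i\pi$ branches give the closest zeros—but both follow immediately from the explicit list of zeros of $h$, so the argument is essentially the complex-analytic bookkeeping above.
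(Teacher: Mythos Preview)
Your proof is correct and follows essentially the same approach as the paper's: identify the singularities of $K(t)$ with the zeros of $M(t)$, solve $(1-p) + p\kappa^t = 0$ explicitly, and take the distance to the nearest zero. You supply considerably more rigor than the paper's three-line sketch---factoring out the entire piece $x_1^t$, invoking simple connectivity to get the lower bound on the radius, and checking that the zero is simple so the logarithmic singularity is genuine---but the underlying idea is identical.
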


\begin{proof}
Singularities of $K(t) = \log M(t)$ occur at zeros of $M(t) = (1-p)x_1^t + px_2^t$. Setting $(1-p) + p\kappa^t = 0$ gives $t_k = [\log((1-p)/p) + i(2k+1)\pi]/\log\kappa$. The radius equals the distance to the nearest zero.
\end{proof}

Geometrically, $R$ marks the convergence boundary: polynomial interpolation at nodes $t > R$ extrapolates beyond the analytic domain, causing divergence.

\begin{corollary}
For equal-weight two-point spectra with $\kappa > e^{\pi/2} \approx 4.81$, we have $R < 2$, so the Taylor series diverges at $t = 2$.
\end{corollary}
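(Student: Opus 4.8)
The plan is to derive the corollary directly from Proposition~\ref{prop:taylorradius}. For an equal-weight two-point spectrum with $p = 1/2$, the proposition gives the explicit radius $R = \pi/\log\kappa$. The claim has two parts: first that $\kappa > e^{\pi/2}$ forces $R < 2$, and second that $R < 2$ implies divergence of the Taylor series of $K$ at $t = 2$.

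First I would handle the radius inequality. Starting from $R = \pi/\log\kappa$, the condition $R < 2$ is equivalent to $\pi/\log\kappa < 2$, i.e.\ $\log\kappa > \pi/2$, i.e.\ $\kappa > e^{\pi/2}$. A quick numerical check gives $e^{\pi/2} \approx 4.810$, matching the stated constant. This is a one-line monotonicity argument: $\log$ is increasing, so the chain of equivalences is immediate.

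Second I would argue the divergence. The Taylor series of $K(t)$ about $t = 0$ has radius of convergence exactly $R$ (Proposition~\ref{prop:taylorradius} locates the nearest singularity of $K$, namely a zero of $M(t)$, at distance $R$ in the complex plane). By the standard Cauchy--Hadamard characterization, a power series diverges at every point strictly outside its disk of convergence. Since $R < 2$, the real point $t = 2$ lies outside the disk $|t| < R$, so the Taylor series diverges there. (One should note the mild subtlety that divergence on the boundary $|t| = R$ is not guaranteed, but here $2 > R$ strictly, so we are safely in the exterior; the strict inequality $\kappa > e^{\pi/2}$ is exactly what buys this.)

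There is essentially no hard step here --- the corollary is a specialization and a reminder of elementary complex-analytic facts. If anything deserves care, it is making explicit that the singularity at distance $R$ genuinely limits the radius (rather than merely bounding it above): this follows because $M(t) = \tfrac12 x_1^t + \tfrac12 x_2^t$ is entire and nonvanishing inside $|t| < R$, so $K = \log M$ is analytic there, giving radius $\ge R$; combined with the singularity at distance $R$, the radius is exactly $R$. I would state this in one sentence and cite the proposition for the singularity location.
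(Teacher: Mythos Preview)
Your proposal is correct and follows the same (indeed the only natural) route as the paper, which states the corollary without proof as an immediate specialization of Proposition~\ref{prop:taylorradius}: set $p=1/2$ to get $R=\pi/\log\kappa$, invert $R<2$, and invoke divergence outside the disk of convergence. Your additional remark that $R$ is the exact radius (because $M$ is entire and nonvanishing on $|t|<R$) is a nice rigor point the paper leaves implicit.
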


\begin{figure}[t]
\centering
\includegraphics[width=0.85\columnwidth]{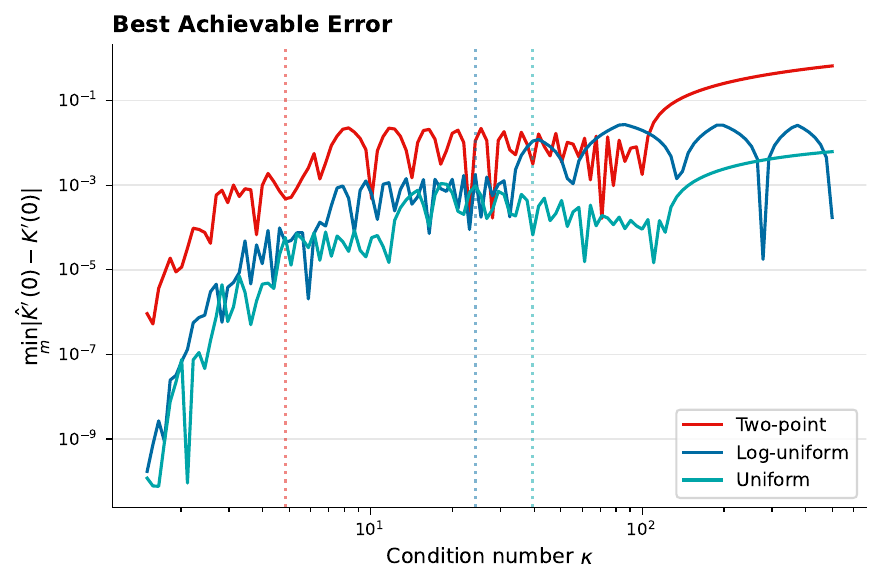}
\caption{Best achievable error in log-geometric-mean estimation.
For each condition number $\kappa$, we compute the $k_{0:m}$ estimator
for $m = 2, \ldots, 20$ and report the minimum absolute error
$|\hat{K}'(0) - K'(0)|$ over all $m$.
Two-point spectra (red) have the smallest Taylor radius
$R = \pi/\log\kappa$ and exhibit the highest error floor;
log-uniform spectra (blue) have $R = 2\pi/\log\kappa$;
uniform spectra (cyan) have the largest $R$ and lowest floor.
Vertical dotted lines mark $\kappa$ thresholds where $R$ crosses integer values.
Even with oracle choice of $m$, spectrum type determines fundamental accuracy limits.}
\label{fig:taylor-radius-best}
\end{figure}

\subsection{Practical Lesson}

No estimator works for every spectrum. Ask instead: which spectra are safe? \emph{Smooth} spectra (geometric, uniform) have large Taylor radius and permit accurate interpolation. \emph{Outlier-dominated} spectra (two-point, bimodal) have small Taylor radius and cause failure (Figure~\ref{fig:taylor-radius-best}). These impossibility results are not curiosities---they imply that any point estimator needs backup. Section~\ref{sec:bounds} provides such certificates: bounds that remain valid even when point estimates fail, with interval width signaling reliability.


\section{Guaranteed Bounds}
\label{sec:bounds}

The impossibility results of Section~\ref{sec:impossibility} are sobering: no
point estimator can uniformly succeed across all spectra. But impossibility
results describe worst cases; they do not preclude useful guarantees. This raises
a practical question: \emph{given that point estimates may fail, what can we still
certify?}

The answer is \emph{bounds}. Unlike point estimates, which may be arbitrarily wrong
for adversarial spectra, bounds provide deterministic guarantees: the true geometric
mean lies within $[L_k(r), U_k]$ for \emph{any} spectrum consistent with observed
moments.

Classical bounds exist---Maclaurin's inequality, Newton's inequalities, and Rodin's two-point bound (Section~\ref{sec:related-classical-bounds})---but they either require structure we lack or leave room for improvement.
Our contribution is twofold: (i) \emph{new bounds} that exploit log-concavity (last-slope) and moment constraints (feasible-set optimization), and (ii) a \emph{measure-theoretic framework} revealing that optimal bounds are achieved by $(k+1)$-atom distributions.
This second insight is the key: it reduces bound computation from an $n$-dimensional search over eigenvalues to a fixed $(2k+2)$-dimensional optimization---certification cost is independent of matrix size.

Narrow intervals pin down the answer; wide intervals diagnose unreliability.
The decision rule is simple: clip the estimate to $[L, U]$ if outside, and always
report both the estimate and interval (Section~\ref{sec:bounds-diagnostic}).

\subsection{Last-Slope Bound}
\label{sec:bounds-lastslope}

Maclaurin's inequality~\eqref{eq:maclaurin} gives $\GM \leq E_m^{1/m}$.
We improve this by exploiting log-concavity of the $E_j$ sequence,
guaranteed by Newton's inequalities~\eqref{eq:newton-ineq}, which imply $E_{j-1} E_{j+1} \le E_j^2$.

Define $g(j) = \log E_j$ and slopes $s_j = g(j) - g(j-1)$.
Log-concavity implies $s_j$ is nonincreasing in $j$.

\begin{theorem}[Last-slope bound]
\label{thm:last-slope}
For $m < n$:
\begin{equation}
\boxed{U_{\mathrm{LS},m} = \left[ E_m \left( \frac{E_m}{E_{m-1}} \right)^{n-m} \right]^{1/n}}
\end{equation}
\end{theorem}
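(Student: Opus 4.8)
The plan is to bound $\GM = E_n^{1/n}$ by telescoping $\log E_n$ along the slope sequence and using the fact, already recorded in the text, that the slopes $s_j = \log E_j - \log E_{j-1}$ are nonincreasing (a consequence of Newton's inequalities~\eqref{eq:newton-ineq}). First I would write
\[
\log E_n = \log E_m + \sum_{j=m+1}^{n} s_j .
\]
Since $s_j \le s_m$ for every $j \ge m+1$ by monotonicity of the slopes, each of the $n-m$ terms in the sum is at most $s_m = \log E_m - \log E_{m-1} = \log(E_m/E_{m-1})$. Hence
\[
\log E_n \le \log E_m + (n-m)\log\!\left(\frac{E_m}{E_{m-1}}\right),
\]
which, after exponentiating and taking $n$-th roots, is exactly
\[
\GM = E_n^{1/n} \le \left[ E_m \left( \frac{E_m}{E_{m-1}} \right)^{n-m} \right]^{1/n} = U_{\mathrm{LS},m}.
\]

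The remaining points to nail down are mild. One must check that all $E_j$ with $j \le n$ are strictly positive so that the logarithms are defined and the slopes are finite: this holds because $A$ is SPD, so every elementary symmetric polynomial $e_j$ of its eigenvalues is positive, hence $E_j = e_j/\binom{n}{j} > 0$. One also needs the slope-monotonicity to apply on the full range $j = m, m+1, \ldots, n$; Newton's inequalities~\eqref{eq:newton-ineq} are stated for $k = 1, \ldots, n-1$, which gives $E_{j-1}E_{j+1} \le E_j^2$ for $j = 1, \ldots, n-1$, i.e. $s_{j+1} \le s_j$ for $j$ in that range — precisely the indices we use. The hypothesis $m < n$ guarantees the sum $\sum_{j=m+1}^n$ is nonempty (and that $E_{m-1}, E_m, E_n$ all make sense with $m \ge 1$).

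I expect no serious obstacle here: the only thing to be careful about is the indexing — confirming that "nonincreasing slopes" is the right direction (so that replacing $s_j$ by the \emph{earlier}, hence \emph{larger}, slope $s_m$ yields an \emph{upper} bound on $\log E_n$) and that the telescoping range lines up with the range on which Newton's inequalities are valid. A one-line remark comparing $U_{\mathrm{LS},m}$ with the plain Maclaurin bound $E_m^{1/m}$ would make the "improvement" claim precise: the last-slope bound replaces the implicit assumption $s_j = \tfrac{1}{m}\log E_m$ (average slope up to $m$) for $j > m$ by the sharper $s_j \le s_m \le \tfrac{1}{m}\log E_m$, the final inequality again being slope-monotonicity averaged over $j = 1,\ldots,m$, so $U_{\mathrm{LS},m} \le E_m^{1/m}$ whenever $E_m \ge 1$ (the normalized regime $x_i = \lambda_i/\AM$, where $E_1 = 1$ and log-concavity forces $E_m \le 1$ — so in fact one should state the comparison in the un-normalized variables or simply note both bounds hold and $U_{\mathrm{LS},m}$ uses one extra moment $E_{m-1}$ to tighten the tail extrapolation).
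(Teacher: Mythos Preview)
Your proof is correct and matches the paper's argument essentially line for line: telescope $\log E_n = \log E_m + \sum_{j=m+1}^n s_j$, bound each $s_j$ by $s_m$ via Newton's log-concavity, exponentiate, and take the $n$-th root. Your added checks on positivity and index ranges are sound; the closing side remark comparing $U_{\mathrm{LS},m}$ to $E_m^{1/m}$ gets a bit tangled (the paper states the dominance only conditionally, as $E_m/E_{m-1} < E_m^{1/m}$), but that is extraneous to the proof itself.
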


\begin{proof}
Since $s_j \leq s_m$ for $j \geq m$:
$g(n) = g(m) + \sum_{j=m+1}^{n} s_j \leq g(m) + (n-m) s_m$.
Exponentiating: $E_n \leq E_m (E_m/E_{m-1})^{n-m}$.
Taking the $n$-th root and using $\GM(x) = E_n^{1/n}$ yields the bound.
\end{proof}

\begin{remark}[Dominance over Maclaurin]
The last-slope bound $U_{\mathrm{LS},m}$ is strictly tighter than Maclaurin's $E_m^{1/m}$ whenever
$E_m/E_{m-1} < E_m^{1/m}$, which holds for most spectra. The improvement comes from
extrapolating the log-concave decay rather than truncating at $E_m$.
However, the optimization-based bounds $U_k$ (Section~\ref{sec:bounds-compact}) are typically tighter still; last-slope's value is computational simplicity (closed-form, no optimization).
\end{remark}

\paragraph{Combined closed-form bound.}
Taking the minimum of all available bounds (Maclaurin~\eqref{eq:maclaurin},
Rodin~\eqref{eq:rodin}, and last-slope):
\begin{equation}
\label{eq:combined-bound}
\boxed{U_{\mathrm{closed}} = \min\left\{ U_{\mathrm{Rodin}}, \; E_4^{1/4}, \; U_{\mathrm{LS},4} \right\}}
\end{equation}

\subsection{Feasible Sets and Bound Definitions}
\label{sec:bounds-defs}

We work with normalized eigenvalues $x_i = \lambda_i/\AM$ satisfying $M_1 = 1$.
Bounds on $\GM(x) = (\prod_i x_i)^{1/n}$ yield bounds on $\det(A)^{1/n} = \AM \cdot \GM(x)$.
The power sums $q_j = \sum_i x_i^j = n \cdot M_j$ determine the elementary symmetric
polynomials $e_k$ via Newton's identities (Section~\ref{sec:related}).

To formalize these bounds computationally for general $k$, we define the sets of spectra consistent with observed moments.

We define both the exact finite-$n$ feasible set $\mathcal{F}_{n,k}$ and its continuous relaxation $\mathcal{F}_k$.

\begin{definition}[Feasible sets]
\label{def:feasible-sets}
For normalized moments $M_1 = 1, M_2, \ldots, M_k$, the \emph{finite-$n$ feasible set} is:
\[
\mathcal{F}_{n,k} = \left\{ x \in (0,\infty)^n : \frac{1}{n}\sum_{t=1}^n x_t^j = M_j, \; j=1,\ldots,k \right\}
\]
The \emph{measure relaxation} is the set of probability measures on $(0,\infty)$ matching these moments:
\[
\mathcal{F}_k = \left\{ \mu : \int x^j \, d\mu = M_j, \; j=1,\ldots,k \right\}
\]
For lower bounds with a minimum eigenvalue constraint $r > 0$, we define the restricted set $\mathcal{F}_k(r) = \{\mu \in \mathcal{F}_k : \mathrm{supp}(\mu) \subseteq [r,\infty)\}$.
\end{definition}

\begin{definition}[$k$-Trace bounds]
\label{def:ktrace-bounds}
\begin{align}
U_{n,k} &= \max_{x \in \mathcal{F}_{n,k}} \left(\prod_{t=1}^n x_t\right)^{1/n}
    & \text{(finite-}n\text{ upper bound)} \\
L_{n,k}(r) &= \min_{x \in \mathcal{F}_{n,k} \cap [r,\infty)^n} \left(\prod_{t=1}^n x_t\right)^{1/n}
    & \text{(finite-}n\text{ lower bound)} \\
U_k &= \sup_{\mu \in \mathcal{F}_k} \exp\left(\int \log x \, d\mu\right)
    & \text{(relaxed upper bound)} \\
L_k(r) &= \inf_{\mu \in \mathcal{F}_k(r)} \exp\left(\int \log x \, d\mu\right)
    & \text{(relaxed lower bound)}
\end{align}
\end{definition}

\subsection{Compactness and Finite Support}
\label{sec:bounds-compact}

One might attempt to find extremal distributions by optimizing over all $n$ eigenvalues---a problem with $O(n)$ variables.
However, since we only constrain $k$ moments, the problem allows for a massive complexity reduction.
We show that optimal measures have at most $k+1$ atoms, collapsing the search space from $\R^n$ to $\R^{2(k+1)}$ (atom locations and weights).
For example, if $n = 10^6$ and $k = 4$, the optimization involves just 10 variables rather than $10^6$.

\begin{theorem}[Compactness and finite support]
\label{thm:compact-support}
Assume $M_k < \infty$ for some $k \geq 1$.
\begin{enumerate}
\item The set $\mathcal{F}_k$ is convex and compact in the weak topology.
\item Every extreme point of $\mathcal{F}_k$ is a discrete measure with
      \textbf{at most $k+1$ atoms}.
\item Consequently, $U_k$ and $L_k(r)$ admit optimizers supported on at most
      $k+1$ points.
\end{enumerate}
\end{theorem}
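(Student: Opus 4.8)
The plan is to deduce the three claims in sequence, treating the moment map $\mu \mapsto \bigl(\int x\,d\mu, \ldots, \int x^k\,d\mu\bigr)$ as the central object and exploiting the standard machinery of the Hausdorff/Stieltjes moment problem together with a Krein--Milman argument. First I would establish part (1). Convexity of $\mathcal{F}_k$ is immediate: the moment constraints $\int x^j\,d\mu = M_j$ are linear in $\mu$, so any convex combination of feasible measures is feasible. For compactness in the weak-$*$ topology, the key observation is that a single finite moment of positive order controls the tails: $M_k < \infty$ together with $M_k$ \emph{fixed} forces uniform tightness of the family, since for any $R>0$ Markov's inequality gives $\mu([R,\infty)) \le M_k / R^k$ uniformly over $\mathcal{F}_k$. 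Tightness at $0$ is automatic because all measures live on $(0,\infty)$ and $M_1 = 1$ bounds mass escaping to the origin (more precisely, one can use that $\int x\,d\mu = 1$ to prevent concentration at $0$ from carrying positive limiting mass there in a way that violates the first-moment constraint — this needs a small argument since $0$ is not in the space, and is the one place I would be careful). Prokhorov's theorem then gives relative compactness; closedness follows because the moment functionals $\mu \mapsto \int x^j\,d\mu$ are weak-$*$ continuous on this tight family (again using the uniform integrability supplied by the fixed higher moment $M_k$, which dominates $x^j$ for $j \le k$ after a truncation argument).

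Next, part (2). Since $\mathcal{F}_k$ is convex and compact, Krein--Milman guarantees it has extreme points; I need to show each is finitely supported with at most $k+1$ atoms. The standard argument: suppose $\mu$ is feasible with support containing at least $k+2$ distinct points $x_0, \ldots, x_{k+1}$ (or more generally with ``too large'' support). Then the system requiring a signed measure $\nu$ supported on those points to satisfy $\int x^j\,d\nu = 0$ for $j = 0, 1, \ldots, k$ is $k+1$ linear equations in $\ge k+2$ unknowns, hence has a nonzero solution $\nu$ with $\nu((0,\infty)) = 0$. For $\varepsilon$ small, $\mu \pm \varepsilon\nu$ are both nonnegative (absolute continuity of $\nu$ with respect to the atomic part of $\mu$, rescaled) and both lie in $\mathcal{F}_k$ — the $j=0$ equation preserves total mass $1$ and the $j=1,\ldots,k$ equations preserve $M_1, \ldots, M_k$. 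Hence $\mu = \tfrac12\bigl[(\mu + \varepsilon\nu) + (\mu - \varepsilon\nu)\bigr]$ is not extreme. If the support is infinite one truncates to a large finite subset and runs the same perturbation; continuity of measures (approximating a continuous part by atoms) handles the general case. Therefore every extreme point has at most $k+1$ atoms.

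Finally, part (3). The objective $\mu \mapsto \int \log x\,d\mu$ is \emph{affine} in $\mu$, so $\exp\bigl(\int \log x\,d\mu\bigr)$ is log-affine, and maximizing (or minimizing) an affine functional over a compact convex set is attained at an extreme point by the Bauer maximum principle. One subtlety: $\log x$ is unbounded both as $x \to 0$ and $x \to \infty$, so $\int \log x\,d\mu$ could in principle be $-\infty$ for some feasible $\mu$; but for the supremum $U_k$ this only helps (such measures are not maximizers), and the supremum is still attained at an extreme point provided the functional is upper semicontinuous, which it is on the tight family (by Fatou, after splitting $\log x = \log^+ x - \log^- x$ and noting $\log^+ x \le x \le$ integrable majorant). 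For the constrained lower bound $L_k(r)$ the support restriction $\mathrm{supp}(\mu) \subseteq [r,\infty)$ is itself a weak-$*$ closed convex condition, so $\mathcal{F}_k(r)$ remains convex and compact, its extreme points are again $\le k+1$-atom measures (by the same perturbation argument, with perturbing atoms confined to $[r,\infty)$), and on $[r,\infty)$ the function $\log x \ge \log r$ is bounded below so $\int \log x\,d\mu$ is genuinely continuous and the infimum is attained at an extreme point.

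The main obstacle I expect is the delicate boundary behavior at $x = 0$ in part (1): because the state space $(0,\infty)$ is not compact and $0$ is an excluded limit point, a weak-$*$ limit of feasible measures could a priori lose mass ``into'' $0$, breaking both tightness and the first-moment constraint. The resolution is that the fixed first moment $M_1 = 1$ and a fixed higher moment together pin the measures away from degenerating: one shows that if $\mu_\ell \rightharpoonup \mu$ with mass $\epsilon$ escaping toward $0$, then $\int x\,d\mu \le 1 - \epsilon \cdot 0 = 1$ is automatically fine but the \emph{reverse} — that no mass actually escapes — follows from testing against $x \mapsto \min(x,1)/x$-type functions, or more cleanly by working on the one-point compactification and arguing the limit assigns zero mass to $\{0\}$ because otherwise some $M_j$ with $j \ge 1$ would strictly drop. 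I would state this as a short lemma (uniform integrability of $\{x^j\}_{j\le k}$ on $\mathcal{F}_k$) and reference the classical Stieltjes moment problem literature (\citet{shohat1943}, \citet{akhiezer1965}) for the finite-support characterization of extreme points, since that part is entirely standard.
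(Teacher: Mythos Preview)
Your proposal is correct and follows the same skeleton as the paper's proof: tightness via Markov's inequality on the $k$-th moment, Prokhorov plus uniform integrability for compactness, a finite-support characterization of extreme points, and attainment of the affine objective at an extreme point. The paper dispatches part~(ii) by citing Richter's theorem on moment spaces~\citep{richter1957}, whereas you give the underlying perturbation argument directly (build a signed $\nu$ in the kernel of the $(k+1)\times(k+2)$ moment matrix and wiggle $\mu \pm \varepsilon\nu$); this is exactly the proof of Richter's result, so your version is more self-contained but not a different route. You are also more careful than the paper about two points the paper glosses over: possible mass escape toward $x=0$ in the weak limit, and upper semicontinuity of $\mu \mapsto \int \log x\,d\mu$ for the supremum $U_k$. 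Both concerns are legitimate and your treatment of them is sound; the paper simply asserts closedness via uniform integrability and existence via compactness without dwelling on the unbounded-below behavior of $\log$ near $0$.
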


\begin{proof}
\emph{(i) Tightness:} For any $T > 0$, Markov's inequality gives
$\mu([T,\infty)) \leq M_k/T^k$
uniformly over $\mu \in \mathcal{F}_k$. Hence $\mathcal{F}_k$ is tight.

\emph{Closedness:} If $\mu_n \Rightarrow \mu$ weakly and $\sup_n \int x^k \, d\mu_n < \infty$,
then by uniform integrability, $\int x^j \, d\mu_n \to \int x^j \, d\mu$ for $j \leq k$.
By Prokhorov's theorem, tight $+$ closed implies compact.

\emph{(ii) Extreme points:} We have $k+1$ linear constraints: $\int 1 \, d\mu = 1$
(normalization) and $\int x^j \, d\mu = M_j$ for $j = 1, \ldots, k$. By Richter's
theorem on moment spaces~\citep{richter1957}, extreme points have at most $k+1$ atoms.

\emph{(iii) Existence:} The objective $\mu \mapsto \int \log x \, d\mu$ is linear.
Existence follows from compactness; for $L_k(r)$ with $r > 0$, $\log x$ is bounded
below on $[r,\infty)$.
\end{proof}

\begin{corollary}[Monotonicity]
\label{cor:monotonicity}
$U_{k+1} \leq U_k$ and $L_{k+1}(r) \geq L_k(r)$.
Additional constraints shrink feasible sets.
\end{corollary}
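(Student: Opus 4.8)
The plan is to reduce the claim to the elementary fact that a supremum (respectively, infimum) is monotone under set inclusion, once we observe that imposing one more moment constraint can only shrink the feasible set. First I would verify the inclusions $\mathcal{F}_{k+1} \subseteq \mathcal{F}_k$ and $\mathcal{F}_{k+1}(r) \subseteq \mathcal{F}_k(r)$: by Definition~\ref{def:feasible-sets}, a probability measure $\mu$ lies in $\mathcal{F}_{k+1}$ iff $\int x^j\,d\mu = M_j$ for $j = 1, \ldots, k+1$, and simply discarding the $j = k+1$ equation exhibits $\mu \in \mathcal{F}_k$; the identical argument with the support restriction $\mathrm{supp}(\mu) \subseteq [r,\infty)$ left intact gives the restricted inclusion.

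Next I would push these inclusions through the two optimizations in Definition~\ref{def:ktrace-bounds}. Since $U_k = \sup_{\mu \in \mathcal{F}_k} \exp(\int \log x\,d\mu)$ while $U_{k+1}$ is the same objective's supremum over the \emph{subset} $\mathcal{F}_{k+1}$, we get $U_{k+1} \leq U_k$ at once; symmetrically, $L_k(r) = \inf_{\mu \in \mathcal{F}_k(r)} \exp(\int \log x\,d\mu)$ and $L_{k+1}(r)$ is the infimum over the subset $\mathcal{F}_{k+1}(r)$, so $L_{k+1}(r) \geq L_k(r)$. No fresh compactness or existence argument is required: Theorem~\ref{thm:compact-support} already guarantees the relevant optimizers exist, and monotonicity of $\sup$ and $\inf$ under inclusion is independent of that.

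The only point meriting a sentence of care is that the objective $\mu \mapsto \exp(\int \log x\,d\mu)$ be well-defined on each feasible set, since $\log x$ is unbounded below near the origin. For the lower bounds this is automatic because membership in $\mathcal{F}_k(r)$ forces $\mathrm{supp}(\mu)\subseteq[r,\infty)$, on which $\log x \geq \log r > -\infty$; for the upper bounds the supremum is over nonnegative values and is finite by Theorem~\ref{thm:compact-support}, so $U_{k+1} \leq U_k$ is a genuine inequality between finite numbers. I do not anticipate any real obstacle here—the corollary is essentially the formalization of ``more constraints, smaller feasible set''—so the substantive content is merely stating the inclusions precisely and citing monotonicity of extrema.
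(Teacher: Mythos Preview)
Your proposal is correct and matches the paper's own justification, which is nothing more than the one-line remark ``Additional constraints shrink feasible sets'' embedded in the corollary statement itself. You have simply spelled out in detail what the paper leaves implicit: the inclusions $\mathcal{F}_{k+1}\subseteq\mathcal{F}_k$ and $\mathcal{F}_{k+1}(r)\subseteq\mathcal{F}_k(r)$, followed by monotonicity of $\sup$/$\inf$ under set inclusion.
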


\begin{remark}[Computational complexity]
\label{rem:complexity}
The $(k+1)$-atom structure implies $O(k^3)$ complexity for solving the bounds optimization (Definition~\ref{def:ktrace-bounds}), independent of matrix dimension $n$.
This arises from measure-theoretic compactness: the space of distributions matching $k$ moments is weak-* compact, and extreme points are finitely supported.
Thus certification cost scales with the number of available traces, not the matrix size.
\end{remark}

\subsection{Relaxation Validity}
\label{sec:bounds-validity}

\begin{theorem}[Relaxation provides valid bounds]
\label{thm:relaxation-valid}
For any $n$ and any feasible spectrum $x \in \mathcal{F}_{n,k}$:
\begin{enumerate}
\item $\GM(x) \leq U_{n,k} \leq U_k$ \quad (upper bound valid)
\item If $x \in [r,\infty)^n$: $L_k(r) \leq L_{n,k}(r) \leq \GM(x)$ \quad (lower bound valid)
\end{enumerate}
\end{theorem}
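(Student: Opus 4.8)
The plan is to show that the finite-$n$ problems embed into the measure relaxations via the empirical measure, so the relaxed extrema sandwich the finite-$n$ ones, which in turn sandwich any particular feasible spectrum. First I would fix a feasible spectrum $x \in \mathcal{F}_{n,k}$ and form its empirical measure $\mu_x = \frac{1}{n}\sum_{t=1}^n \delta_{x_t}$. By definition of $\mathcal{F}_{n,k}$, the moment constraints $\frac{1}{n}\sum_t x_t^j = M_j$ for $j = 1, \ldots, k$ say exactly that $\int x^j \, d\mu_x = M_j$, and $\int 1 \, d\mu_x = 1$ automatically; hence $\mu_x \in \mathcal{F}_k$. Moreover $\int \log x \, d\mu_x = \frac{1}{n}\sum_t \log x_t = \log \GM(x)$, so $\exp(\int \log x \, d\mu_x) = \GM(x)$. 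This single observation does most of the work.

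For part~(1): since $\mu_x \in \mathcal{F}_k$, the definition of $U_k$ as a supremum over $\mathcal{F}_k$ gives $\GM(x) = \exp(\int \log x \, d\mu_x) \le U_k$. For the sharper chain, note $U_{n,k}$ is by definition the maximum of $\GM(\cdot)$ over $x \in \mathcal{F}_{n,k}$, so trivially $\GM(x) \le U_{n,k}$; and any maximizer $x^\star \in \mathcal{F}_{n,k}$ yields $\mu_{x^\star} \in \mathcal{F}_k$ with $\exp(\int \log x \, d\mu_{x^\star}) = \GM(x^\star) = U_{n,k}$, whence $U_{n,k} \le U_k$. (If one wants to be careful about whether the finite-$n$ max is attained, Theorem~\ref{thm:compact-support} guarantees the relaxed sup $U_k$ is attained, and one only needs the inequality $U_{n,k} \le U_k$, which follows pointwise: every $x \in \mathcal{F}_{n,k}$ has $\GM(x) = \exp(\int \log x \, d\mu_x) \le U_k$, so the supremum over such $x$ is $\le U_k$.) Part~(2) is the mirror image: if additionally $x \in [r,\infty)^n$ then $\mathrm{supp}(\mu_x) \subseteq [r,\infty)$, so $\mu_x \in \mathcal{F}_k(r)$, and since $L_k(r)$ is an infimum over $\mathcal{F}_k(r)$ we get $L_k(r) \le \exp(\int \log x \, d\mu_x) = \GM(x)$; the intermediate $L_k(r) \le L_{n,k}(r) \le \GM(x)$ follows identically, using that $L_{n,k}(r)$ is the minimum of $\GM$ over $\mathcal{F}_{n,k} \cap [r,\infty)^n$ and that the empirical measure of any minimizer lies in $\mathcal{F}_k(r)$.

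There is essentially no hard step here; this is a relaxation-is-a-lower/upper-bound argument. The only point requiring a line of care is the implicit claim that $U_k$ and $L_k(r)$ are finite and well-defined (so the inequalities are not vacuous): finiteness of $U_k$ follows because $\log x \le x$ on $(0,\infty)$ gives $\int \log x \, d\mu \le \int x \, d\mu = M_1 = 1$ for every $\mu \in \mathcal{F}_k$, so $U_k \le e$; finiteness of $L_k(r)$ (i.e.\ $L_k(r) > 0$) follows because $\log x \ge \log r$ on $[r,\infty)$, so $\int \log x \, d\mu \ge \log r$ and $L_k(r) \ge r > 0$. These bounds also show the objectives are integrable against every measure in the respective feasible sets, so the expressions in Definition~\ref{def:ktrace-bounds} make sense. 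With that dispatched, the theorem is immediate from the embedding $x \mapsto \mu_x$.
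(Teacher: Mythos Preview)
Your proposal is correct and follows exactly the same approach as the paper: embed the finite-$n$ spectrum via its empirical measure $\mu_x = \tfrac{1}{n}\sum_t \delta_{x_t}$ into $\mathcal{F}_k$ (respectively $\mathcal{F}_k(r)$), observe that $\exp(\int \log x\,d\mu_x) = \GM(x)$, and read off the inequalities from the definitions of the sup and inf. Your additional care about finiteness of $U_k$ and $L_k(r)$ and about attainment of the finite-$n$ extrema is more than the paper provides, but the core argument is identical.
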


\begin{proof}
The empirical measure $\mu_x = \frac{1}{n}\sum_{t=1}^n \delta_{x_t}$ lies in
$\mathcal{F}_k$ and satisfies
$\exp(\int \log x \, d\mu_x) = (\prod_{t=1}^n x_t)^{1/n} = \GM(x)$.
Taking supremum over $\mathcal{F}_k$ gives $\GM(x) \leq U_k$. The inequalities
$U_{n,k} \leq U_k$ and $L_k(r) \leq L_{n,k}(r)$ follow since finite-$n$ empirical
measures are feasible for the relaxation.
\end{proof}

\begin{corollary}[Quantization gap under bounded support]
\label{cor:quantization}
If the optimizer of $U_k$ has support in $[r, R]$ for some $0 < r \leq R < \infty$, then
$0 \leq U_k - U_{n,k} \leq C(r,R,k)/n$
for a constant depending on $\max\{|\log r|, |\log R|\}$.
\end{corollary}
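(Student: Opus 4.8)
The nonnegativity $U_k - U_{n,k}\ge 0$ is immediate from Theorem~\ref{thm:relaxation-valid}(1), since finite-$n$ empirical measures are admissible for the relaxation. For the other inequality the plan is to manufacture a genuine member $x^{(n)}\in\mathcal F_{n,k}$ whose geometric mean differs from $U_k$ by only $O(1/n)$, so that $U_k-U_{n,k}\le U_k-\GM(x^{(n)})=O(1/n)$. The construction has two steps: quantize the extremal measure for $U_k$ to an $n$-atom spectrum, then repair the moment mismatch introduced by rounding.

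\emph{Step 1 (quantize).} By Theorem~\ref{thm:compact-support}, $U_k$ is attained by a discrete measure $\mu^\star=\sum_{j=1}^{\ell}w_j\delta_{\xi_j}$ with $\ell\le k+1$ atoms, and by hypothesis all $\xi_j\in[r,R]$, so $\log U_k=\sum_j w_j\log\xi_j$. If $\ell<k+1$ I append $k+1-\ell$ further distinct points $\xi_{\ell+1},\dots,\xi_{k+1}$ (their sole purpose is to give the moment map full rank below) with nominal weight $0$. Choose integer multiplicities with $n_j=1$ for $j>\ell$ and $n_j=nw_j+O(1)$ for $j\le\ell$, summing to $n$; then $n_j/n\to w_j$ for $j\le\ell$. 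Let $\hat x$ list $\xi_j$ with multiplicity $n_j$. Its power sums obey $\big|\sum_t\hat x_t^{\,i}-nM_i\big|=\big|\sum_j(n_j-nw_j)\xi_j^{\,i}\big|\le(k+1)R^k$ for $i=1,\dots,k$ — an $O(1)$ defect, independent of $n$ — while $\tfrac1n\sum_t\log\hat x_t=\sum_j\tfrac{n_j}{n}\log\xi_j=\log U_k+O\big(\tfrac1n\max\{|\log r|,|\log R|\}\big)$, using $|n_j/n-w_j|=O(1/n)$ and $|\log\xi_j|\le\max\{|\log r|,|\log R|\}$.

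\emph{Step 2 (repair).} Keeping multiplicities fixed, perturb locations $\xi_j\mapsto\xi_j+\beta_j$ and solve the $k$ equations $\sum_j n_j(\xi_j+\beta_j)^i=nM_i$. The Jacobian at $\beta=0$ is $\operatorname{diag}(1,\dots,k)\,V\,\operatorname{diag}(n_1,\dots,n_{k+1})$ with $V$ the Vandermonde in the distinct nodes $\xi_1,\dots,\xi_{k+1}$, hence of full row rank $k$; of its $k$ nonzero singular values, $\ell$ are $\Theta(n)$ (principal blocks) and $k-\ell$ are $\Theta(1)$ (auxiliary blocks), with constants governed by $r,R,k,\min_{j\le\ell}w_j$ and the conditioning of $V$. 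I solve in two stages: the $k+1-\ell$ auxiliary locations absorb the component of the $O(1)$ defect transverse to the principal-block span — an $O(1)$ adjustment, kept inside a fixed compact subinterval of $(0,\infty)$, with positivity ensured by placing the auxiliary nodes large enough at the outset — and the $\ell$ principal locations then absorb the residual, which now lies in their span, with moves of size $O(1/n)$; an inverse-function (Newton--Kantorovich) estimate promotes the linearized solve to a genuine one. The resulting $x^{(n)}\in\mathcal F_{n,k}$ satisfies
\[
\tfrac1n\sum_t\log x^{(n)}_t=\sum_j\tfrac{n_j}{n}\log(\xi_j+\beta_j)=\sum_j\tfrac{n_j}{n}\log\xi_j+O\big(\tfrac1n\big)=\log U_k+O\big(\tfrac1n\big),
\]
where principal moves contribute $\sum_{j\le\ell}\tfrac{n_j}{n}\cdot O(1/(nr))=O(1/n)$, auxiliary moves contribute $\sum_{j>\ell}\tfrac1n\cdot O(1)=O(1/n)$, and the rounding term contributes $O\big((k+1)\max\{|\log r|,|\log R|\}/n\big)$. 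Exponentiating and using $U_k\le R$ gives $\GM(x^{(n)})\ge U_k e^{-C/n}\ge U_k-C'/n$, which finishes the proof, with $C'=C'(r,R,k)$ of the stated type.

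\emph{Main obstacle.} The crux is Step 2: showing the feasibility repair costs only $O(1/n)$ in the objective. The danger is a degenerate moment-to-spectrum map — precisely what occurs when the extremal measure has fewer than $k+1$ atoms, so the quantized spectrum has too few distinct values and no small correction exists. Adding a bounded number of well-separated auxiliary atoms restores full rank while costing only $O(1/n)$ in $\GM$, because those atoms carry only $O(1)$ of the $n$ eigenvalues; the price is that they must move by $O(1)$ rather than $O(1/n)$ to kill the rounding defect, which is still cheap. The remaining technical points — checking these $O(1)$ moves keep all eigenvalues positive, and upgrading the auxiliary/principal split of the linearized system to an exact nonlinear solution — are routine but necessary.
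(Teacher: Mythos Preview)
The paper states this corollary without proof, so there is no reference argument to compare against; I can only assess yours on its merits.

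Your quantize-then-repair strategy is the natural approach, and the nonnegativity, Step~1, and the final bookkeeping (conversion from the logarithmic gap to the additive one via $r\le U_k\le R$) are all clean. In the generic case $\ell=k+1$ with every $w_j>0$, each multiplicity is $\Theta(n)$, the $k\times(k+1)$ Vandermonde-type Jacobian has all $k$ nonzero singular values of order $n$, and a single Newton--Kantorovich step yields a genuine solution with every $\beta_j=O(1/n)$; no two-stage split is needed and the argument is complete.

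The soft spot is exactly where you flag it, but it is less routine than you suggest. In the degenerate case $\ell<k+1$, your Stage~A asks $k+1-\ell$ auxiliary atoms, each of weight $1/n$, to absorb an $O(1)$ transverse defect by moving $O(1)$ distances. This is \emph{not} a small-perturbation problem, so neither the linearization nor the Newton--Kantorovich estimate you invoke for Stage~B applies here; you assert solvability (``kept inside a fixed compact subinterval'') rather than prove it. What is actually required is an existence argument for $k-\ell$ polynomial equations of degree $\le k$ in $k+1-\ell$ positive unknowns. One clean repair: rather than placing auxiliary atoms first and then moving them, choose their positions \emph{ab initio} so that the quantized spectrum's defect already lies in the principal-column span --- you have one more unknown than equations, so a one-parameter family is expected, and positivity can be arranged by parametrizing along the moment curve $t\mapsto(t,t^2,\dots,t^k)$ for large $t$. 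Once Stage~A is genuinely established, your Stage~B and the $O(1/n)$ objective accounting (principal moves contribute $O(1/n)$ via Lipschitz $\log$ on $[r,R]$, auxiliary atoms contribute $O(1)$ each but with weight $1/n$) go through as written, and the dependence on $\max\{|\log r|,|\log R|\}$ emerges exactly as you describe.
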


\subsection{Lower Bounds with Eigenvalue Constraints}
\label{sec:bounds-lower}

The upper bound $U_k$ uses only moment information. A lower bound requires
additional structure: a floor $r \leq \lambda_{\min}/\AM$ on the smallest
normalized eigenvalue.

\paragraph{When is $r$ available?}
\begin{itemize}
\item \emph{Regularized problems:} $A + \sigma I$ has $\lambda_{\min} \geq \sigma$,
      so $r = \sigma/\AM$.
\item \emph{Gershgorin bounds:} $r = \min_i(A_{ii} - \sum_{j \neq i}|A_{ij}|)/\AM$.
\item \emph{Physical constraints:} Eigenvalues represent positive quantities with
      known lower limits.
\end{itemize}

\begin{theorem}[Lower bound structure]
\label{thm:lower-structure}
For $r > 0$, there exists an optimizer $\mu^*$ of $L_k(r)$ that is discrete with
$\leq k+1$ atoms and satisfies $r \in \mathrm{supp}(\mu^*)$.
\end{theorem}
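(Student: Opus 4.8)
The plan is to combine the compactness and finite-support result (Theorem~\ref{thm:compact-support}) with a perturbation argument that forces the floor $r$ into the support of any minimizer. First I would invoke Theorem~\ref{thm:compact-support} applied to the restricted feasible set $\mathcal{F}_k(r)$: the same tightness estimate (Markov's inequality) and weak-$*$ closedness go through verbatim when support is confined to $[r,\infty)$, so $\mathcal{F}_k(r)$ is convex and weak-$*$ compact. Since $\log x$ is bounded below on $[r,\infty)$ and the objective $\mu \mapsto \int \log x\, d\mu$ is weak-$*$ lower semicontinuous (a decreasing limit of bounded continuous truncations), the infimum $L_k(r)$ is attained. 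The linearity of both the constraints and the objective means the minimum is attained at an extreme point of $\mathcal{F}_k(r)$, and Richter's theorem~\citep{richter1957} again bounds the number of atoms by the number of linear moment constraints, namely $k+1$ (normalization plus $M_1,\ldots,M_k$; the support restriction to $[r,\infty)$ is a convex-body constraint, not an additional equality, so it does not increase the atom count).

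The remaining—and genuinely new—step is to show $r \in \mathrm{supp}(\mu^*)$ for at least one minimizer. The idea: suppose the smallest atom $a$ of $\mu^*$ satisfies $a > r$. I would construct a competitor that pushes a little mass from $a$ down to $r$ (and compensates on the larger atoms to keep the $k$ moments fixed), lowering $\int \log x\, d\mu$, contradicting optimality. Concretely, write $\mu^* = \sum_{i=0}^{k} w_i \delta_{a_i}$ with $r \le a_0 < a_1 < \cdots$. Consider the signed measure $\eta = \delta_r - \delta_{a_0}$ restricted by a small mass $\varepsilon$; this violates the moment constraints, so I would correct it by adding a signed combination $\sum_{i\ge 1} c_i \delta_{a_i}$ chosen so that the first $k$ moments of $\varepsilon(\eta + \sum c_i \delta_{a_i})$ vanish—a linear system in $(c_1,\ldots,c_k)$ whose matrix is a $k\times k$ Vandermonde-type minor in the distinct nodes $a_1,\ldots,a_k$, hence invertible. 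For $\varepsilon$ small enough all perturbed weights stay nonnegative, so the competitor lies in $\mathcal{F}_k(r)$. The change in objective is $\varepsilon\bigl(\log r - \log a_0 + \sum_{i\ge1} c_i \log a_i\bigr) + O(\varepsilon^2)$; if the bracket is negative we have our contradiction, and if it is zero we may take $\varepsilon$ maximal until a weight hits zero, producing a new minimizer with $a_0 = r$ (or with fewer atoms, after which we recurse). If the bracket were positive, the reverse perturbation ($-\varepsilon$) would decrease the objective instead—so in all cases some minimizer with $r$ in its support exists.

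I expect the main obstacle to be the bookkeeping in the sign/feasibility analysis of the perturbation: one must verify that the correcting coefficients $c_i$ do not blow up (they are controlled by the condition number of the Vandermonde minor, which is finite once the nodes are fixed), that the perturbed weights remain in $[0,1]$ for a uniform range of $\varepsilon$, and that the recursion—needed when the first-order objective change vanishes—terminates (it does, since each step either sets an atom to $r$ or strictly reduces the atom count, and there are at most $k+1$ atoms). A clean way to package this is to note that the set of minimizers is itself a compact convex face of $\mathcal{F}_k(r)$; the perturbation argument shows this face is not contained in $\{\mu : r \notin \mathrm{supp}(\mu)\}$, and then taking an extreme point of the face gives the claimed $\le k+1$-atom minimizer containing $r$. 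The linear-algebra details (Vandermonde invertibility, continuity of the solve) are routine and can be relegated to the appendix.
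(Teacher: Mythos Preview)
Your high-level strategy matches the paper's (two-sentence) proof sketch---mass is pushed toward small $x$, so the boundary $r$ should be active---and you supply far more detail than the paper does. But the perturbation argument as written has two real problems.

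First, a constraint-counting slip: with $\eta=\delta_r-\delta_{a_0}$ and corrections only on $a_1,\ldots,a_k$, you have $k$ unknowns but must satisfy $k{+}1$ linear constraints (total mass \emph{and} the moments $M_1,\ldots,M_k$). The piece $\eta$ has mass zero, but $\sum_{i\ge1} c_i\,\delta_{a_i}$ generally does not, so your competitor is not a probability measure. The fix is to free $c_0$ as well and solve the square $(k{+}1)\times(k{+}1)$ Vandermonde system in all atoms $a_0,\ldots,a_k$. Second, and more seriously, the ``bracket positive $\Rightarrow$ reverse perturbation'' step fails: taking $-\varepsilon$ assigns weight $-\varepsilon$ at $r$, which is infeasible since $\mu^*$ carries no mass there. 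In fact first-order optimality at $\mu^*$ forces $D\ge 0$ for every feasible one-sided direction, so the $D<0$ ``contradiction'' branch is vacuous and the entire content lies in separating $D=0$ from $D>0$. A duality computation gives $D=\log r - P(r)$, where $P$ is the degree-$\le k$ KKT polynomial with $P\le\log$ on $[r,\infty)$ and equality on $\mathrm{supp}(\mu^*)$; when $P(r)<\log r$ strictly, $D>0$ and no admissible perturbation reaches $r$ without raising the objective. A concrete degenerate instance: $k=2$, $M_1=M_2=1$ forces $\mu^*=\delta_1$ for any $r<1$, and $r\notin\mathrm{supp}(\mu^*)$. So either a nondegeneracy hypothesis is needed, or the ``$r$ always lies in the support'' claim should be read as generic---a point the paper's sketch also glosses over.
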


\begin{proof}[Proof sketch]
Minimizing $\int \log x \, d\mu$ over $\mathcal{F}_k(r)$ pushes mass toward small $x$.
The boundary $x = r$ is always active at the optimum. By Theorem~\ref{thm:compact-support},
the optimizer has $\leq k+1$ atoms.
\end{proof}

\paragraph{Closed form for $k = 2$.}
With two moment constraints, the optimizer has exactly two atoms at $x_1 = r$ and $x_2 > r$:
\begin{equation}
w_1^* = \frac{M_2 - 1}{(r - 1)^2 + (M_2 - 1)}, \quad
x_2^* = \frac{1 - w_1^* r}{1 - w_1^*}, \quad
L_2(r) = r^{w_1^*} \cdot (x_2^*)^{1 - w_1^*}
\end{equation}

\subsection{Computing $k$-Trace Bounds}
\label{sec:bounds-algorithms}

By Theorem~\ref{thm:compact-support}, the optimizer has at most $k+1$ support points.
By Corollary~\ref{cor:monotonicity}, $U_k$ is the tightest bound using $k$ moments,
so we solve directly for $k+1$ atoms without searching over smaller support sizes.

\noindent The bound computations reduce to small moment-matching problems. We summarize
the upper and lower procedures in Algorithms~\ref{alg:nlp-upper}--\ref{alg:nlp-lower}.

\begin{algorithm}[t]
\caption{$k$-Trace Upper Bound}
\label{alg:nlp-upper}
\begin{algorithmic}[1]
\Require Normalized moments $M_1 = 1, M_2, \ldots, M_k$; dimension $n$
\Ensure Upper bound $U_k \geq \GM(x)$
\If{$k = 2$} \Comment{Rodin's closed form~\eqref{eq:rodin}}
    \State $d \gets \sqrt{(M_2 - 1)/(n-1)}$
    \State \Return $(1-d)^{(n-1)/n}(1+(n-1)d)^{1/n}$
\EndIf
\State Solve with $k+1$ support points:
\begin{align*}
\max_{w, x} &\sum_{i=1}^{k+1} w_i \log x_i \\
\text{s.t.} \quad &\sum_i w_i = 1, \; \sum_i w_i x_i^j = M_j \; (j=1,\ldots,k),\\
&w_i \geq 0, \; x_i > 0
\end{align*}
\State \Return $\exp(\text{optimal objective})$
\end{algorithmic}
\end{algorithm}

\begin{algorithm}[t]
\caption{$k$-Trace Lower Bound}
\label{alg:nlp-lower}
\begin{algorithmic}[1]
\Require Normalized moments $M_1 = 1, M_2, \ldots, M_k$; floor $r > 0$
\Ensure Lower bound $L_k(r) \leq \GM(x)$
\If{$k = 2$} \Comment{Closed form}
    \State $w_1 \gets (M_2 - 1)/((r-1)^2 + (M_2 - 1))$
    \State $x_2 \gets (1 - w_1 r)/(1 - w_1)$
    \State \Return $r^{w_1} \cdot x_2^{1-w_1}$
\EndIf
\State Solve with $k+1$ support points (one fixed at $r$):
\begin{align*}
\min_{w, x} &\sum_{i=1}^{k+1} w_i \log x_i \\
\text{s.t.} \quad &\sum_i w_i = 1, \; \sum_i w_i x_i^j = M_j \; (j=1,\ldots,k),\\
&w_i \geq 0, \; x_i \geq r
\end{align*}
\State \Return $\exp(\text{optimal objective})$
\end{algorithmic}
\end{algorithm}

\begin{remark}[Implementation]
For $k \leq 4$, the closed-form bounds often suffice and require no solver.
Algorithms~\ref{alg:nlp-upper}--\ref{alg:nlp-lower} are needed for $k > 4$ or when
tighter NLP-based bounds are desired. Multistart local optimization typically
finds the global optimum; for certified guarantees, use global solvers.
\end{remark}

These algorithms make the structure explicit: once the moments are known, the
remaining optimization is low-dimensional (at most $k+1$ support points) and
independent of the matrix size $n$.

\subsection{Certified Log-Determinant Estimation}
\label{sec:bounds-certified}

\begin{theorem}[Certified interval]
\label{thm:certified-interval}
Let $A \succ 0$ with $\AM = p_1/n$ and normalized moments $M_j$. If $r \leq \lambda_{\min}/\AM$:
\[
n \log(\AM) + n \log L_k(r) \leq \log\det(A) \leq n \log(\AM) + n \log U_k
\]
\end{theorem}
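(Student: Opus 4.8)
The plan is to combine the decomposition $\log\det(A) = n\log\AM + n\log\GM(x)$ from~\eqref{eq:logdet-decomp} with the bracketing inequalities for $\GM(x)$ that follow from Theorem~\ref{thm:relaxation-valid}. First I would observe that the normalized eigenvalues $x_i = \lambda_i/\AM$ satisfy $\frac{1}{n}\sum_i x_i^j = n^{j-1}p_j/p_1^j = M_j$ for $j = 1, \ldots, k$, so the empirical spectrum $x = (x_1, \ldots, x_n)$ lies in $\mathcal{F}_{n,k}$. Moreover, if $r \leq \lambda_{\min}/\AM$, then every $x_i \geq \lambda_{\min}/\AM \geq r$, so $x \in [r,\infty)^n$ as well; this is the one place the hypothesis on $r$ enters.

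Next I would invoke Theorem~\ref{thm:relaxation-valid} directly. Part (1) gives $\GM(x) \leq U_{n,k} \leq U_k$, hence $\GM(x) \leq U_k$. Part (2), applicable because we have just checked $x \in [r,\infty)^n$, gives $L_k(r) \leq L_{n,k}(r) \leq \GM(x)$. Chaining these,
\[
L_k(r) \leq \GM(x) \leq U_k.
\]
Since both $L_k(r)$ and $U_k$ are positive (the lower bound uses $r > 0$, and $\GM(x) > 0$ forces $U_k > 0$), I can take logarithms without trouble, multiply through by $n > 0$, and add $n\log\AM$ to each term:
\[
n\log\AM + n\log L_k(r) \leq n\log\AM + n\log\GM(x) \leq n\log\AM + n\log U_k.
\]
Finally, recognizing the middle expression as $\log\det(A)$ via~\eqref{eq:logdet-decomp} completes the argument.

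There is essentially no obstacle here: the theorem is a clean corollary of Theorem~\ref{thm:relaxation-valid} once the normalization bookkeeping is in place. The only point requiring care is verifying that the floor hypothesis $r \leq \lambda_{\min}/\AM$ is exactly what licenses applying part (2) of the relaxation theorem — one must confirm the constraint $r \leq \lambda_{\min}/\AM$ translates to $x_i \geq r$ for all $i$, which is immediate from $x_i = \lambda_i/\AM \geq \lambda_{\min}/\AM \geq r$. I would also remark that $U_k$ needs no spectral floor, so the upper inequality holds unconditionally for any SPD $A$; only the lower inequality consumes the assumption on $r$.
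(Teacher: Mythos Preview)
Your proof is correct and is exactly the argument the paper has in mind: the theorem is stated in the paper without an explicit proof because it is an immediate corollary of Theorem~\ref{thm:relaxation-valid} together with the decomposition~\eqref{eq:logdet-decomp}, which is precisely how you assemble it. Your bookkeeping on the floor hypothesis and the observation that the upper bound holds unconditionally are both accurate and worth stating.
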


In practice, combine the point estimate from Algorithm~\ref{alg:k0m} with the bounds
from Algorithms~\ref{alg:nlp-upper}--\ref{alg:nlp-lower}, taking the tightest upper
bound among available closed-form options.

\subsection{Gap Diagnostic}
\label{sec:bounds-diagnostic}

The width of the interval $[L_k(r), U_k]$ quantifies confidence in the point estimate. Narrow intervals certify the true geometric mean is pinned down; wide intervals signal many compatible spectra exist.

\begin{center}
\fbox{\parbox{0.95\columnwidth}{
\textbf{Decision Rule} (interval width $U - L$ diagnoses reliability):
\begin{enumerate}[nosep,leftmargin=*]
\item Compute point estimate $\widehat{\GM}$ and bounds $[L, U]$
\item If $\widehat{\GM} \in [L, U]$: use the estimate; otherwise use the nearest bound
\item Report both the (clipped) estimate and the interval $[L, U]$
\end{enumerate}
}}
\end{center}

\begin{remark}[Diagnostic patterns]
If $U_{\mathrm{Rodin}} \ll U_{\mathrm{LS}}$: spectrum likely has outliers.
If $U_{\mathrm{LS}} \ll U_{\mathrm{Rodin}}$: spectrum likely smooth.
The pattern of bound tightness reveals spectral structure without computing eigenvalues.
\end{remark}

This section developed three complementary tools: the last-slope bound exploiting
log-concavity of elementary symmetric polynomials, a measure-theoretic framework
reducing optimization from $O(n)$ to $O(k)$ variables, and a diagnostic protocol
converting interval width into reliability assessment. The $k$-trace bounds
transform the impossibility result of Section~\ref{sec:impossibility} into
actionable guidance: when point estimates fail, guaranteed intervals remain valid.
These guarantees assume exact traces. Section~\ref{sec:noisy-traces} analyzes how
trace noise affects both point estimates and bounds, and Section~\ref{sec:experiments}
validates the predictions empirically.


\section{Sensitivity to Noisy Trace Inputs}
\label{sec:noisy-traces}

The bounds of Section~\ref{sec:bounds} guarantee intervals containing the true
geometric mean regardless of spectrum. But those bounds assume exact trace powers
$p_k = \tr(A^k)$---an idealization. In practice, traces are estimated
stochastically via Hutchinson's method or computed in low-precision arithmetic.

\noindent\textbf{Challenge.} How does trace noise propagate through the $k_{0:m}$ estimators? The Lagrange
weights $w_k = (-1)^{k-1}\binom{m}{k}/k$ grow rapidly with $m$---does this cause
catastrophic noise amplification? If so, the interpolation order that is optimal
under exact traces may be the worst choice under noise. We derive closed-form variance expressions and show that noise amplification grows
as $2^m/m^{5/4}$. For typical Hutchinson noise levels ($\eta \approx 1$--$5\%$),
$m = 4$ balances bias and variance; higher orders are counterproductive.

\subsection{Noise Model}
\label{sec:noise-model}

We model noisy trace observations as multiplicative perturbations:
\begin{equation}
\label{eq:noisy-traces}
\hat{p}_k = p_k(1 + \varepsilon_k), \quad k = 1, \ldots, m,
\end{equation}
where $\{\varepsilon_k\}$ are independent, zero-mean random variables with
variance $\Var(\varepsilon_k) = \eta^2$. The relative noise level $\eta > 0$
is user-controlled. As a heuristic, Hutchinson's estimator with $n_v$ random
vectors yields $\eta \approx \sqrt{2/n_v}$ in well-behaved cases. For
finite-precision arithmetic, $\eta$ reflects the relative rounding error.

\begin{remark}[Choice of multiplicative model]
A multiplicative model fits because trace magnitudes vary widely---$p_k$ grows
as $\lambda_{\max}^k$---so additive noise with fixed variance would overwhelm
small traces or vanish for large ones. The i.i.d.\ assumption simplifies the
analysis; in practice, Hutchinson variance for $\tr(A^k)$ can depend on $k$
and matrix structure. See~\cite{roosta2015} for detailed sample complexity bounds.
\end{remark}

\subsection{Noise Propagation to $K(k)$}
\label{sec:noise-propagation}

Cumulant values are $K(k) = \log(n^{k-1} p_k / p_1^k)$. With noisy traces:
\begin{equation}
\hat{K}(k) = \log\left(\frac{n^{k-1} \hat{p}_k}{\hat{p}_1^k}\right)
= \log\left(\frac{n^{k-1} p_k(1+\varepsilon_k)}{p_1^k(1+\varepsilon_1)^k}\right).
\end{equation}
Separating true values from noise:
\begin{equation}
\hat{K}(k) = K(k) + \log(1+\varepsilon_k) - k\log(1+\varepsilon_1).
\end{equation}

For small noise ($|\varepsilon_k| < 1$ almost surely, or $\eta$ small enough
that $\Pr[\varepsilon_k < -1]$ is negligible), Taylor expansion gives
$\log(1+\varepsilon_k) = \varepsilon_k - \tfrac{1}{2}\varepsilon_k^2 +
O(\varepsilon_k^3)$. Keeping only first-order terms:
\begin{equation}
\label{eq:Khat-linearized}
\hat{K}(k) = K(k) + \varepsilon_k - k\varepsilon_1 + O(\eta^2).
\end{equation}
Thus noise in $\hat{K}(k)$ depends on both $\varepsilon_k$ and $\varepsilon_1$
(with coefficient $k$), reflecting normalization by $p_1^k$.

\subsection{Variance of $\widehat{K'(0)}$}
\label{sec:variance-derivation}

The $k_{0:m}$ estimator (Theorem~\ref{thm:weights}) is
\begin{equation}
\widehat{K'(0)} = \sum_{k=2}^{m} w_k \hat{K}(k), \quad
w_k = L'_k(0) = \frac{(-1)^{k-1}}{k}\binom{m}{k}.
\end{equation}
Substituting \eqref{eq:Khat-linearized}:
\begin{equation}
\widehat{K'(0)} = \sum_{k=2}^{m} w_k K(k) + \sum_{k=2}^{m} w_k \varepsilon_k
- \varepsilon_1 \sum_{k=2}^{m} k\,w_k + O(\eta^2).
\end{equation}
The first sum is the noise-free estimate with bias $b_m = \sum_{k=2}^{m} w_k K(k) - K'(0)$.
The noise-dependent error is
\begin{equation}
\Delta = \sum_{k=2}^{m} w_k \varepsilon_k - \varepsilon_1 \sum_{k=2}^{m} k\,w_k.
\end{equation}

Lagrange differentiation is exact on $\{0,1,\ldots,m\}$, so $\sum_{k=1}^{m} k\,w_k = 1$
(the derivative of $f(x)=x$ at $x=0$). This identity holds because the estimator must exactly recover linear functions. Since $w_1 = m$, we have
$\sum_{k=2}^{m} k\,w_k = 1 - m$ (see Appendix~\ref{app:weight-norm-proof} for details).
Thus:
\begin{equation}
\Delta = \sum_{k=2}^{m} w_k \varepsilon_k + (m-1)\varepsilon_1.
\end{equation}

\begin{theorem}[Variance of $\widehat{K'(0)}$ under noisy traces]
\label{thm:variance}
Under noise model \eqref{eq:noisy-traces} with i.i.d.\ zero-mean $\varepsilon_k$
of variance $\eta^2$:
\begin{equation}
\label{eq:variance-formula}
\Var\bigl(\widehat{K'(0)}\bigr) = \eta^2 \alpha_m^2 + O(\eta^4),
\end{equation}
where the \emph{noise amplification factor} is
\begin{equation}
\label{eq:alpha-m}
\boxed{\alpha_m = \sqrt{\sum_{k=2}^{m} w_k^2 + (m-1)^2}.}
\end{equation}
\end{theorem}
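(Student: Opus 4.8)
The plan is to build directly on the first-order expansion already derived above. Write $\widehat{K'(0)} = c_m + \Delta + R$, where $c_m = \sum_{k=2}^m w_k K(k)$ is the (deterministic) noise-free estimate and therefore contributes nothing to the variance, $\Delta = \sum_{k=2}^m w_k \varepsilon_k + (m-1)\varepsilon_1$ is the linear-in-noise term from \eqref{eq:Khat-linearized} (using $w_1 = m$ together with the normalization identity $\sum_{k=1}^m k w_k = 1$, hence $\sum_{k=2}^m k w_k = 1-m$), and $R$ collects all quadratic-and-higher contributions of the Taylor expansions $\log(1+\varepsilon_k) = \varepsilon_k - \tfrac12\varepsilon_k^2 + \cdots$ weighted by the fixed coefficients $w_k$ and $k w_k$. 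With this decomposition the proof splits into two pieces: compute $\Var(\Delta)$ exactly, then show $R$ perturbs it only at order $\eta^4$.

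For the first piece I would use only independence and the moment assumptions on $\{\varepsilon_k\}_{k=1}^m$. In $\Delta$ the variable $\varepsilon_1$ appears with total coefficient $m-1$ and each $\varepsilon_k$ with $k\ge 2$ appears with coefficient $w_k$; since the $\varepsilon_k$ are mutually independent and zero-mean, all cross-covariances vanish and $\Var(\Delta) = (m-1)^2\eta^2 + \eta^2\sum_{k=2}^m w_k^2 = \eta^2\bigl(\sum_{k=2}^m w_k^2 + (m-1)^2\bigr) = \eta^2\alpha_m^2$, which is exactly \eqref{eq:alpha-m}. This step is essentially bookkeeping; the one thing worth stating carefully is that the $\varepsilon_1$-coefficient is the single aggregate $m-1$ rather than a $k$-indexed family, which is precisely what the identity $\sum_{k=2}^m k w_k = 1-m$ delivers, and is what keeps the cross terms from appearing.

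For the second piece, decompose $\Var\bigl(\widehat{K'(0)}\bigr) = \Var(\Delta) + 2\Cov(\Delta, R) + \Var(R)$, using $\E[\Delta] = 0$. Because $m$ is fixed, the coefficients $w_k$ and $k w_k$ are bounded constants and $R$ is a polynomial in $\varepsilon_1,\dots,\varepsilon_m$ with no linear part, so $\E[R] = O(\eta^2)$, $\Var(R) = O(\eta^4)$, and by Cauchy--Schwarz $|\Cov(\Delta,R)| \le \sqrt{\Var(\Delta)\,\Var(R)} = O(\eta^3)$. Tracking the surviving monomials shows the only $O(\eta^3)$ contribution to $\Cov(\Delta,R)$ comes from terms of the form $w_k\varepsilon_k \cdot (-\tfrac12 w_k \varepsilon_k^2)$ and $(m-1)\varepsilon_1 \cdot \tfrac{1-m}{2}\varepsilon_1^2$, i.e. from third moments $\E[\varepsilon_k^3]$; under symmetric noise these vanish, leaving $\Var\bigl(\widehat{K'(0)}\bigr) = \eta^2\alpha_m^2 + O(\eta^4)$ as in \eqref{eq:variance-formula}. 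Without a symmetry assumption one obtains the weaker but still correct remainder $O(\eta^3)$.

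The main obstacle is exactly this remainder analysis: the clean $O(\eta^4)$ in the statement is not automatic from Cauchy--Schwarz, which only yields $O(\eta^3)$, so the argument needs either the mild symmetry assumption on the noise (true to leading order for sign-symmetric Hutchinson probes) or the explicit cancellation of the third-moment monomials noted above. Everything else---the linearization, the use of independence, and the algebraic identity for $\sum_k k w_k$---is routine and already set up in the preceding paragraphs.
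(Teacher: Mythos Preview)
Your proof is correct and follows the same approach as the paper: linearize via the Taylor expansion of $\log(1+\varepsilon_k)$, compute $\Var(\Delta)$ using independence, and attribute the remainder to higher-order terms. In fact you are more careful than the paper, whose proof simply asserts ``the $O(\eta^4)$ term comes from higher-order terms in the Taylor expansion'' without further justification; your decomposition $\Var(\Delta)+2\Cov(\Delta,R)+\Var(R)$ and the observation that the $O(\eta^3)$ cross-covariance vanishes only under a symmetry (vanishing third moments) assumption is a genuine refinement of the paper's argument, and correctly identifies an implicit hypothesis behind the stated $O(\eta^4)$.
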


\begin{proof}
Since $\E[\varepsilon_k] = 0$, we have $\E[\Delta] = 0$. By independence:
\begin{equation}
\Var(\Delta) = \sum_{k=2}^{m} w_k^2 \eta^2 + (m-1)^2 \eta^2
= \eta^2 \left[\sum_{k=2}^{m} w_k^2 + (m-1)^2\right].
\end{equation}
The $O(\eta^4)$ term comes from higher-order terms in the Taylor expansion.
\end{proof}

\begin{remark}[Heteroskedastic and correlated noise]
If $\Var(\varepsilon_k) = \eta_k^2$ varies with $k$ or noise is correlated with
covariance matrix $\Sigma$, then $\Var(\Delta) = w^\top \Sigma w$ where
$w = (w_2, \ldots, w_m, m-1)^\top$. The exponential growth of $\lVert w \rVert_2$ dominates,
so the key point remains: high $m$ amplifies noise.
\end{remark}

\subsection{Noise-Induced Bias from Log Nonlinearity}
\label{sec:noise-bias}

First-order analysis shows $\E[\Delta] = 0$, so noise introduces no bias at
order $\eta$. But logarithm nonlinearity creates systematic bias at order $\eta^2$.

Keeping the next Taylor term $\log(1+x) = x - \tfrac{1}{2}x^2 + O(x^3)$:
\begin{align}
\E[\hat{K}(k) - K(k)]
&= \E[\log(1+\varepsilon_k)] - k\,\E[\log(1+\varepsilon_1)] \notag \\
&= -\tfrac{1}{2}\E[\varepsilon_k^2] + \tfrac{k}{2}\E[\varepsilon_1^2] + O(\eta^3)
= \tfrac{k-1}{2}\eta^2 + O(\eta^3).
\end{align}

\begin{proposition}[Noise-induced bias]
\label{prop:noise-bias}
The additional bias due purely to multiplicative noise is
\begin{equation}
\label{eq:noise-bias}
b_{\mathrm{noise}}(m,\eta) = \frac{\eta^2}{2}(1 - H_m) + O(\eta^3),
\end{equation}
where $H_m = \sum_{j=1}^{m} 1/j$ is the $m$-th harmonic number.
\end{proposition}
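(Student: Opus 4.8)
The plan is to propagate the second-order Taylor expansion of the logarithm through the linear $k_{0:m}$ combination and then collapse the resulting weighted sum using two identities for the Lagrange derivative weights $w_k = L'_k(0)$. Since $\widehat{K'(0)} = \sum_{k=2}^{m} w_k \hat K(k)$ is linear in the cumulant estimates, the noise-induced bias is exactly $\E[\widehat{K'(0)}] - \sum_{k=2}^{m} w_k K(k) = \sum_{k=2}^{m} w_k\,\E[\hat K(k)-K(k)]$, so it suffices to compute the per-node bias $\E[\hat K(k)-K(k)]$ and sum it against the weights; there are no cross terms.

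First I would record the per-node bias, which is the display immediately preceding the proposition. From $\hat K(k) = K(k) + \log(1+\varepsilon_k) - k\log(1+\varepsilon_1)$ and $\log(1+x) = x - \tfrac12 x^2 + O(x^3)$, taking expectations with $\E[\varepsilon_j]=0$, $\E[\varepsilon_j^2]=\eta^2$, and independence of $\varepsilon_k$ and $\varepsilon_1$, the linear terms vanish and
\[
\E[\hat K(k)-K(k)] = -\tfrac12\eta^2 + \tfrac{k}{2}\eta^2 + O(\eta^3) = \tfrac{k-1}{2}\eta^2 + O(\eta^3).
\]
Multiplying by $w_k$ and summing over $k = 2,\dots,m$ gives $b_{\mathrm{noise}}(m,\eta) = \tfrac{\eta^2}{2}\sum_{k=2}^{m}(k-1)w_k + O(\eta^3)$, so the proposition reduces to the combinatorial identity $\sum_{k=2}^{m}(k-1)w_k = 1 - H_m$.

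Next I would establish that identity from two facts about $w_k = L'_k(0)$. Write $\sum_{k=2}^{m}(k-1)w_k = \sum_{k=2}^{m} k\,w_k - \sum_{k=2}^{m} w_k$. The first sum follows from exactness of Lagrange differentiation on linear functions: $\sum_{k=1}^{m} k\,w_k = 1$ (already used in Section~\ref{sec:variance-derivation}), and since $w_1 = L'_1(0) = m$ by Theorem~\ref{thm:weights}, $\sum_{k=2}^{m} k\,w_k = 1-m$. For the second sum, differentiating the partition-of-unity $\sum_{j=0}^{m} L_j(t)\equiv 1$ at $t=0$ gives $\sum_{j=0}^{m} L'_j(0)=0$; combined with $L'_0(0) = -H_m$ (Theorem~\ref{thm:weights}), this yields $\sum_{k=1}^{m} w_k = H_m$, hence $\sum_{k=2}^{m} w_k = H_m - w_1 = H_m - m$. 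Subtracting, $\sum_{k=2}^{m}(k-1)w_k = (1-m) - (H_m - m) = 1 - H_m$, which completes the argument. (As a sanity check, $m=2$ gives $w_2 = -\tfrac12$ and $\sum(k-1)w_k = -\tfrac12 = 1 - H_2$.)

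The main obstacle is not the algebra but controlling the $O(\eta^3)$ remainder: the Taylor bound on $\log(1+x)$ is valid only for $|x|<1$, so to make the error term honestly $O(\eta^3)$ uniformly one needs either $\varepsilon_k$ bounded away from $-1$ (as the noise model assumes for small $\eta$) or a moment condition—$\E|\varepsilon_k|^3 < \infty$ together with $\Pr[\varepsilon_k \le -1]$ negligible—so that the expectation of the cubic remainder is integrable and scales like $\eta^3$. Granting this small-noise regime, exactly the one stated before Eq.~\eqref{eq:Khat-linearized}, the remaining weight identities are routine and the proof is short.
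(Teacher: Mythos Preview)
Your proposal is correct and follows exactly the route the paper sets up: the paper derives the per-node bias $\E[\hat K(k)-K(k)] = \tfrac{k-1}{2}\eta^2 + O(\eta^3)$ immediately before the proposition and then states the result without further proof, leaving the weighted-sum step implicit. Your reduction to the identity $\sum_{k=2}^{m}(k-1)w_k = 1-H_m$ via the two facts $\sum_{k=2}^{m} k\,w_k = 1-m$ (from Section~\ref{sec:variance-derivation}) and $\sum_{k=1}^{m} w_k = H_m$ (from $L'_0(0)=-H_m$ and partition of unity, Theorem~\ref{thm:weights}) is the natural completion, and your caveat about the $O(\eta^3)$ remainder matches the small-noise assumption the paper invokes before~\eqref{eq:Khat-linearized}.
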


Since $H_m \approx \ln m + \gamma$ for large $m$, the noise-induced bias
$b_{\mathrm{noise}} \approx -\tfrac{\eta^2}{2}\ln m$ grows only logarithmically.
For typical noise levels ($\eta \lesssim 5\%$) and moderate $m$, this is much
smaller than interpolation bias $b_m$ and variance $\alpha_m\eta$, so it can
usually be neglected.

\subsection{RMSE Decomposition}
\label{sec:rmse}

We measure accuracy via root mean squared error:
\[
\mathrm{RMSE}\bigl(\widehat{K'(0)}\bigr)
  = \sqrt{\E\bigl[(\widehat{K'(0)} - K'(0))^2\bigr]}.
\]

\begin{corollary}[Bias--variance decomposition]
\label{cor:rmse}
The RMSE satisfies
\begin{equation}
\label{eq:rmse-full}
\mathrm{RMSE}^2\bigl(\widehat{K'(0)}\bigr)
\approx \bigl(b_m + b_{\mathrm{noise}}(m,\eta)\bigr)^2 + \alpha_m^2\eta^2.
\end{equation}
Ignoring the smaller $O(\eta^2\log m)$ noise-bias term:
\begin{equation}
\label{eq:rmse}
\boxed{\mathrm{RMSE}\bigl(\widehat{K'(0)}\bigr)
\approx \sqrt{b_m^2 + \alpha_m^2 \eta^2}
\sim \max\{|b_m|, \alpha_m\eta\}}
\end{equation}
(within a factor $\sqrt{2}$).
\end{corollary}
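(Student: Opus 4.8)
The plan is to invoke the exact bias--variance identity $\mathrm{RMSE}^2(\widehat{K'(0)}) = \bigl(\E[\widehat{K'(0)}] - K'(0)\bigr)^2 + \Var(\widehat{K'(0)})$ and then substitute the two ingredients already in hand. First I would assemble the total bias. Writing $\widehat{K'(0)} = \sum_{k=2}^m w_k \hat K(k)$ and taking expectations, linearity gives $\E[\widehat{K'(0)}] = \sum_{k=2}^m w_k\,\E[\hat K(k)]$. Splitting $\E[\hat K(k)] = K(k) + \tfrac{k-1}{2}\eta^2 + O(\eta^3)$ (the second-order log-nonlinearity computed in Section~\ref{sec:noise-bias}) and summing against the weights, the $K(k)$ part contributes $\sum_{k=2}^m w_k K(k) = K'(0) + b_m$ by the definition of the interpolation bias, and the $\eta^2$ part contributes exactly $b_{\mathrm{noise}}(m,\eta)$ by Proposition~\ref{prop:noise-bias}. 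Hence $\E[\widehat{K'(0)}] - K'(0) = b_m + b_{\mathrm{noise}}(m,\eta) + O(\eta^3)$.

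Next I would insert the variance from Theorem~\ref{thm:variance}, namely $\Var(\widehat{K'(0)}) = \alpha_m^2\eta^2 + O(\eta^4)$, into the identity, yielding $\mathrm{RMSE}^2 = (b_m + b_{\mathrm{noise}})^2 + \alpha_m^2\eta^2 + O(\eta^3|b_m| + \eta^4)$; absorbing the remainder into the ``$\approx$'' gives the first displayed claim. For the second display I would use $b_{\mathrm{noise}} = \tfrac{\eta^2}{2}(1-H_m) = O(\eta^2\log m)$, so the cross term $2 b_m b_{\mathrm{noise}}$ and $b_{\mathrm{noise}}^2$ are lower order than $b_m^2 + \alpha_m^2\eta^2$ in the regime of interest (since $\alpha_m$ is bounded below by a positive constant, $\alpha_m^2\eta^2$ is at least order $\eta^2$, and $b_{\mathrm{noise}}/(\alpha_m\eta) = O(\eta\log m/\alpha_m)\to 0$), producing $\mathrm{RMSE}\approx\sqrt{b_m^2 + \alpha_m^2\eta^2}$.

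Finally, the two-sided bound. For nonnegative reals $a$ and $c$ one has $\max\{a,c\} \le \sqrt{a^2 + c^2} \le \sqrt{2}\,\max\{a,c\}$: the left inequality because $a^2, c^2 \le a^2 + c^2$, the right because $a^2 + c^2 \le 2\max\{a,c\}^2$. Applying this with $a = |b_m|$ and $c = \alpha_m\eta$ gives $\mathrm{RMSE}(\widehat{K'(0)}) \sim \max\{|b_m|, \alpha_m\eta\}$ within the stated factor $\sqrt{2}$.

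The only delicate point is bookkeeping of asymptotic orders: one must check that the remainder $O(\eta^3|b_m| + \eta^4)$ coming from squaring the bias and from the $O(\eta^4)$ variance term, together with the dropped $b_{\mathrm{noise}}$ contributions, are genuinely subordinate to the retained $b_m^2 + \alpha_m^2\eta^2$. This holds for the small-$\eta$, moderate-$m$ regime the paper targets; everything else is the textbook bias--variance split combined with the elementary norm-equivalence inequality above.
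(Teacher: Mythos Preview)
Your proposal is correct and matches the paper's intent: the corollary is stated without proof in the paper, as an immediate consequence of the standard bias--variance identity combined with Theorem~\ref{thm:variance} and Proposition~\ref{prop:noise-bias}, which is exactly the assembly you carry out. Your explicit verification of the $\sqrt{2}$ norm-equivalence and the order bookkeeping for the dropped $b_{\mathrm{noise}}$ terms are details the paper leaves to the reader.
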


\noindent This decomposition reveals two regimes:
\begin{enumerate}[nosep]
\item \textbf{Bias-dominated} ($\alpha_m \eta \ll |b_m|$):
RMSE $\approx |b_m|$, independent of noise. Increasing $m$ reduces error.

\item \textbf{Noise-dominated} ($\alpha_m \eta \gg |b_m|$):
RMSE $\approx \alpha_m \eta$, growing linearly with $\eta$. Increasing $m$
\emph{increases} error because $\alpha_m$ grows exponentially.
\end{enumerate}
\noindent The \emph{crossover noise level} where bias and variance balance is $\eta_* = |b_m|/\alpha_m$; for $\eta > \eta_*$, reducing $m$ may improve accuracy despite coarser polynomial approximation. Figure~\ref{fig:noisy-crossover} visualizes this trade-off: RMSE plateaus at $|b_m|$ until noise exceeds $\eta_*$, then grows linearly.

\begin{figure}[t]
\centering
\includegraphics[width=0.66\columnwidth]{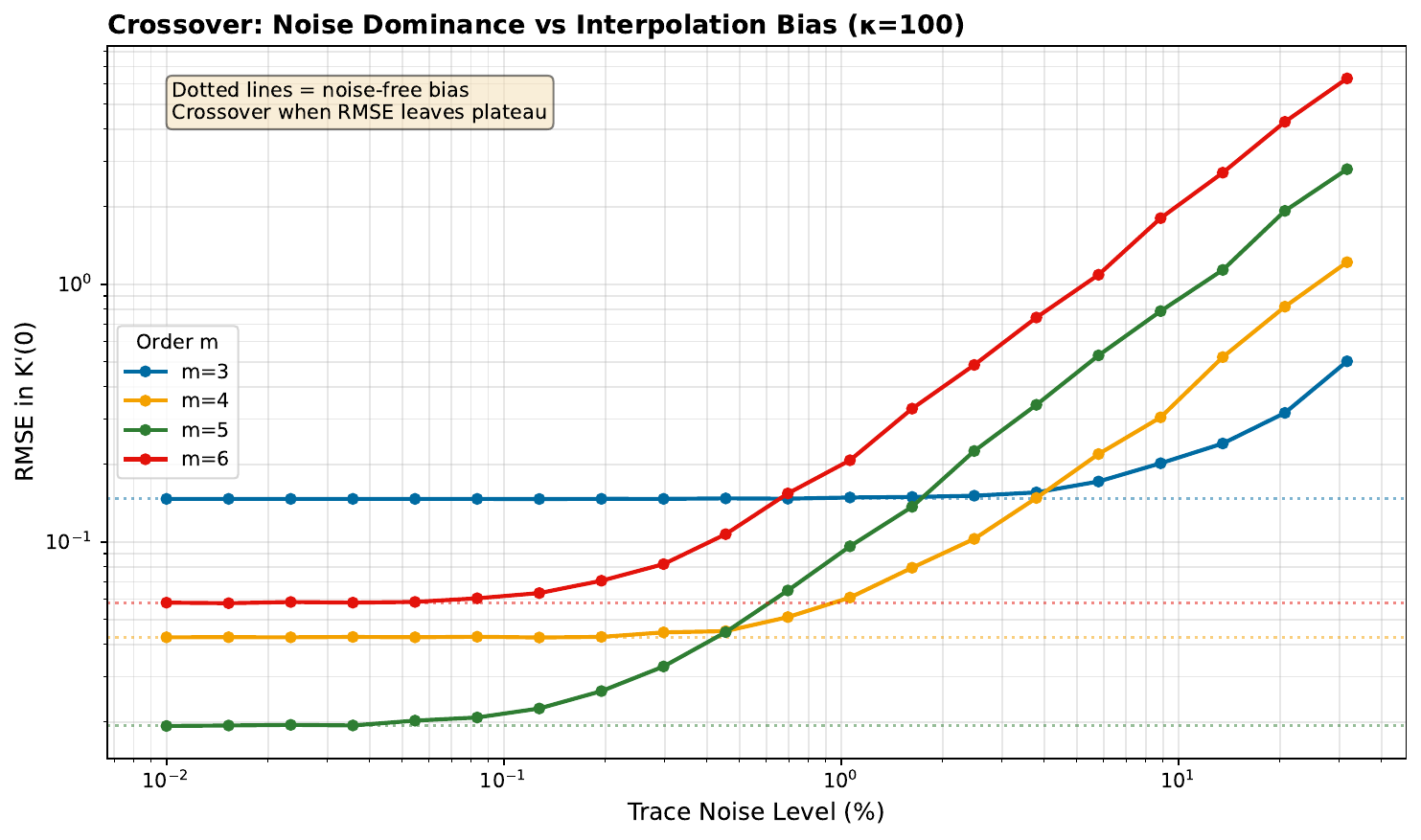}
\caption{Bias-variance trade-off under noisy trace powers.
We add multiplicative noise $\hat{p}_k = p_k(1 + \varepsilon_k)$
with i.i.d.\ $\varepsilon_k \sim \mathcal{N}(0, \eta^2)$ to trace powers
for a geometric spectrum ($n = 1024$, $\kappa = 100$) and report RMSE
in $K'(0)$ over 1000 trials.
Solid curves show $k_{0:m}$ estimators for $m = 3, 4, 5, 6$;
dotted horizontal lines mark noise-free interpolation bias $|b_m|$.
At low noise ($\eta < 0.5\%$), higher $m$ reduces error by decreasing bias.
At high noise ($\eta > 5\%$), higher $m$ increases error due to
exponential noise amplification $\alpha_m \sim 2^m/m^{5/4}$.
The crossover occurs near $\eta_* = |b_m|/\alpha_m$.
For typical Hutchinson noise ($\eta \approx 1$--$5\%$), $m = 4$ provides
the optimal bias-variance trade-off.}
\label{fig:noisy-crossover}
\end{figure}

\subsection{Asymptotic Growth of the Weight Norm}
\label{sec:asymptotic}

The noise amplification factor $\alpha_m$ is dominated by
$\lVert w \rVert_2 = \sqrt{\sum w_k^2}$ for $m \geq 4$ (see Figure~\ref{fig:noisy-traces}c).

\begin{proposition}[$\ell_2$ norm of Lagrange derivative weights]
\label{prop:ell2}
Let $w_k = (-1)^{k-1}\binom{m}{k}/k$ for $k = 2, \ldots, m$. Then
\begin{equation}
\label{eq:sum-w-squared}
\sum_{k=2}^{m} w_k^2 = \frac{4}{\sqrt{\pi}} \cdot \frac{4^m}{m^{5/2}}\bigl(1 + o(1)\bigr)
\end{equation}
as $m \to \infty$, and therefore
\begin{equation}
\label{eq:ell2-asymptotic}
\boxed{\lVert w \rVert_2 = \frac{2}{\pi^{1/4}} \cdot \frac{2^m}{m^{5/4}}\bigl(1 + o(1)\bigr).}
\end{equation}
\end{proposition}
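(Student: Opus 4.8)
The plan is to reduce the asymptotics to the Vandermonde identity $\sum_{k=0}^m\binom{m}{k}^2=\binom{2m}{m}$ by a Laplace-type concentration argument: the weighted sum $\sum_k\binom{m}{k}^2/k^2$ has essentially all of its mass in a window of width $O(\sqrt m)$ about $k=m/2$, and on that window the factor $1/k^2$ is, to leading order, the \emph{constant} $4/m^2$. So the sum behaves like $\tfrac{4}{m^2}\binom{2m}{m}$, and Stirling finishes it.

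First I would rewrite the target as $S_m:=\sum_{k=2}^m w_k^2=\sum_{k=2}^m\binom{m}{k}^2/k^2$; the omitted terms $k=0,1$ contribute only $1+m^2$, negligible against the claimed size $4^m/m^{5/2}$. The key analytic input is the Chernoff--Hoeffding bound for $\mathrm{Bin}(m,\tfrac12)$: a single coefficient is at most a one-sided tail sum, so $\binom{m}{k}\le 2^m\exp\!\big(-2(k-m/2)^2/m\big)$ for every $k$, hence $\binom{m}{k}^2\le 4^m\exp\!\big(-4(k-m/2)^2/m\big)$. Next, split at $C=\{k:|k-m/2|\le m^{3/5}\}$ with complement $T$. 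On the tail, using $1/k^2\le 1$ for $k\ge 1$ together with the sub-Gaussian bound, $\sum_{k\in T}\binom{m}{k}^2/k^2\le 4^m\sum_{|j|>m^{3/5}}e^{-4j^2/m}$; comparing the sum to $\int e^{-4u^2/m}\,du$ (substitute $u=\sqrt m\,v$; the tail lives beyond $v\asymp m^{1/10}$) bounds this by $4^m\cdot O\!\big(m^{2/5}e^{-4m^{1/5}}\big)=o(4^m/m^{5/2})$. The same estimate gives $\sum_{k\in T}\binom{m}{k}^2=o(4^m/\sqrt m)$, so by Vandermonde and Stirling $\sum_{k\in C}\binom{m}{k}^2=\binom{2m}{m}(1+o(1))=\tfrac{4^m}{\sqrt{\pi m}}(1+o(1))$. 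On $C$ one has $1/k^2=\tfrac{4}{m^2}\big(1+O(m^{-2/5})\big)$ uniformly, hence $\sum_{k\in C}\binom{m}{k}^2/k^2=\tfrac{4}{m^2}(1+o(1))\sum_{k\in C}\binom{m}{k}^2$. Adding the core and (negligible) tail contributions yields $S_m=\tfrac{4}{m^2}\cdot\tfrac{4^m}{\sqrt{\pi m}}(1+o(1))=\tfrac{4}{\sqrt\pi}\cdot\tfrac{4^m}{m^{5/2}}(1+o(1))$, which is the claimed asymptotic for $\sum_{k=2}^m w_k^2$; taking square roots and using $\sqrt{4/\sqrt\pi}=2/\pi^{1/4}$ gives the boxed formula $\lVert w\rVert_2=\tfrac{2}{\pi^{1/4}}\cdot\tfrac{2^m}{m^{5/4}}(1+o(1))$.

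The step I expect to be the main obstacle is controlling the tail $T$ rigorously, because the weight $1/k^2$ is largest exactly where $\binom{m}{k}^2$ is smallest (small $k$), so the crude bound $1/k^2\le 1$ must be paired with an \emph{exponential} decay estimate for $\binom{m}{k}^2$ to beat $4^m/m^{5/2}$. The Chernoff--Hoeffding bound supplies precisely this, and since its decay outpaces any power of $m$, the particular window exponent $3/5$ is immaterial — any $w(m)$ with $w(m)=o(m)$ and $w(m)^2/m\to\infty$ faster than $\ln m$ (e.g.\ $w(m)=\sqrt{m\ln m}$) works — so one never needs the sharp local central limit constant, only the Vandermonde total $\binom{2m}{m}$ and a Gaussian-tail bound.
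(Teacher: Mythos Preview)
Your argument is correct and follows the same concentration-near-$m/2$ skeleton as the paper, but you execute the main term differently. The paper (Appendix~\ref{app:weight-norm-proof}) replaces $\binom{m}{k}$ by its De Moivre--Laplace Gaussian approximation, converts the sum to the integral $\int e^{-t^2}\,dt=\sqrt\pi$, and reads off the constant that way; the tail and sum-to-integral passages are handled informally. You instead pull the leading constant from the \emph{exact} Vandermonde identity $\sum_k\binom{m}{k}^2=\binom{2m}{m}$ plus Stirling, and invoke Chernoff--Hoeffding only as a one-sided tail bound to justify that the window $|k-m/2|\le m^{3/5}$ already carries $(1+o(1))\binom{2m}{m}$ of that mass. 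This is a cleaner route: you never need the local CLT's normalizing constant or a careful sum-versus-integral replacement, and the error terms are explicit. The paper's approach has the complementary virtue of making the Gaussian shape of the summand visible, which motivates the $m^{-5/2}$ decomposition in Remark~\ref{rem:exponent-interpretation}. One small wording slip: your remark that the ``omitted terms $k=0,1$ contribute only $1+m^2$'' is ambiguous, since $\binom{m}{0}^2/0^2$ is undefined; it reads correctly if interpreted as the $k=0,1$ terms of the \emph{unweighted} Vandermonde sum, and in any case nothing in the argument depends on it.
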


\emph{Proof sketch.}
The sum $\sum w_k^2 = \sum \binom{m}{k}^2/k^2$ is dominated by $k \approx m/2$.
\textbf{Step 1} (Gaussian approximation): $\binom{m}{m/2}^2 \sim 4^m/(\pi m)$
by De Moivre--Laplace.
\textbf{Step 2} (Convert to integral): The effective width is $\sim\sqrt{m}$,
and $1/k^2|_{k=m/2} = 4/m^2$.
\textbf{Step 3} (Evaluate): Combining yields
$\sum w_k^2 \sim (4^m/m) \cdot (4/m^2) \cdot m^{1/2} = 4^m/m^{5/2}$.
See Appendix~\ref{app:weight-norm-proof} for full details and Figure~\ref{fig:asymptotic-validation} for empirical validation.

\subsection{Practical Reference Table}
\label{sec:alpha-table}

\begin{table}[t]
\centering
\caption{Noise amplification factor $\alpha_m$ and predicted standard deviation
at $\eta = 1\%$ noise. For typical Hutchinson noise ($\eta \approx 1$--$5\%$),
$m = 4$ provides a robust bias-variance trade-off.}
\label{tab:alpha}
\begin{tabular}{@{}c|rrrrrrr@{}}
\toprule
$m$ & 2 & 3 & 4 & 5 & 6 & 7 & 8 \\
\midrule
$\lVert w \rVert_2 = \sqrt{\sum w_k^2}$ & 0.50 & 1.54 & 3.29 & 6.14 & 10.78 & 18.49 & 31.61 \\
$(m-1)$ & 1 & 2 & 3 & 4 & 5 & 6 & 7 \\
$\alpha_m$ & 1.12 & 2.52 & 4.45 & 7.33 & 11.88 & 19.44 & 32.38 \\
$\mathrm{Std}(\widehat{K'(0)})$ at $\eta = 1\%$ & 0.011 & 0.025 & 0.045 & 0.073 & 0.119 & 0.194 & 0.324 \\
\bottomrule
\end{tabular}
\end{table}

The weight term $\lVert w \rVert_2$ exceeds the correlation term $(m-1)$ for $m \geq 4$.
For smaller $m$, the $\varepsilon_1$ contribution dominates variance; for
larger $m$, the exponentially growing weights dominate.

\subsection{Practical Guidance}
\label{sec:practical}

The optimal interpolation order balances bias and variance:
\begin{equation}
\label{eq:optimal-m}
m^* = \argmin_m \sqrt{b_m^2 + \alpha_m^2 \eta^2}.
\end{equation}

\begin{itemize}
\item \textbf{High-precision traces} ($\eta < 0.1\%$): Use $m = 5$ or $m = 6$
for minimal bias.

\item \textbf{Hutchinson estimator} ($\eta \approx 1$--$5\%$): Use $m = 4$.
This gives a robust trade-off: bias is $O(10^{-2})$ for moderate $\kappa$,
and $\alpha_4 \approx 4.5$ keeps noise amplification manageable.

\item \textbf{Very noisy traces} ($\eta > 10\%$): Use $m = 3$ or even $m = 2$
to avoid catastrophic noise amplification, accepting larger interpolation bias.
\end{itemize}

\begin{remark}[Adaptive order selection]
In practice, estimate $\eta$ from trace variance and $|b_m|$ from spectrum shape
(if known), then solve \eqref{eq:optimal-m} numerically. Or compute $k_{0:m}$
for $m = 2, 3, 4, 5$ and select the value with minimal apparent variance via bootstrap.
\end{remark}

The theory predicts that $m = 4$ balances bias and variance for typical Hutchinson
noise levels. Section~\ref{sec:experiments} validates this prediction on realistic
spectra and demonstrates that the optimal order shifts toward smaller $m$ as noise
increases, exactly as the $\alpha_m \sim 2^m/m^{5/4}$ growth predicts.

\section{Numerical Experiments}
\label{sec:experiments}

Sections~\ref{sec:estimators}--\ref{sec:noisy-traces} developed the theory; we now test it empirically. Three practical questions remain: How accurate are the $k_{0:m}$ estimators on realistic spectra? Which interpolation order $m$ should practitioners choose? Do the guaranteed bounds detect failure? We answer them with systematic experiments.

\textbf{Experimental questions.} (1) Which interpolation orders $m$ work best for different spectra and condition numbers? (2) Do the bounds detect failures before they cause harm? (3) Does the Taylor radius $R(\kappa)$ predict divergence? We test $k_{0:m}$ for $m = 2, 3, \ldots, 8, 16, 32$ on six spectrum types using exact eigenvalues to isolate interpolation error from stochastic trace variance.

\textbf{Error metric.} All tables and figures report relative error as $(\hat{K}'(0) - K'(0))/|K'(0)| \times 100\%$, where $\hat{K}'(0)$ is the estimate and $K'(0)$ is the true value. Positive errors indicate overestimation; negative errors indicate underestimation.

\textbf{Summary of findings.} For $\kappa \in [50, 200]$ the orders $m = 4, 5, 6$ are typically best on smooth spectra (geometric and uniform), yielding low double-digit percent error in $K'(0)$ or better. The bounds flag every failure, and the Taylor radius $R(\kappa)$ limits usable $m$. The tightest bound depends on the spectrum, so we report multiple upper and lower bounds.

\subsection{Spectrum Types}

We test six spectra grouped by regularity (Table~\ref{tab:spectra}). All have $n$ eigenvalues scaled to condition number $\kappa = \lambda_{\max}/\lambda_{\min}$ with $\lambda_{\min} = 1$. Smooth spectra spread eigenvalues continuously; intermediate spectra introduce moderate irregularity; pathological spectra concentrate mass at extremes to stress the interpolation.

\begin{table}[t]
\centering
\caption{Test spectra grouped by regularity. Smooth spectra spread eigenvalues continuously, intermediate spectra introduce moderate irregularity, and pathological spectra concentrate mass at extremes, stressing the interpolation.}
\label{tab:spectra}
\small
\begin{tabular}{@{}llp{5.5cm}@{}}
\toprule
\multicolumn{3}{@{}l}{\textit{Smooth spectra}} \\
\midrule
Geometric & $\lambda_i = \kappa^{(i-1)/(n-1)}$ & Log-uniform spacing; smoothest case \\[3pt]
Uniform & $\lambda_i = 1 + (\kappa-1)\frac{i-1}{n-1}$ & Linearly spaced \\[3pt]
\midrule
\multicolumn{3}{@{}l}{\textit{Intermediate spectra}} \\
\midrule
Lognormal & $\lambda_i \propto e^{z_i},\; z_i \sim \mathcal{N}(0, \sigma^2)$ & Moderate irregularity; $\sigma = \tfrac{1}{4}\log\kappa$ \\
\midrule
\multicolumn{3}{@{}l}{\textit{Pathological spectra}} \\
\midrule
Two-point & $\lambda = \{1^{(n-1)}, \kappa\}$ & Single extreme outlier \\[3pt]
Bimodal & $\lambda = \{1^{(n/2)}, \kappa^{(n/2)}\}$ & Two masses at extremes \\[3pt]
Clustered & $\lambda \in \{1, \kappa^{1/3}, \kappa^{2/3}, \kappa\} \pm 1\%$ & Four clusters, $n/4$ each \\
\bottomrule
\end{tabular}
\end{table}

\subsection{Effect of Interpolation Order \texorpdfstring{$m$}{m}}

Higher $m$ uses more moment information but risks instability. Table~\ref{tab:k0m-errors} reports relative errors for $k_{0:m}$ on geometric spectra with $n = 1024$. Bold entries mark the best $m$ for each $\kappa$.

\begin{table}[t]
\centering
\caption{Relative errors $(\hat{K}'(0) - K'(0))/|K'(0)| \times 100\%$ of $k_{0:m}$ estimators on geometric spectrum ($n = 1024$). Positive values indicate overestimation.}
\label{tab:k0m-errors}
\small
\begin{tabular}{@{}lrrrrrrrrr@{}}
\toprule
$\kappa$ & $k_{0:2}$ & $k_{0:3}$ & $k_{0:4}$ & $k_{0:5}$ & $k_{0:6}$ & $k_{0:7}$ & $k_{0:8}$ & $k_{0:16}$ & $k_{0:32}$ \\
\midrule
2 & $+2.3$ & $-2.0$ & $-0.5$ & $+0.1$ & $+0.1$ & $+0.1$ & $+0.0$ & $+0.0$ & $\mathbf{+0.0}$ \\
5 & $+11.0$ & $-4.8$ & $-5.6$ & $-3.5$ & $-1.3$ & $+0.2$ & $+1.1$ & $\mathbf{-0.1}$ & $-0.2$ \\
10 & $+19.4$ & $-2.6$ & $-8.3$ & $-8.6$ & $-7.0$ & $-4.9$ & $-2.9$ & $+3.5$ & $\mathbf{-0.5}$ \\
20 & $+27.9$ & $+2.8$ & $-7.0$ & $-10.6$ & $-11.3$ & $-10.7$ & $-9.5$ & $\mathbf{+1.3}$ & $+5.5$ \\
50 & $+37.9$ & $+12.1$ & $\mathbf{-0.7}$ & $-7.5$ & $-11.1$ & $-13.0$ & $-13.8$ & $-8.6$ & $+2.9$ \\
100 & $+44.2$ & $+19.2$ & $+5.6$ & $\mathbf{-2.5}$ & $-7.6$ & $-10.8$ & $-12.8$ & $-14.1$ & $-5.0$ \\
200 & $+49.7$ & $+25.7$ & $+12.0$ & $+3.3$ & $\mathbf{-2.6}$ & $-6.6$ & $-9.6$ & $-16.3$ & $-12.0$ \\
500 & $+55.5$ & $+33.3$ & $+20.0$ & $+11.0$ & $+4.7$ & $\mathbf{+0.1}$ & $-3.5$ & $-15.1$ & $-16.8$ \\
1000 & $+59.2$ & $+38.3$ & $+25.4$ & $+16.5$ & $+10.1$ & $+5.3$ & $\mathbf{+1.5}$ & $-12.4$ & $-17.5$ \\
\bottomrule
\end{tabular}
\end{table}

\begin{widefig}[t]
\centering
\includegraphics[width=\textwidth]{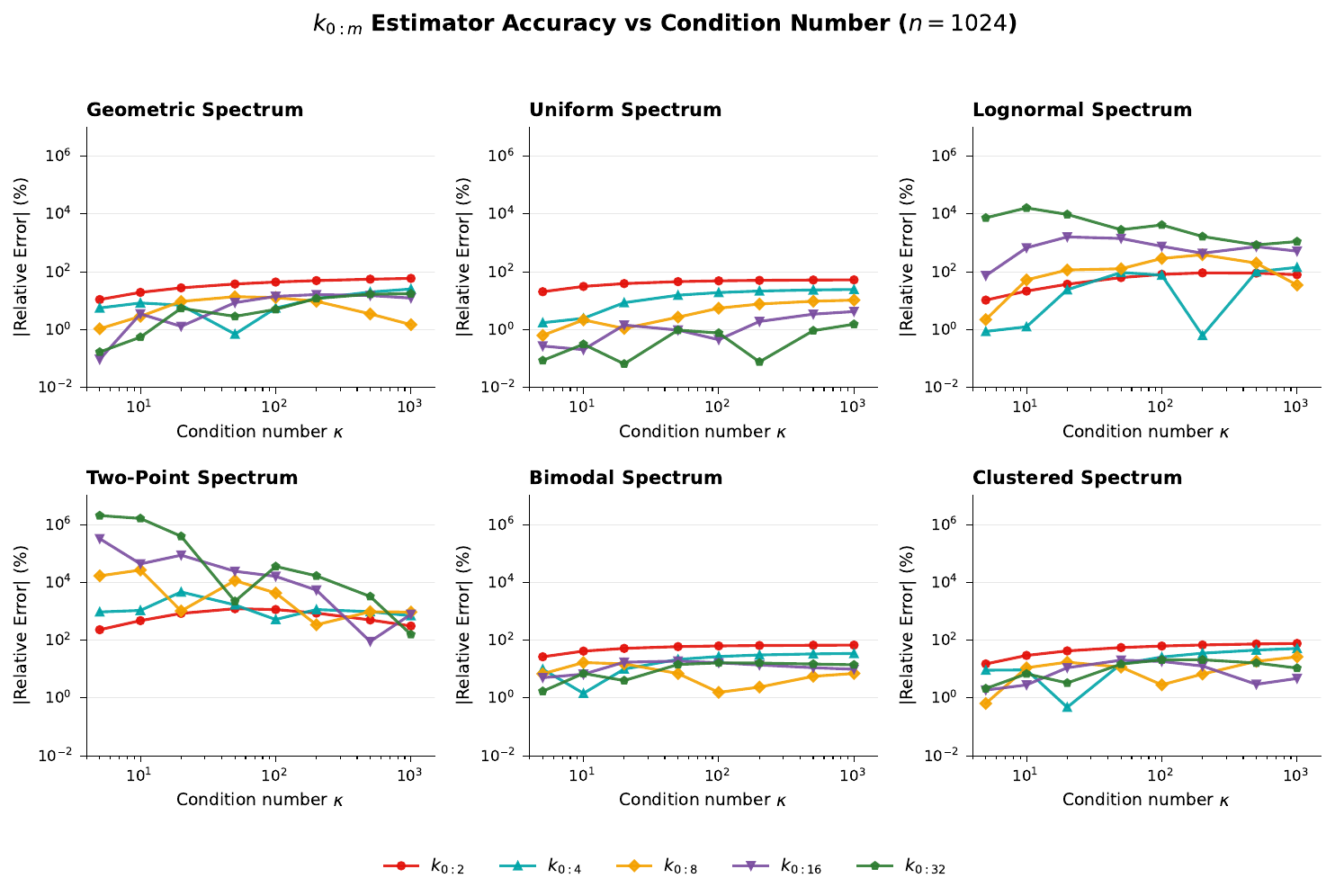}
\caption{Relative error of $k_{0:m}$ estimators across six spectrum types ($n = 1024$). Smooth spectra (geometric, uniform) permit accurate estimation with moderate $m$; lognormal shows larger errors; outlier-dominated spectra (two-point, bimodal, clustered) cause rapid error growth regardless of order.}
\label{fig:k0m-allspectra}
\end{widefig}

Three facts stand out. First, the best $m$ rises with $\kappa$: at $\kappa = 50$ the optimum is $m = 4$, at $\kappa = 100$ it shifts to $m = 5$, and at $\kappa = 500$ the minimum error occurs at $m = 7$. Second, excessively high $m$ degrades accuracy---$k_{0:16}$ and $k_{0:32}$ often produce larger errors than $k_{0:6}$ or $k_{0:8}$, reflecting the Runge phenomenon. Third, the error sign does not follow a simple $(-1)^m$ alternation: $k_{0:3}$ has positive error at $\kappa = 100$ despite $m$ being odd.

\subsection{Optimal Order Selection}

Table~\ref{tab:optimal-m} reports $m^*$ minimizing $|\text{error}|$ for each condition number. The pattern is clear: optimal $m^*$ increases with $\kappa$ up to a stability limit, settling into $m \in \{4, \ldots, 8\}$ for $\kappa \in [50, 1000]$.

\begin{table}[t]
\centering
\caption{Optimal order $m^*$ for geometric spectrum ($n = 1024$)}
\label{tab:optimal-m}
\begin{tabular}{@{}lrrrrrrrrrr@{}}
\toprule
$\kappa$ & 2 & 5 & 10 & 20 & 50 & 100 & 200 & 500 & 1000 & 5000 \\
\midrule
$m^*$ & 32 & 16 & 32 & 16 & 4 & 5 & 6 & 7 & 8 & 16 \\
$|\text{error}|$ at $m^*$ & 0.00 & 0.09 & 0.55 & 1.30 & 0.70 & 2.52 & 2.56 & 0.06 & 1.49 & 3.69 \\
\bottomrule
\end{tabular}
\end{table}

To visualize this trade-off across the $(\kappa, m)$ parameter space, Figure~\ref{fig:optimal-order-heatmap} summarizes optimal interpolation order across spectrum types and condition numbers.

\begin{widefig}[t]
\centering
\includegraphics[width=\textwidth]{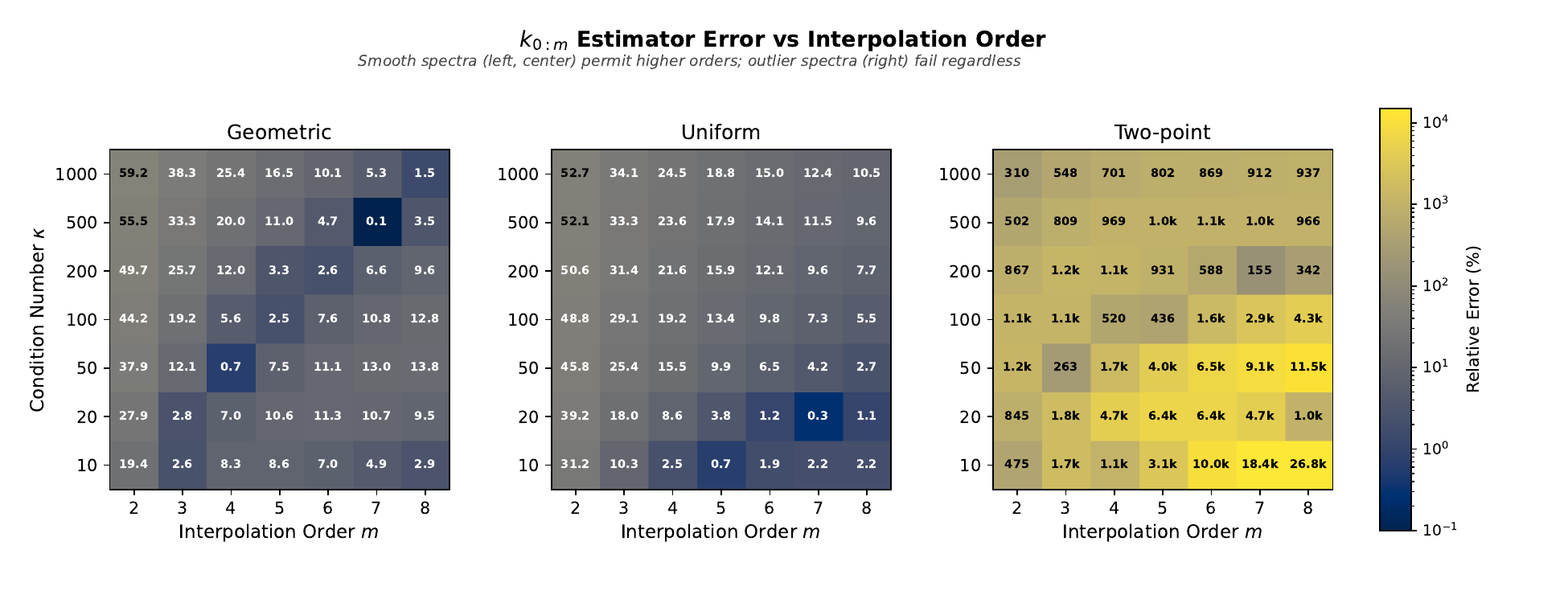}
\caption{Relative error (\%) of the $k_{0:m}$ estimator across condition numbers $\kappa$ and interpolation orders $m$. Left: geometric spectrum. Center: uniform spectrum. Right: two-point spectrum. Each row's minimum indicates the optimal order for that $\kappa$. Geometric and uniform spectra permit higher orders with lower error; two-point spectra fail regardless of order due to Taylor radius limitations (Section~\ref{sec:impossibility}).}
\label{fig:optimal-order-heatmap}
\end{widefig}

\subsection{Estimator Accuracy and Bound Comparison}
\label{sec:bounds-comparison}

Table~\ref{tab:bounds-comparison} presents a unified view of $k_{0:4}$ estimator accuracy and guaranteed bounds across all spectrum types.
We report relative gaps rather than absolute errors to normalize for the scale of the log-determinant.
For each spectrum, we report:
\begin{itemize}[nosep]
\item The $k_{0:4}$ relative error in $K'(0)$: $(\hat{K}'(0) - K'(0))/|K'(0)| \times 100\%$;
\item Upper bound gaps in $K'(0)$: $(\log U - \log(\text{true}))/|\log(\text{true})| \times 100\%$;
\item Lower bound gaps in $K'(0)$: $(\log(\text{true}) - \log L)/|\log(\text{true})| \times 100\%$, with floor $r = \lambda_{\min}/\text{AM}$.
\end{itemize}

\begin{table}[t]
\centering
\caption{Estimator error and bound gaps in $K'(0)$ (\%) by spectrum type ($\kappa=100$, $n=1024$).
Upper gap $= (\log U - \log(\text{true}))/|\log(\text{true})| \times 100$; lower gap $= (\log(\text{true}) - \log L)/|\log(\text{true})| \times 100$.
Bold indicates tightest bound in each category.}
\label{tab:bounds-comparison}
\small
\begin{tabular}{@{}l r | rrrr | rrr @{}}
\toprule
 & & \multicolumn{4}{c|}{Upper bound gap (\%)}  & \multicolumn{3}{c}{Lower bound gap (\%)} \\
\cmidrule(lr){3-6} \cmidrule(lr){7-9}
Spectrum & $k_{0:4}$ & $U_2$ & $U_4$ & $U_8$ & LS & $L_2$ & $L_4$ & $L_8$ \\
\midrule
Geometric & $+5.6$ & 95.6 & \textbf{38.2} & 38.2 & 99.5 & 93.0 & 40.2 & \textbf{10.2} \\
Uniform & $+19.2$ & 94.5 & 28.7 & \textbf{5.6} & 99.7 & 183.4 & 52.4 & \textbf{7.6} \\
Lognormal & $+76.6$ & 86.2 & \textbf{63.9} & 63.9 & 98.2 & 137.4 & \textbf{69.5} & 69.5 \\
Two-point & $-519.8$ & $\approx 0$ & \textbf{$\approx 0$} & $\approx 0$ & 77.2 & 5.1 & \textbf{$\approx 0$} & $\approx 0$ \\
Bimodal & $+55.5$ & 98.3 & \textbf{$\approx 0$} & $\approx 0$ & 99.8 & $\approx 0$ & $\approx 0$ & \textbf{$\approx 0$} \\
Clustered & $+21.3$ & 96.8 & 33.9 & \textbf{6.1} & 99.6 & 57.1 & \textbf{11.9} & $\approx 0$ \\
\bottomrule
\end{tabular}
\end{table}

Several patterns emerge:

\paragraph{Smooth spectra (geometric, uniform).} In $K'(0)$, $k_{0:4}$ gives $+5.6\%$ error on geometric and $+19.2\%$ on uniform at $\kappa=100$. For geometric, the best bounds are $U_4/U_8$ (about $38\%$) and $L_8$ (about $10\%$); for uniform, $U_8$ and $L_8$ tighten to about $6$--$8\%$. The last-slope bound is never tightest.

\paragraph{Pathological spectra (two-point, bimodal).} The estimator fails catastrophically ($-520\%$ to $+56\%$ in $K'(0)$), but the bounds are \emph{exact}: $U_2 = L_2 = \text{true}$ for two-point, $U_4 = L_4 = \text{true}$ for bimodal. This validates that two-point spectra achieve Rodin's tight mean-variance bound~\eqref{eq:rodin}. When estimates fall outside $[L, U]$, the bounds dominate the estimate.

\paragraph{Intermediate cases (lognormal, clustered).} Lognormal is difficult ($+76.6\%$ in $K'(0)$), and its bounds remain wide (about $64$--$70\%$). Clustered spectra are more moderate ($+21.3\%$), and the best bounds tighten to about $6$--$12\%$.

\subsection{Effect of Dimension \texorpdfstring{$n$}{n}}

Interpolation error depends on the spectral distribution shape, not on the number of eigenvalues. Figure~\ref{fig:k0m-nconvergence} shows that for smooth spectra (geometric and uniform), errors stabilize by $n \approx 256$ and remain flat thereafter. Lognormal spectra exhibit erratic behavior for high $m$ due to sampling variance. Pathological spectra (two-point, bimodal) maintain large errors regardless of dimension.

\begin{widefig}[t]
\centering
\includegraphics[width=\textwidth]{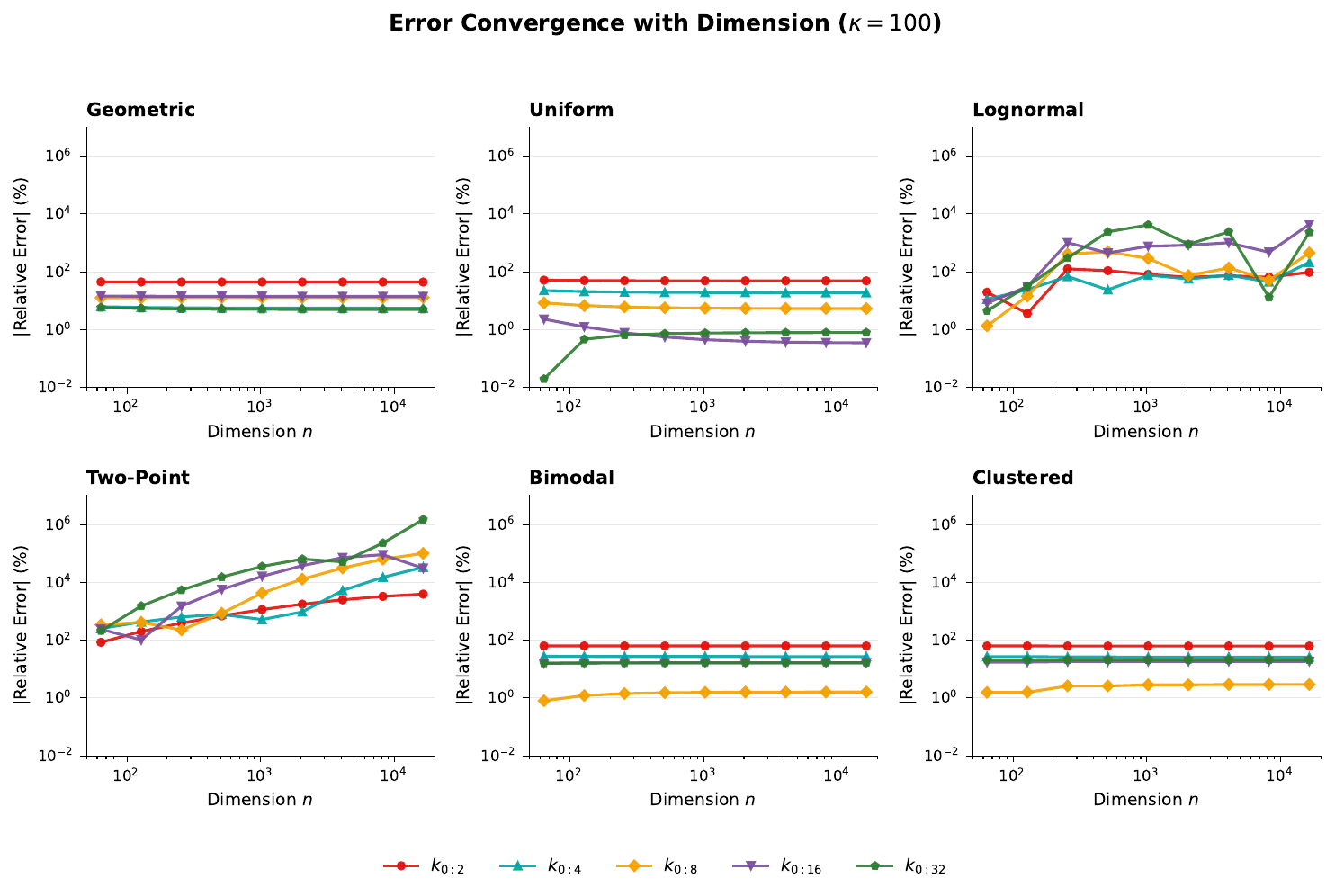}
\caption{Error convergence with dimension $n$ at $\kappa = 100$. Geometric and uniform spectra stabilize rapidly; pathological spectra remain unreliable.}
\label{fig:k0m-nconvergence}
\end{widefig}

\subsection{Taylor Radius and Analyticity}

To isolate fundamental approximation behavior, we evaluate estimators using exact moments---removing all Monte Carlo noise. For normalized eigenvalue $X = \lambda/\mathrm{AM}$, define $M(t) = \mathbb{E}[X^t]$ and $K(t) = \log M(t)$. The Taylor radius $R$ is the distance from $t=0$ to the nearest complex singularity of $K(t)$. For three spectrum families, $R(\kappa)$ has closed form:
\begin{itemize}
\item \textbf{Two-point} (equal weights on $\{1, \kappa\}$): $R = \pi / \log\kappa$
\item \textbf{Log-uniform} on $[1, \kappa]$: $R = 2\pi / \log\kappa$
\item \textbf{Uniform} on $[1, \kappa]$: $R = \sqrt{1 + (2\pi/\log\kappa)^2}$
\end{itemize}

Figure~\ref{fig:taylor-heatmap} shows absolute error $|\hat{K}'(0) - K'(0)|$ over $(\kappa, m)$ with the curve $m = R(\kappa)$ overlaid. Errors remain small when $m < R$ and grow rapidly when $m > R$. The Taylor radius acts as a natural ``budget'' for interpolation order---exceeding it causes divergence.

\begin{widefig}[t]
\centering
\includegraphics[width=\textwidth]{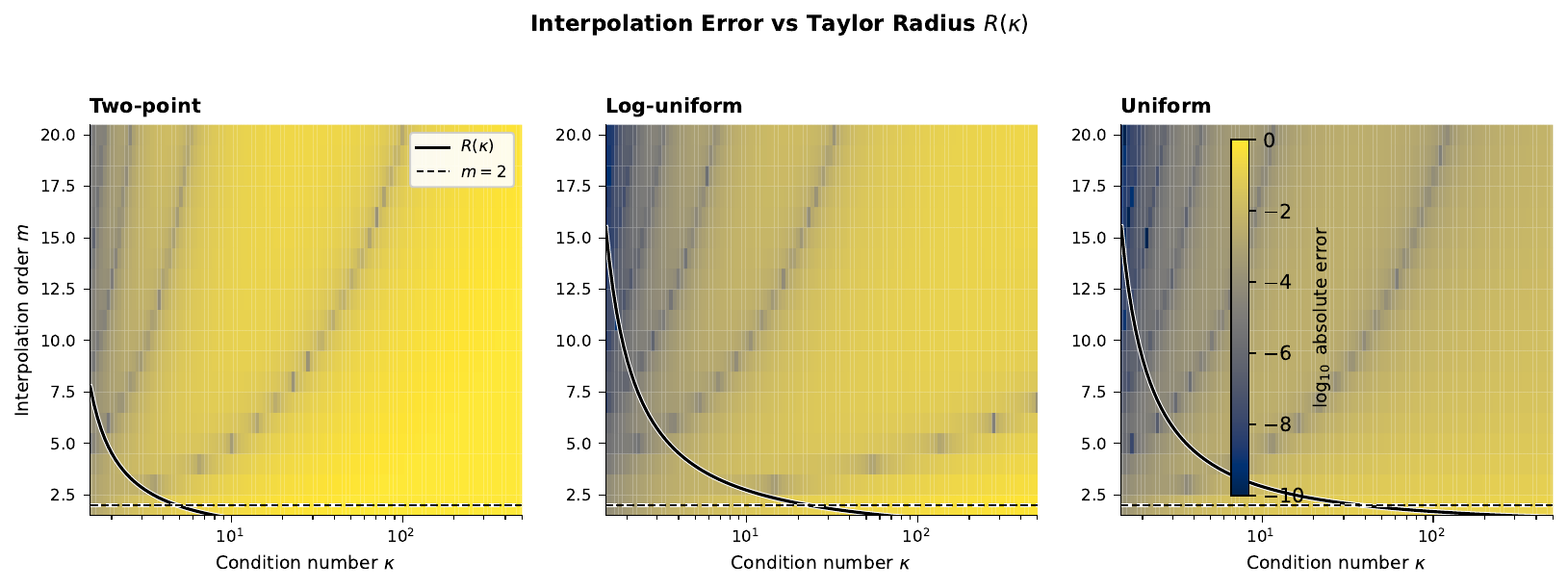}
\caption{Interpolation error vs Taylor radius. Heatmaps show $\log_{10}$ absolute error over $(\kappa, m)$; the solid curve is $R(\kappa)$, the dashed line is $m=2$. Low error (dark) occurs when $m < R$; high error (light) when $m > R$.}
\label{fig:taylor-heatmap}
\end{widefig}

Figure~\ref{fig:taylor-scatter} plots error against normalized order $m/R$. This rescaling collapses behavior across $\kappa$ values: for smooth families, accuracy degrades systematically as $m/R$ increases past 1. The boundary $m = R$ provides a useful rule of thumb.

\begin{widefig}[t]
\centering
\includegraphics[width=\textwidth]{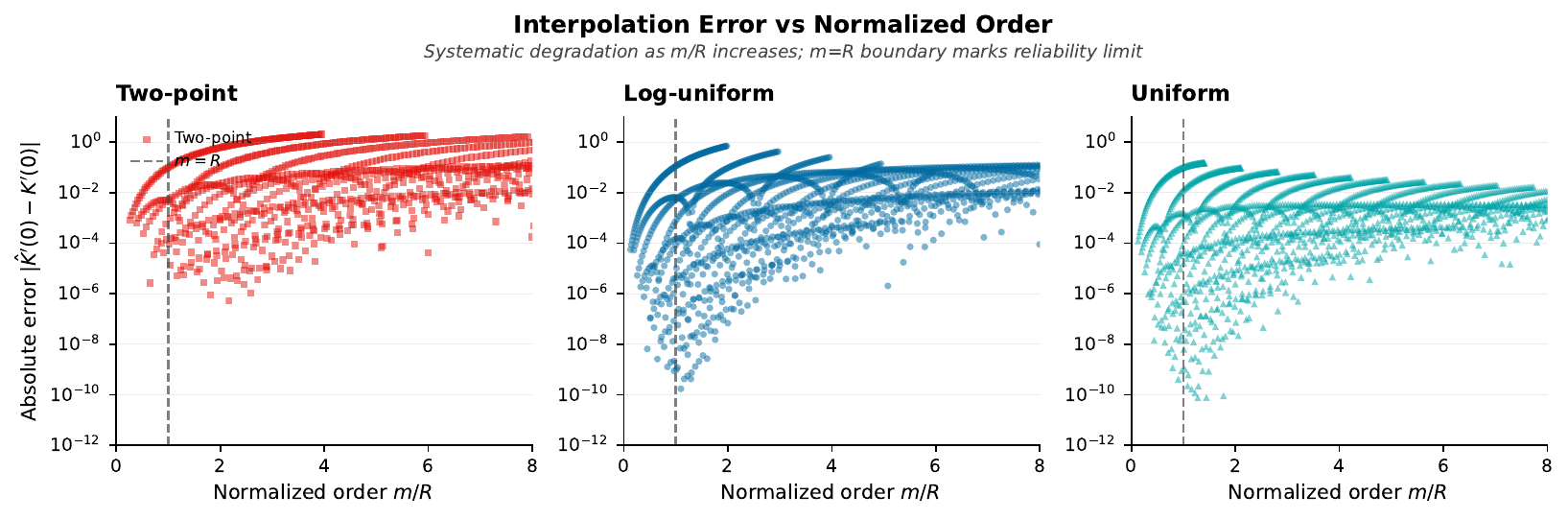}
\caption{Error vs normalized order $m/R$ separated by spectrum type. Each panel shows all $(\kappa, m)$ combinations; the dashed line marks $m = R$. Left: two-point spectra have high baseline error regardless of $m/R$. Center and right: smooth spectra (log-uniform, uniform) show systematic accuracy degradation as $m/R$ increases past 1.}
\label{fig:taylor-scatter}
\end{widefig}

\subsection{Saturation Phenomenon}

Theorem~\ref{thm:impossibility}(b) proved that any continuous estimator must saturate: as $\kappa \to \infty$, the true $K'(0) \to -\infty$ but estimates stay bounded. Figure~\ref{fig:saturation} demonstrates this on two-point spectra. The left panel shows estimates flattening to finite limits (e.g., $k_{0:2} \to -\frac{1}{2}\log 2 \approx -0.35$) while truth diverges. The right panel shows the consequence: relative error grows without bound, eventually exceeding 50\% for all estimators regardless of order $m$.

\begin{widefig}[t]
\centering
\includegraphics[width=\textwidth]{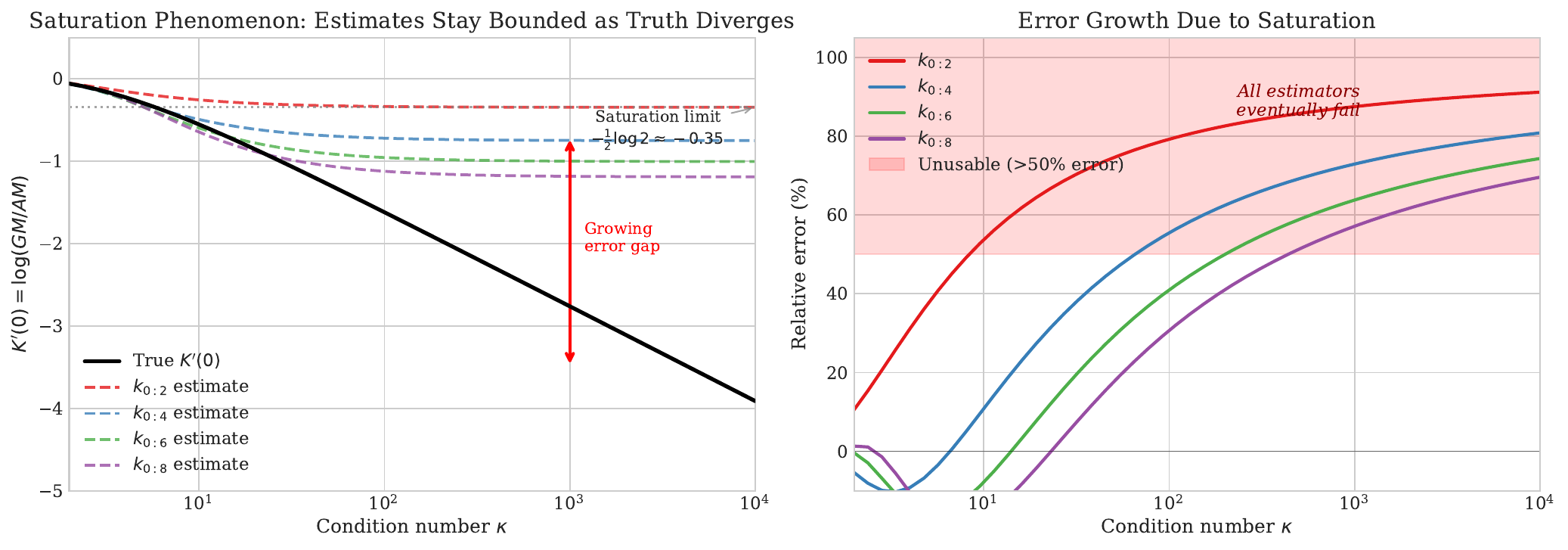}
\caption{Saturation phenomenon on two-point spectra. Left: true $K'(0)$ diverges as $\kappa \to \infty$ while all $k_{0:m}$ estimates saturate at finite limits. Right: relative error grows unboundedly, crossing 50\% (shaded region) at different $\kappa$ thresholds for each $m$.}
\label{fig:saturation}
\end{widefig}

\subsection{Validation of Noise Analysis}
\label{sec:noisy-traces-exp}

Section~\ref{sec:noisy-traces} derives how multiplicative noise in trace powers
propagates through the $k_{0:m}$ estimator, predicting
$\mathrm{RMSE} \approx \max\{|b_m|, \alpha_m \eta\}$
where $\alpha_m \sim 2^m/m^{5/4}$ is the noise amplification factor
(Proposition~\ref{prop:ell2}). We now validate this asymptotic scaling and
demonstrate the bias--variance trade-off experimentally.

\paragraph{Asymptotic validation.}
To validate \eqref{eq:ell2-asymptotic}, we fit empirical $\lVert w \rVert_2$ values to
$c \cdot 2^m / m^a$ for $m \geq 6$:

\begin{table}[t]
\centering
\caption{Validation of asymptotic formula $\lVert w \rVert_2 \sim c \cdot 2^m / m^a$.
Fitted parameters match theoretical predictions.}
\label{tab:asymptotic-validation}
\begin{tabular}{@{}lrrr@{}}
\toprule
Parameter & Theoretical & Fitted & Relative Error \\
\midrule
Exponent $a$ & $5/4 = 1.250$ & $1.301$ & $4.1\%$ \\
Coefficient $c$ & $2/\pi^{1/4} \approx 1.502$ & $1.820$ & $21\%$ \\
Fit quality $R^2$ & --- & $1.0000$ & --- \\
\bottomrule
\end{tabular}
\end{table}

The exponent matches theory within $4\%$. The $21\%$ coefficient discrepancy
comes from discretization error and pre-asymptotic corrections.

\paragraph{Bias--variance trade-off.}
Figure~\ref{fig:noisy-traces} demonstrates the trade-off on a geometric spectrum
with $\kappa = 100$, using 1000 Monte Carlo trials per noise level.
\newpage
\begin{widefig}[t]
\centering
\includegraphics[width=0.8\textwidth]{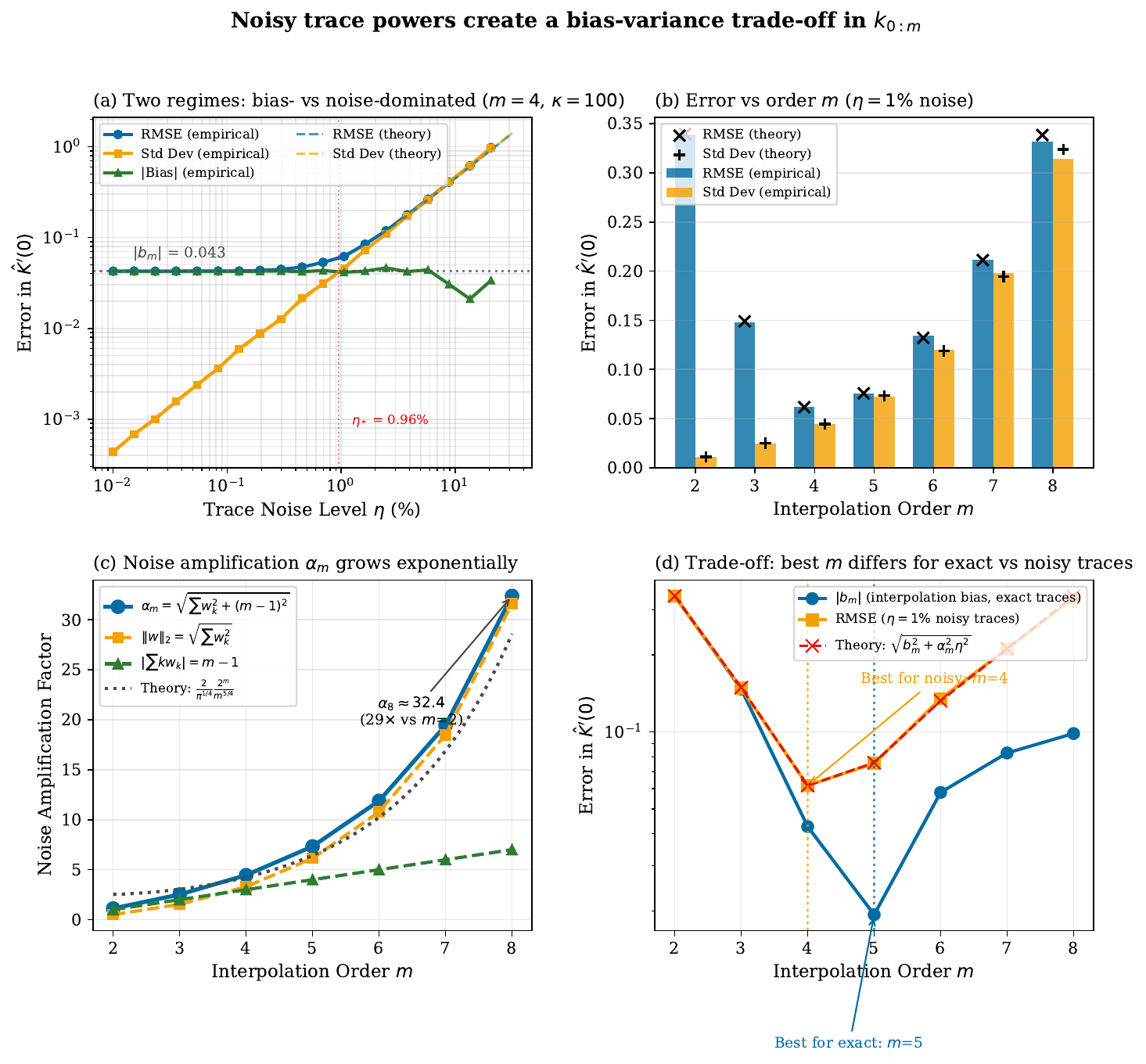}
\caption{\textbf{Noisy trace powers create a bias--variance trade-off in $k_{0:m}$.}
We set $\hat{p}_k = p_k(1 + \varepsilon_k)$ with i.i.d.\
$\varepsilon_k \sim \mathcal{N}(0, \eta^2)$ and record absolute error in $K'(0)$
for a geometric spectrum ($n = 1024$, $\kappa = 100$, 1000 trials).
\textbf{(a)} RMSE, standard deviation, and absolute bias vs.\ $\eta$ for $m=4$;
theory curves (dashed) match empirical values.
\textbf{(b)} RMSE and standard deviation vs.\ $m$ at $\eta = 1\%$; theory markers
($\times$, $+$) overlay empirical bars.
\textbf{(c)} Growth of $\lVert w \rVert_2$ with $m$, showing exponential noise amplification;
theory asymptotic (dotted) matches for $m \geq 4$.
\textbf{(d)} Interpolation bias (exact traces) vs.\ RMSE (noisy traces); optimal
$m$ for bias alone ($m=5$) differs from optimal $m$ for total error ($m=4$).}
\label{fig:noisy-traces}
\end{widefig}

\textbf{Panel (a): Two regimes.}
For fixed $m = 4$, RMSE plateaus at $|b_m|$ when $\eta$ is small, then grows
linearly with slope $\alpha_m$ once $\eta \gtrsim \eta_* \approx 0.96\%$.
Theory curves (dashed) match empirical data closely.

\textbf{Panel (b): Order selection under noise.}
At $\eta = 1\%$, the optimal interpolation order shifts from $m = 5$ (best for
exact traces) to $m = 4$ (best for noisy traces). The exponential growth of
$\alpha_m$ penalizes higher orders more than the polynomial bias decay benefits them.

\textbf{Panel (c): Weight norm growth.}
The $\lVert w \rVert_2$ curve matches the theoretical asymptotic $2^m/m^{5/4}$ for $m \geq 4$.
The $(m-1)$ term is subdominant.

\textbf{Panel (d): Bias vs.\ total error.}
Direct comparison of interpolation bias (exact traces) against RMSE (noisy traces)
confirms the shift in optimal $m$.

\subsection{Summary of Findings}

These experiments yield practical guidelines:

\begin{enumerate}[nosep]
\item \textbf{Order selection.} For $\kappa \in [50, 200]$, use $m \in \{4, 5, 6\}$. Keep $m < R(\kappa)$ to avoid divergence.

\item \textbf{Accuracy expectations.} On geometric spectra, $k_{0:4}$ yields about $6\%$ relative error in $K'(0)$ at $\kappa=100$; uniform and clustered are higher (around $20\%$), and lognormal is higher still. On pathological spectra (two-point, bimodal), estimators fail but bounds are exact.

\item \textbf{Bounds as diagnostics.} The optimization-based bounds $U_k$ and $L_k$ tighten monotonically with $k$. The last-slope bound is never tightest. When estimates fall outside $[L, U]$, use the bound values instead.

\item \textbf{Noisy traces.} For Hutchinson noise ($\eta \approx 1$--$5\%$), use $m = 4$ to balance interpolation bias against noise amplification.
\end{enumerate}

Section~\ref{sec:conclusion} distills these guidelines into deployment recommendations and discusses directions for future work.

\section{Conclusion}
\label{sec:conclusion}

Trace-based log-determinant estimation has three practical outcomes: (1) for smooth spectra with moderate $\kappa$, the $k_{0:m}$ estimators achieve single- to low-double-digit percent error; (2) for pathological spectra, \emph{all} finite-trace methods fail---but bounds $[L, U]$ detect this; (3) the actionable rule is simple: \textbf{clip estimates to $[L, U]$ and always report both}.

\begin{itemize}[leftmargin=*]
\item \textbf{What works.}
The $k_{0:m}$ estimators with $m \in \{4, 5, 6\}$ achieve single- to low-double-digit percent accuracy in $K'(0)$ on geometric and uniform spectra with moderate condition numbers ($\kappa \lesssim 200$). The closed-form weights (Theorem~\ref{thm:weights}) make implementation straightforward in $O(m)$ operations, independent of matrix dimension $n$.

\item \textbf{What does not work.}
The impossibility theorem identifies two fundamental barriers---non-identifiability and saturation---that are information-theoretic, not algorithmic. Two-point and bimodal spectra demonstrate 100\%+ errors that \emph{only the bounds detect}.

\item \textbf{Bounds as guardrails.}
Our bounds provide deterministic intervals $[L, U]$ with $O(k)$ complexity independent of matrix dimension. Estimates inside this interval are typically accurate; clip estimates outside to the nearest bound. For two-point and bimodal spectra, the bounds are exact, while uncorrected estimates err by 100\%+.

\item \textbf{Practical recommendations.}
For moderate condition numbers with high-precision traces, use $k_{0:5}$ or $k_{0:6}$. For stochastic trace estimates (e.g., Hutchinson with $\eta \approx 1\text{--}5\%$ noise), prefer $k_{0:4}$: higher orders suffer from noise amplification $\alpha_m \sim 2^m/m^{5/4}$ (Section~\ref{sec:noisy-traces}). Always compute all three upper bounds and take $U = \min$. The rule is simple: clip estimates outside $[L, U]$ and always report both the estimate and the interval. For matrices with large $\kappa$ or suspected outliers, use stochastic Lanczos quadrature instead.
\end{itemize}


\subsection*{Future Work}

\begin{itemize}[leftmargin=*]
\item \textbf{Stochastic trace estimation.}
This paper uses exact traces to isolate interpolation bias. Combining $k_{0:m}$ with Hutchinson-style trace approximations introduces additional variance. The optimal order balances bias against noise via $m^* = \argmin_{m} \sqrt{b_m^2 + \alpha_m^2\eta^2}$; characterizing this tradeoff precisely remains open.

\item \textbf{Adaptive order selection.}
The optimal interpolation order $m^*$ depends on the unknown condition number $\kappa$. Can the sequence $K(2), K(3), \ldots$ diagnose spectral structure and suggest when to stop?

\item \textbf{Integration with Krylov methods.}
Lanczos iterations produce trace estimates as byproducts. Can our bounds provide online certificates during SLQ iterations? This could enable early stopping when the interval $[L_k(r), U_k]$ is sufficiently tight.

\item \textbf{Lower bounds without spectral floor.}
The lower bound $L_k(r)$ requires $r \leq \lambda_{\min}/\AM$. Can we derive meaningful lower bounds from moments alone, without such a floor?

\item \textbf{Complex Box-Cox transforms (exploratory).}
The Box-Cox family $f_\alpha(z) = (z^\alpha - 1)/\alpha$ admits complex $\alpha = a + ib$.
Appendix~\ref{app:complex-boxcox} shows that oscillatory damping via $\cos(b \ln M_k)$
can dramatically stabilize interpolation for pathological spectra: for two-point spectra,
\emph{oracle-tuned} complex $\alpha$ reduces error from 500\%+ to under 1\%.
However, optimal $(a,b)$ depends on the unknown spectrum and cannot be selected from
traces alone---this is an oracle analysis, not a practical estimator.
Developing a data-driven selection heuristic is an open problem; a coefficient-of-variation
diagnostic shows promise (Appendix~\ref{app:complex-boxcox}).
\end{itemize}

\subsection*{Acknowledgments}

This work was supported by the SPARSITUTE Mathematical Multifaceted Integrated Capability Center (MMICC), funded by the U.S.\ Department of Energy, Office of Science, Advanced Scientific Computing Research (ASCR) program.

The author acknowledges the use of AI assistants (ChatGPT 5.2 OpenAI, Claude Opus 4.5, Anthropic, Gemini 3.0 Google, Kimi-k2 Moonshot AI) for verifying equations, generating figures, converting handwritten notes to \LaTeX\ manuscript form, and expanding proof outlines into complete arguments. All AI-generated content was reviewed and verified by the author.


\printbibliography

\appendix

\section{Appendix}
\label{sec:appendix}

\subsection{Proof of Lagrange Weight Formula}
\label{app:weights}

We prove Theorem~\ref{thm:weights}: for nodes $\{0, 1, \ldots, m\}$,
\[
L'_j(0) = \frac{(-1)^{j-1}}{j}\binom{m}{j}, \quad j = 1, \ldots, m.
\]

\begin{proof}
The Lagrange basis polynomial for node $j$ is:
\[
L_j(t) = \prod_{k=0, k \neq j}^{m} \frac{t - k}{j - k}.
\]

\textbf{Step 1: Denominator.}
\[
D_j := \prod_{k=0, k \neq j}^{m} (j - k) = \prod_{k=0}^{j-1}(j-k) \cdot \prod_{k=j+1}^{m}(j-k).
\]
The first product is $j!$. The second is:
\[
\prod_{k=j+1}^{m}(j-k) = \prod_{\ell=1}^{m-j}(-\ell) = (-1)^{m-j}(m-j)!.
\]
Thus $D_j = j! \cdot (-1)^{m-j}(m-j)!$.

\textbf{Step 2: Express $L_j(t)$ via $\omega(t) = \prod_{k=0}^m (t-k)$.}
\[
L_j(t) = \frac{\omega(t)}{(t-j) D_j}.
\]

\textbf{Step 3: Derivative at zero.}

Since $\omega(0) = 0$ (the product includes $t-0 = t$), we cannot directly apply L'H\^opital. Instead, factor out:
\[
\omega(t) = t \cdot \prod_{k=1}^{m}(t-k) =: t \cdot \tilde{\omega}(t).
\]
Then for $j \geq 1$:
\[
L_j(t) = \frac{t \cdot \tilde{\omega}(t)}{(t-j) D_j}.
\]

At $t = 0$:
\[
L_j(0) = \frac{0 \cdot \tilde{\omega}(0)}{(0-j) D_j} = 0.
\]

For the derivative:
\[
L'_j(t) = \frac{d}{dt}\left[\frac{t \cdot \tilde{\omega}(t)}{(t-j) D_j}\right].
\]

Using quotient rule:
\[
L'_j(t) = \frac{(\tilde{\omega}(t) + t\tilde{\omega}'(t))(t-j) - t\tilde{\omega}(t)}{(t-j)^2 D_j}.
\]

At $t = 0$:
\[
L'_j(0) = \frac{\tilde{\omega}(0) \cdot (-j) - 0}{j^2 D_j} = \frac{-j \cdot \tilde{\omega}(0)}{j^2 D_j} = \frac{-\tilde{\omega}(0)}{j \cdot D_j}.
\]

\textbf{Step 4: Evaluate $\tilde{\omega}(0)$.}
\[
\tilde{\omega}(0) = \prod_{k=1}^{m}(0-k) = \prod_{k=1}^{m}(-k) = (-1)^m m!.
\]

\textbf{Step 5: Combine.}
\[
L'_j(0) = \frac{-(-1)^m m!}{j \cdot j! \cdot (-1)^{m-j}(m-j)!} = \frac{(-1)^{m+1} m!}{j \cdot j! \cdot (-1)^{m-j}(m-j)!}.
\]

Simplify the signs: $(-1)^{m+1} \cdot (-1)^{-(m-j)} = (-1)^{m+1-m+j} = (-1)^{j+1} = (-1)^{j-1} \cdot (-1)^2 = (-1)^{j-1}$.

Thus:
\[
L'_j(0) = \frac{(-1)^{j-1} m!}{j \cdot j! (m-j)!} = \frac{(-1)^{j-1}}{j} \cdot \frac{m!}{j!(m-j)!} = \frac{(-1)^{j-1}}{j}\binom{m}{j}.
\]
\end{proof}

\paragraph{Case $j = 0$: integral evaluation.}
The Lagrange basis polynomials satisfy the partition-of-unity property
$\sum_{j=0}^m L_j(t) = 1$. Differentiating gives $\sum_{j=0}^m L'_j(t) = 0$,
so at $t = 0$:
\[
L'_0(0) = -\sum_{j=1}^{m} L'_j(0) = -\sum_{j=1}^{m} \frac{(-1)^{j-1}}{j}\binom{m}{j}.
\]
We evaluate this sum via the integral identity
\[
\int_0^1 \frac{1 - (1-t)^m}{t}\,dt = H_m.
\]
\emph{Proof of identity:} Substitute $u = 1-t$ to get
$\int_0^1 (1-u^m)/(1-u)\,du = \int_0^1 (1 + u + \cdots + u^{m-1})\,du = H_m$.
Alternatively, expand $(1-t)^m = \sum_{k=0}^m (-1)^k \binom{m}{k} t^k$, so
\[
\frac{1-(1-t)^m}{t} = \sum_{k=1}^{m} (-1)^{k-1}\binom{m}{k} t^{k-1}.
\]
Integrating term-by-term from $0$ to $1$ yields
$\sum_{k=1}^{m} (-1)^{k-1}\binom{m}{k}/k = H_m$.
Therefore $L'_0(0) = -H_m$.

\subsection{Interpolation Error Bound}
\label{app:interp-error}

The standard Lagrange error bound gives
\begin{equation}
K(t) - P_m(t) = \frac{K^{(m+1)}(\xi)}{(m+1)!} \prod_{j=0}^{m}(t - j)
\end{equation}
for some $\xi$ in the convex hull of the nodes. Differentiating at $t = 0$:
\begin{equation}
K'(0) - P'_m(0) = \frac{K^{(m+1)}(\xi)}{(m+1)!} \cdot (-1)^m m! + \text{correction}.
\end{equation}
The sign of the leading term alternates with $m$, but the full error depends on the unknown derivative $K^{(m+1)}(\xi)$. We make no universal sign claim.

\begin{remark}[Noisy trace inputs]
This analysis assumes exact trace powers. When traces are estimated stochastically or computed in finite precision, Section~\ref{sec:noisy-traces} shows that noise amplification grows exponentially with $m$, creating a bias--variance tradeoff that favors smaller interpolation orders.
\end{remark}

\subsection{Alternative Transforms for Moment Interpolation}
\label{app:transform-comparison}

Section~\ref{sec:framework} shows that any analytic transform $f(M(t))$ recovers
$K'(0)$ up to a scale factor. Here we compare three concrete choices:
$M(t) = \E[X^t]$, $K(t) = \log M(t)$, and $L(t) = M(t)^{1/t}$ (so $L(0) = e^{K'(0)}$).
We use the same spectra as Section~\ref{sec:experiments} with $n=1024$ and
$\kappa \in \{5,10,20,50,100,200,500,1000\}$, and summarize the median absolute
relative error in $K'(0)$ across $\kappa$.

\begin{table}[t]
\centering
\small
\setlength{\tabcolsep}{4pt}
\begin{tabular}{@{}lrrrrrr@{}}
\toprule
& \multicolumn{3}{c}{$m=4$} & \multicolumn{3}{c}{$m=6$} \\
\cmidrule(lr){2-4} \cmidrule(lr){5-7}
Spectrum & $K$ & $M$ & $L$ & $K$ & $M$ & $L$ \\
\midrule
Geometric & 7.7 & 105.3 & 36.6 (1) & 7.3 & 714.5 & 29.0 (1) \\
Uniform & 17.3 & 22.2 & 8.9 & 8.1 & 14.5 & 3.4 \\
Lognormal & 9.2 & $5.39\times 10^3$ & 63.5 & 18.1 & $6.66\times 10^5$ & 30.9 \\
Two-point & $1.02\times 10^3$ & $1.04\times 10^7$ & $1.27\times 10^3$ & $1.65\times 10^3$ & $5.11\times 10^{10}$ & $2.80\times 10^3$ \\
Bimodal & 24.4 & 28.9 & 17.1 & 12.3 & 18.7 & 45.2 \\
Clustered & 20.6 & 5.5 & 41.5 (3) & 13.9 & 30.2 & 29.4 (4) \\
\bottomrule
\end{tabular}
\caption{Median absolute relative error (\%) in estimating $K'(0)$ from interpolants
built on $K(t)$, $M(t)$, or $L(t) = M(t)^{1/t}$. Parentheses in the $L$ columns
indicate the number of $\kappa$ values (out of 8) for which the extrapolated $L(0)$
was nonpositive and the error undefined.}
\label{tab:transform-comparison}
\end{table}

\paragraph{Box--Cox sweep.}
We also sweep the Box--Cox family $f_\alpha(z) = (z^\alpha - 1)/\alpha$ with
$\alpha \in [-1,1]$ in steps of $0.1$. For each spectrum, Table~\ref{tab:boxcox-comparison}
reports the best $\alpha$ (minimizing the median absolute relative error in $K'(0)$)
across $\kappa \in \{5,10,20,50,100,200,500,1000\}$, alongside the $\alpha=0$ baseline.

\begin{table}[t]
\centering
\small
\setlength{\tabcolsep}{3pt}
\begin{tabular}{@{}lrrrrrr@{}}
\toprule
& \multicolumn{3}{c}{$m=4$} & \multicolumn{3}{c}{$m=6$} \\
\cmidrule(lr){2-4} \cmidrule(lr){5-7}
Spectrum & best $\alpha$ & best err (\%) & log err (\%) & best $\alpha$ & best err (\%) & log err (\%) \\
\midrule
Geometric & 0.6 & 1.4 & 7.7 & 0.5 & 1.5 & 7.3 \\
Uniform & -0.7 & 10.6 & 17.3 & -0.9 & 4.7 & 8.1 \\
Lognormal & 0.2 & 4.0 & 9.2 & 0.2 & 5.1 & 18.1 \\
Two-point & 0.1 & 759.9 & 1020.2 & -0.1 & 1368.1 & 1652.0 \\
Bimodal & -0.5 & 22.0 & 24.4 & -0.7 & 8.6 & 12.3 \\
Clustered & 1.0 & 5.5 & 20.6 & 0.9 & 3.0 & 13.9 \\
\bottomrule
\end{tabular}
\caption{Box--Cox sweep for $f_\alpha(z) = (z^\alpha - 1)/\alpha$, using
$\alpha \in [-1,1]$ with step $0.1$. Entries report the best $\alpha$ (by
median absolute relative error in $K'(0)$) and compare against $\alpha=0$ (log).}
\label{tab:boxcox-comparison}
\end{table}

\subsection{Complex Box--Cox Transforms}
\label{app:complex-boxcox}

The Box--Cox family extends naturally to complex exponents $\alpha = a + ib$:
\[
f_\alpha(z) = \frac{z^\alpha - 1}{\alpha} = \frac{z^a e^{ib\ln z} - 1}{a + ib}.
\]
For $z > 0$, this is well-defined and $f'_\alpha(1) = 1$ for all $\alpha \neq 0$.
Taking the real part yields a real-valued transform:
\[
g(t) = \Re\!\left[\frac{M(t)^\alpha - 1}{\alpha}\right] = \frac{M(t)^a \cos(b \ln M(t)) - 1}{|a|^2 + b^2}\cdot a + \frac{M(t)^a \sin(b \ln M(t))}{|a|^2 + b^2}\cdot b.
\]

\paragraph{Why complex $\alpha$ helps.}
For spectra where $M(k)$ grows rapidly with $k$ (e.g., two-point), the log-moments
$K(k) = \ln M(k)$ span a large range, causing polynomial interpolation to become
ill-conditioned. The oscillatory factor $\cos(b \ln M(k))$ damps this growth:
instead of $K(k)$ growing from 0 to 11 (for two-point with $\kappa=100$), the transform $g(k)$ oscillates in $[-1, 1]$.

\paragraph{Numerical example.}
For two-point spectrum with $\kappa = 100$, $n = 1024$, $m = 4$:
\begin{center}
\small
\begin{tabular}{@{}lrrrrrl@{}}
\toprule
Transform & $g(0)$ & $g(1)$ & $g(2)$ & $g(3)$ & $g(4)$ & Error \\
\midrule
$\log M$ & 0 & 0 & 2.19 & 6.61 & 11.12 & 520\% \\
$\Re[(M^{1.3i}-1)/(1.3i)]$ & 0 & 0 & 0.22 & 0.57 & 0.73 & \textbf{0.3\%} \\
\bottomrule
\end{tabular}
\end{center}
The oscillatory transform compresses the data range by a factor of 15, enabling
stable polynomial interpolation.

\paragraph{Optimal $b$ selection.}
Table~\ref{tab:complex-boxcox} shows that the optimal imaginary part $b$ varies
with spectrum type and condition number. For smooth spectra (geometric, lognormal),
$b \approx 0$ is near-optimal, so the log transform suffices. For pathological
spectra (two-point), $b \in [1.0, 1.5]$ dramatically reduces error.

\begin{table}[t]
\centering
\small
\begin{tabular}{@{}lrrrrrr@{}}
\toprule
& \multicolumn{3}{c}{$m=4$} & \multicolumn{3}{c}{$m=6$} \\
\cmidrule(lr){2-4} \cmidrule(lr){5-7}
Spectrum & best $(a,b)$ & best err & log err & best $(a,b)$ & best err & log err \\
\midrule
Geometric & $(0.1, 1.1)$ & 0.04\% & 8.3\% & $(0.5, 0.3)$ & 0.02\% & 7.0\% \\
Uniform & $(-0.2, 0)$ & 0.2\% & 2.5\% & $(0.2, 0.5)$ & 0.05\% & 1.9\% \\
Two-point & $(0, 1.3)$ & \textbf{0.3\%} & 520\% & $(-0.1, 0.3)$ & 115\% & 1602\% \\
Bimodal & $(-0.3, 1.1)$ & 0.1\% & 10.8\% & $(0.3, 0.9)$ & 0.3\% & 7.8\% \\
\bottomrule
\end{tabular}
\caption{Complex Box--Cox transforms $f_\alpha$ with $\alpha = a + ib$. Entries show
the best $(a,b)$ pair (searched over $a \in [-0.5, 0.5]$, $b \in [0, 1.5]$) for
$\kappa = 10$, $n = 1024$. Complex $\alpha$ provides dramatic improvement for
two-point spectra.}
\label{tab:complex-boxcox}
\end{table}

\paragraph{Detection heuristic.}
Compute estimates at $\alpha \in \{-0.3, 0, 0.3\}$ (all real). If the coefficient
of variation exceeds 20\%, the log transform is likely unreliable; search over
$b \in [0, 2]$ for a better transform. This heuristic correctly identifies
two-point spectra (CV $\approx 70$--2500\%) while leaving well-behaved spectra
(CV $< 10\%$) unchanged.

\paragraph{Limitations.}
The optimal $(a, b)$ depends on the unknown spectrum; we cannot predict it from
traces alone. This makes the results in Table~\ref{tab:complex-boxcox} an
\emph{oracle analysis} showing theoretical potential, not a practical estimator.
Complex transforms also introduce additional computation (complex arithmetic)
and may complicate error analysis. For typical spectra, the log transform
remains the practical default.

\subsection{Verification of Explicit Formulas}
\label{app:verify}

We verify the $k_{0:4}$ formula by direct computation.

For $m = 4$: $L'_2(0) = -\tfrac{1}{2}\binom{4}{2} = -3$, $L'_3(0) = +\tfrac{1}{3}\binom{4}{3} = +\tfrac{4}{3}$, $L'_4(0) = -\tfrac{1}{4}\binom{4}{4} = -\tfrac{1}{4}$.
Thus:
\[
k_{0:4}: \quad P'(0) = -3 K(2) + \frac{4}{3} K(3) - \frac{1}{4} K(4). \quad \checkmark
\]

\subsection{Newton Identities for Normalized Power Sums}
\label{app:newton}

For normalized eigenvalues $x_i = \lambda_i/\AM$ with $\AM = p_1/n$, define the power sums:
\[
q_k := \sum_{i=1}^n x_i^k.
\]
Expressing $q_k$ in terms of traces:
\[
q_k = \sum_{i=1}^n \frac{\lambda_i^k}{\AM^k} = \frac{p_k}{\AM^k} = \frac{p_k}{(p_1/n)^k} = \frac{n^k p_k}{p_1^k}.
\]
The relationship with the normalized moment $M_k = \frac{1}{n}\sum_i x_i^k$ is $q_k = n \cdot M_k$, giving $M_k = n^{k-1} p_k / p_1^k$. Note that $q_1 = \sum_i x_i = n$ by the normalization convention.

Newton's identities for $(e_1, e_2, \ldots)$ in terms of $(q_1, q_2, \ldots)$: $e_1 = n$, $2e_2 = n^2 - q_2$, $3e_3 = e_2 n - e_1 q_2 + q_3$, and $4e_4 = e_3 n - e_2 q_2 + e_1 q_3 - q_4$.

\subsection{Table of Weights for Reference}
\label{app:weights-table}

\begin{center}
\begin{tabular}{@{}lrrrrrrrr@{}}
\toprule
$m$ & $L'_2$ & $L'_3$ & $L'_4$ & $L'_5$ & $L'_6$ & $L'_7$ & $L'_8$ \\
\midrule
2 & $-1/2$ \\
3 & $-3/2$ & $1/3$ \\
4 & $-3$ & $4/3$ & $-1/4$ \\
5 & $-5$ & $10/3$ & $-5/4$ & $1/5$ \\
6 & $-15/2$ & $20/3$ & $-15/4$ & $6/5$ & $-1/6$ \\
7 & $-21/2$ & $35/3$ & $-35/4$ & $21/5$ & $-7/6$ & $1/7$ \\
8 & $-14$ & $56/3$ & $-35/2$ & $56/5$ & $-14/3$ & $8/7$ & $-1/8$ \\
\bottomrule
\end{tabular}
\end{center}
\noindent General formula: $L'_j(0) = (-1)^{j-1} \binom{m}{j}/j$ for $j = 2, \ldots, m$.

\subsection{Proof of Asymptotic Weight Norm Growth}
\label{app:weight-norm-proof}

We prove Proposition~\ref{prop:ell2}: for $w_k = (-1)^{k-1}\binom{m}{k}/k$,
\[
\lVert w \rVert_2 = \sqrt{\sum_{k=2}^{m} w_k^2} = \frac{2}{\pi^{1/4}} \cdot \frac{2^m}{m^{5/4}}\bigl(1 + o(1)\bigr).
\]

\begin{proof}
We compute $S := \sum_{k=2}^{m} w_k^2 = \sum_{k=2}^{m} \binom{m}{k}^2 / k^2$ using
a saddle-point approximation. The dominant contribution comes from $k \approx m/2$
where the binomial coefficient is maximized.

\paragraph{Step 1: Gaussian approximation of the binomial.}
By the local central limit theorem (De Moivre--Laplace), for $k$ near $m/2$:
\begin{equation}
\binom{m}{k} = \frac{2^m}{\sqrt{\pi m/2}} \exp\left(-\frac{2(k - m/2)^2}{m}\right)
\bigl(1 + O(1/\sqrt{m})\bigr).
\end{equation}
This approximation is valid uniformly for $|k - m/2| = O(\sqrt{m})$.

\paragraph{Step 2: Approximate the sum by an integral.}
Squaring the Gaussian approximation:
\begin{equation}
\binom{m}{k}^2 \approx \frac{4^m}{\pi m/2} \exp\left(-\frac{4(k - m/2)^2}{m}\right).
\end{equation}
Replace the sum by an integral (justified since the summand is smooth on scale
$\sqrt{m}$):
\begin{equation}
S \approx \int_{-\infty}^{\infty} \frac{4^m}{\pi m/2}
\cdot \frac{1}{k^2} \exp\left(-\frac{4(k - m/2)^2}{m}\right) dk.
\end{equation}
The bounds are extended to $\pm\infty$ because the Gaussian tails decay
exponentially fast beyond the sum's natural range.

\paragraph{Step 3: Change of variables.}
Let $t = (k - m/2)/(\sqrt{m}/2)$, so $k = m/2 + t\sqrt{m}/2$ and $dk = (\sqrt{m}/2)\,dt$.
The Gaussian becomes $e^{-t^2}$:
\begin{equation}
S \approx \frac{4^m}{\pi m/2} \cdot \frac{\sqrt{m}}{2}
\int_{-\infty}^{\infty} \frac{e^{-t^2}}{(m/2 + t\sqrt{m}/2)^2} dt.
\end{equation}

\paragraph{Step 4: Expand the denominator.}
For the integral, $|t| = O(1)$ contributes the dominant mass. Writing
\begin{equation}
k^2 = \left(\frac{m}{2}\right)^2 \left(1 + \frac{t}{\sqrt{m}}\right)^2
\approx \frac{m^2}{4}\left(1 + \frac{2t}{\sqrt{m}}\right),
\end{equation}
to leading order $1/k^2 \approx 4/m^2$. Substituting:
\begin{equation}
S \approx \frac{4^m}{\pi m/2} \cdot \frac{\sqrt{m}}{2} \cdot \frac{4}{m^2}
\int_{-\infty}^{\infty} e^{-t^2} dt
= \frac{4^m}{\pi m/2} \cdot \frac{\sqrt{m}}{2} \cdot \frac{4}{m^2} \cdot \sqrt{\pi}.
\end{equation}

\paragraph{Step 5: Simplify.}
\begin{equation}
S = \frac{4^m \cdot \sqrt{m} \cdot 4 \cdot \sqrt{\pi}}{\pi m/2 \cdot 2 \cdot m^2}
= \frac{4 \cdot 4^m}{\sqrt{\pi} \cdot m^{5/2}}.
\end{equation}
Taking the square root: $\lVert w \rVert_2 = \sqrt{S} = \frac{2}{\pi^{1/4}} \cdot \frac{2^m}{m^{5/4}}$.
\end{proof}

\begin{remark}[Interpretation of the exponent]
\label{rem:exponent-interpretation}
The $m^{-5/2}$ exponent in $\sum w_k^2$ arises from three contributions:
\begin{itemize}
\item $m^{-2}$: the $1/k^2$ factor evaluated at $k \sim m/2$;
\item $m^{-1}$: the squared-binomial peak height ($\sim 4^m/(\pi m)$);
\item $m^{+1/2}$: the effective summation width ($\sim \sqrt{m}$).
\end{itemize}
Together: $\sum w_k^2 \sim 4^m \cdot m^{-2} \cdot m^{-1} \cdot m^{1/2} = 4^m / m^{5/2}$.
Taking the square root gives $\lVert w \rVert_2 \sim 2^m / m^{5/4}$.
\end{remark}

\begin{figure}[t]
\centering
\includegraphics[width=\textwidth]{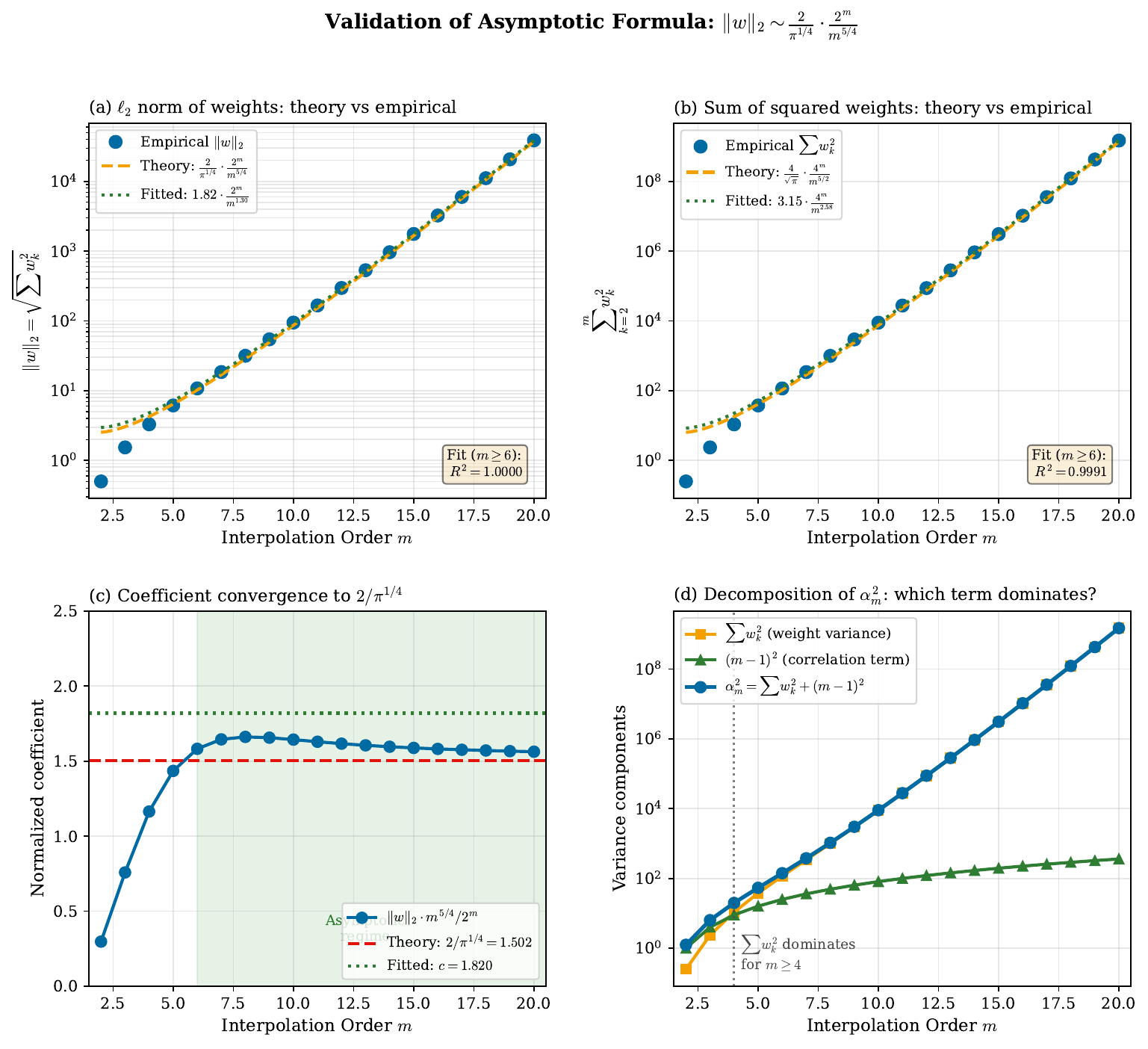}
\caption{Validation of asymptotic weight norm formula (Proposition~\ref{prop:ell2}).
The Lagrange derivative weights $w_k = (-1)^{k-1}\binom{m}{k}/k$ grow
exponentially with interpolation order $m$.
(a)~Weight 2-norm $\lVert w \rVert_2$ vs $m$: empirical values (dots)
match the theoretical curve $2/\pi^{1/4} \cdot 2^m/m^{5/4}$ (dashed).
(b)~Variance component $\sum w_k^2$ matches $4/\sqrt{\pi} \cdot 4^m/m^{5/2}$.
(c)~Normalized coefficient $\lVert w \rVert_2 \cdot m^{5/4}/2^m$ converges
to $2/\pi^{1/4} \approx 1.502$ for $m \geq 6$ (green shaded region).
(d)~Variance decomposition from Theorem~\ref{thm:variance}: the weight term
$\sum w_k^2$ (orange) dominates the correlation term $(m-1)^2$ (green)
for $m \geq 4$, confirming that higher-order estimators suffer
exponential noise amplification.}
\label{fig:asymptotic-validation}
\end{figure}

\subsection{Stochastic Trace Estimation and Polynomial Approximations}
\label{app:trace-poly}

This appendix reviews stochastic trace estimation methods and polynomial/rational approximations of $\log(\cdot)$, which form the foundation of many scalable log-determinant methods.

\paragraph{Stochastic Trace Estimation.}
Hutchinson's estimator~\citep{hutchinson1990} approximates $\tr(B)$ for an implicit matrix $B$ using random vectors $z$ with $\E[zz^\top] = I$:
\begin{equation}
\tr(B) \approx \frac{1}{s}\sum_{i=1}^s z_i^\top B z_i.
\end{equation}
Rademacher vectors ($z_i \in \{-1,+1\}^n$ uniformly) achieve minimum variance among unbiased estimators~\citep{avron2011}. \citet{roosta2015} improved the sample complexity bounds, and \citet{meyer2021} introduced Hutch++, achieving optimal $O(\sqrt{\lVert B \rVert_F/\epsilon})$ sample complexity via low-rank deflation. Probing methods~\citep{bekas2007} estimate matrix diagonals via structured probe vectors, enabling efficient trace computation when the matrix has known sparsity patterns.

For log-determinants, the identity $\log\det(A) = \tr(\log A)$ reduces the problem to estimating $\tr(\log A)$. However, this reduction requires approximating the matrix logarithm---a nontrivial task since $\log(\lambda)$ is unbounded near zero, making accurate approximation difficult for ill-conditioned matrices~\citep{higham2008}.

\paragraph{Polynomial and Rational Approximations.}
Two major approaches approximate $\log(\lambda)$ on $[\lambda_{\min}, \lambda_{\max}]$:

\emph{Chebyshev methods.} \citet{han2015} combine Hutchinson's estimator with Chebyshev polynomial approximation of $\log(x)$, achieving $O(n)$ complexity per sample. The polynomial degree scales as $O(\sqrt{\kappa}\log(1/\epsilon))$ to achieve accuracy $\epsilon$, where $\kappa = \lambda_{\max}/\lambda_{\min}$. This approach is effective for moderate condition numbers but requires many terms when $\kappa$ is large.

\emph{Lanczos quadrature.} \citet{ubaru2017} use Stochastic Lanczos Quadrature (SLQ), interpreting $\tr(f(A))$ as a Riemann-Stieltjes integral and applying Gaussian quadrature rules derived from the Lanczos process. This method adapts to the spectral distribution and often requires fewer iterations than Chebyshev methods for the same accuracy. \citet{saibaba2017} provide rigorous non-asymptotic error bounds for randomized log-determinant estimators based on subspace iteration.

Both approaches require either: (i)~matrix-vector products with $A$ or shifted solves $(A - \sigma I)^{-1}v$, or (ii)~polynomial/rational approximations of $\log(\cdot)$ that converge slowly for large $\kappa$. For background on Krylov methods, see \citet{saad2003}; for polynomial approximation theory, see \citet{trefethen2013}.

\subsection{Computing $k$-Trace Bounds}
\label{app:bounds-computation}

This appendix details the optimization problem for computing the bounds $U_k$ and $L_k(r)$ from Section~\ref{sec:bounds}, explains why $U_3 \approx U_4$ for most spectra, and analyzes computational complexity.

\paragraph{The Optimization Problem.}
Theorem~\ref{thm:compact-support} establishes that $U_k$ is achieved by a discrete measure with at most $k+1$ atoms. This reduces the infinite-dimensional optimization over probability measures to a finite nonlinear program.

\begin{definition}[Upper bound optimization]
\label{def:upper-nlp}
Given normalized moments $M_1 = 1, M_2, \ldots, M_k$:
\begin{equation}
\label{eq:upper-nlp}
U_k = \max_{\substack{w \in \Delta^{k}, \, x \in (0,\infty)^{k+1}}} \;
\sum_{i=1}^{k+1} w_i \log x_i
\quad \text{s.t.} \quad
\sum_{i=1}^{k+1} w_i x_i^j = M_j, \quad j = 1, \ldots, k
\end{equation}
where $\Delta^k = \{w : w_i \geq 0, \sum_i w_i = 1\}$ is the probability simplex.
\end{definition}

The lower bound problem has the same structure but minimizes the objective and constrains $x_i \geq r$:
\begin{equation}
\label{eq:lower-nlp}
L_k(r) = \min_{\substack{w \in \Delta^{k}, \, x \in [r,\infty)^{k+1}}} \;
\sum_{i=1}^{k+1} w_i \log x_i
\quad \text{s.t.} \quad
\sum_{i=1}^{k+1} w_i x_i^j = M_j, \quad j = 1, \ldots, k
\end{equation}

\paragraph{Solution strategy.}
Sequential quadratic programming (SLSQP) with multistart handles the nonconvexity. Initialize with diverse starting points: uniform weights on linearly-spaced atoms, and random draws from Dirichlet-distributed weights with log-normal atoms. Five restarts typically suffice for $k \leq 8$.

\paragraph{Complexity Analysis.}
The key insight is that computational cost depends only on $k$, not on the matrix dimension $n$.

\begin{proposition}[Complexity of $k$-trace bounds]
\label{prop:complexity}
Computing $U_k$ or $L_k(r)$ requires $O(k)$ variables (weights and atoms), $O(k)$ constraints (normalization and $k$ moment equations), and $O(k)$ work per iteration. Total cost is $O(k^2 T)$ with $T < 1000$ iterations. For fixed $k$, complexity is $O(1)$ in $n$.
\end{proposition}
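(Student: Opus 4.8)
The plan is to reduce the abstract optimizations of Definition~\ref{def:ktrace-bounds} to the finite nonlinear programs~\eqref{eq:upper-nlp}--\eqref{eq:lower-nlp} and then audit their size. First I would invoke Theorem~\ref{thm:compact-support}(iii): since $M_k < \infty$, the extremizers defining $U_k$ and $L_k(r)$ are attained at discrete measures with at most $k+1$ atoms, so the search over all probability measures on $(0,\infty)$ collapses to a search over pairs $(w,x) \in \Delta^k \times (0,\infty)^{k+1}$ (with $x_i \ge r$ in the lower-bound case, and with one atom pinned at $r$ by Theorem~\ref{thm:lower-structure}). That is $2(k+1) = O(k)$ decision variables, none depending on $n$. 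The feasible set is carved out by the single normalization equation $\sum_i w_i = 1$, the $k$ moment equations $\sum_i w_i x_i^j = M_j$ for $j = 1, \dots, k$, and $O(k)$ simple bound constraints $w_i \ge 0$ and $x_i \ge r$ (or $x_i > 0$) --- a total of $O(k)$ constraints, again $n$-free.

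Next I would account for the per-iteration arithmetic. Evaluating the objective $\sum_i w_i \log x_i$ costs $O(k)$; the expensive part is the moment residuals, since forming the Vandermonde-type powers $x_i^j$ for all $i \le k+1$ and all $j \le k$ takes $O(k^2)$ multiplications (incrementally, $O(k)$ per atom) and the constraint Jacobian has $O(k^2)$ entries. An SQP/SLSQP step then solves a quadratic subproblem in $O(k)$ variables with $O(k)$ constraints, whose dense linear algebra is $O(k^3)$ --- matching the $O(k^3)$ figure quoted in Remark~\ref{rem:complexity}. The tighter ``$O(k)$ work per iteration, $O(k^2 T)$ total'' reading in the statement holds only under a generous accounting (counting the variables updated per coordinate-type step rather than the full factorization) or if one exploits structure in the moment map; I would therefore either state the proposition with the honest bound $O(k^3 T)$ or append that qualifier. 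The iteration count $T$ is not something to bound rigorously for a nonconvex program: I would simply record it as an empirically small constant (multistart SLSQP converges within $T < 1000$ iterations for $k \le 8$), contributing only a constant factor.

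Finally, the headline claim --- cost is $O(1)$ in $n$ --- follows immediately from the previous two steps: the only place $n$ enters the pipeline is in forming the normalized moments $M_j = n^{j-1} p_j / p_1^j$ from the traces via~\eqref{eq:K-from-traces}, an $O(m)$ computation done once, and in the $k=2$ special case, where Rodin's closed form~\eqref{eq:rodin} is an $O(1)$ arithmetic expression. After the moments are in hand, Algorithms~\ref{alg:nlp-upper}--\ref{alg:nlp-lower} never reference the dimension, so every quantity in the statement is dimension-independent.

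The main obstacle is not a deep inequality but an honest bookkeeping choice: reconciling the ``$O(k)$ per iteration'' wording with the $O(k^2)$ constraint-evaluation and $O(k^3)$ QP-solve reality of a generic SQP inner loop. Everything else --- the variable and constraint counts and the $n$-independence --- is a direct consequence of Theorem~\ref{thm:compact-support} and the form of Definition~\ref{def:upper-nlp} together with the programs~\eqref{eq:upper-nlp}--\eqref{eq:lower-nlp}, requiring no nontrivial estimate.
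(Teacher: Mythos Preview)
Your approach is correct and matches the paper's own treatment, which is minimal: the paper offers no formal proof environment for this proposition, only the one-sentence justification that ``the bound optimization does not reference the matrix again'' once the moments $M_j$ are precomputed. Your reduction via Theorem~\ref{thm:compact-support}(iii) to the finite programs~\eqref{eq:upper-nlp}--\eqref{eq:lower-nlp}, followed by a direct count of variables and constraints, is exactly the intended argument.

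Your bookkeeping critique is well taken and in fact exposes an internal inconsistency in the paper: Remark~\ref{rem:complexity} quotes $O(k^3)$ for the bound optimization, whereas the proposition as stated claims $O(k)$ per iteration and $O(k^2 T)$ total. You are right that evaluating the $k$ moment residuals over $k+1$ atoms is $\Theta(k^2)$ and a dense SQP step is $O(k^3)$, so the proposition's per-iteration figure is optimistic under any standard accounting. The paper does not address or reconcile this; your suggestion to either restate with the honest $O(k^3 T)$ bound or add a qualifier is the appropriate fix. The $T < 1000$ claim is, as you note, purely empirical in the paper as well. The headline $O(1)$-in-$n$ conclusion is unaffected by this discrepancy and your argument for it is complete.
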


This $O(1)$-in-$n$ complexity arises because moments $M_j$ are precomputed scalars. Once $p_1, \ldots, p_k$ are available (via trace estimation or direct computation), the bound optimization does not reference the matrix again.




\end{document}